\documentclass{article}

\usepackage{ssArxiv_modified}

\usepackage[authoryear]{natbib}

\usepackage[utf8]{inputenc} %
\usepackage[T1]{fontenc}    %
\usepackage{url}            %
\usepackage{enumitem}
\usepackage{booktabs}       %
\usepackage{amsfonts}       %
\usepackage{nicefrac}       %
\usepackage{microtype}      %
\usepackage{xcolor}         %
\usepackage{comment}
\usepackage{titletoc} %

\usepackage{amsmath} 
\usepackage{amssymb}
\usepackage{amsthm}
\usepackage{graphicx}   
\usepackage{adjustbox}
\usepackage{wrapfig}
\usepackage{dsfont}
\usepackage{algorithm}
\usepackage[noend]{algpseudocode}
\usepackage{footnote}

\usepackage{subcaption}

\usepackage{makecell}
\usepackage{multirow}
\usepackage{array}

\usepackage{mathrsfs} 
\usepackage{thm-restate}

\usepackage{xcolor}
\usepackage{tocbibind}
\usepackage{tocloft}
\usepackage{sidecap}
\usepackage{tcolorbox}
\tcbuselibrary{breakable}
\usepackage{dsfont}

\usepackage[hypertexnames=false]{hyperref}       %
\usepackage{cleveref}

\theoremstyle{plain}
\newtheorem{definition}{Definition}[section]
\newtheorem{theorem}{Theorem}[section]  %
\newtheorem{lemma}[theorem]{Lemma}      %
\newtheorem{assumption}[theorem]{Assumption}
\newtheorem{corollary}[theorem]{Corollary}  %
\newtheorem{proposition}[theorem]{Proposition} 

\newtheorem{remark}{Remark}[section]

\DeclareMathOperator*{\argmax}{arg\,max}

\definecolor{darkgreen}{rgb}{0.0, 0.5, 0.0} %

\newcommand{\wcal}{\mathcal{W}}

\newcommand{\zcal}{\mathcal{Z}}
\newcommand{\Rd}{\mathbb{R}^d}

\newcommand{\datadist}{\mu_z^{\otimes n}}

\newcommand{\risk}{\mathcal{R}}
\newcommand{\er}{\widehat{\mathcal{R}}_{S}}

\renewcommand{\paragraph}[1]{{\vspace{0.2mm}\noindent \bf #1}.}

\newcommand{\upperbox}{\overline{\dim}_{\mathrm{B}}}

\makeatletter
\newtheoremstyle{mytheorem}%
  {3pt}%
  {3pt}%
  {}%
  {}%
  {\bfseries}%
  {.}%
  {.5em}%
  {\thmname{#1}\thmnumber{\@ifnotempty{#1}{ }#2}%
   \thmnote{ {\the\thm@notefont(#3)}}}%
\makeatother

\theoremstyle{mytheorem}

\definecolor{extitleclr}{RGB}{206, 106, 53}
\definecolor{exclr}{RGB}{253, 206, 163}

\definecolor{premiseColor}{RGB}{251, 241, 235}
\definecolor{hypColor}{RGB}{244, 251, 253}

\tcolorboxenvironment{problem}{width=\linewidth, sharp corners=all, boxsep=2pt,left=2pt,right=2pt,top=2pt,bottom=2pt, colback=blue!5!white,colframe=black!95!white,breakable}

\tcbuselibrary{theorems}
\newtcbtheorem[number within=section]{example}{Example}{
    theorem style=plain,
    breakable,
    arc=2mm,
    boxrule=.2mm, boxsep=2pt,
    left=2pt,
    right=2pt,
    top=2pt,
    bottom=2pt,
    fonttitle=\normalfont\bfseries,
    colframe=black!15!,
    colback=black!8!,
    coltitle=black,
    before lower={\textcolor{blue!50!black}{\textsf{}}\ },
}{ex}

\newtcbtheorem[use counter=theorem, number within=section]{mytheorem}{Theorem}{
    theorem style=plain,
    breakable,
    arc=2mm,
    boxrule=.2mm,
    boxsep=2pt,
    left=2pt,
    right=2pt,
    top=2pt,
    bottom=2pt,
    fonttitle=\normalfont\bfseries,
    colframe=black!15!,
    colback=black!8!,
    coltitle=black,
    before lower={\textcolor{blue!50!black}{\textsf{}}\ },
}{thm}

\newcommand{\dimS}{\dim_{\mathrm{S}}\mathcal{A}_S}

\newtcbtheorem[use counter=definition, number within=section]{mydef}{Definition}{
    theorem style=plain,
    breakable,
    arc=2mm,
    boxrule=.2mm, boxsep=2pt,
    left=2pt,
    right=2pt,
    top=2pt,
    bottom=2pt,
     colframe=black!15!,
    colback=black!8!,
    coltitle=black,
    fonttitle=\normalfont\bfseries,
    before lower={\textcolor{blue!50!black}{\textsf{}}\ },
}{def}

\title{Generalization at the Edge of Stability}

\author{\name Mario Tuci$^*$ \email{mario.tuci@inria.fr} \\
      \addr{INRIA, CNRS, Département d'Informatique de l'Ecole Normale Supérieure / PSL, France}  \\
      \addr{Department of Computing, Imperial College London, United Kingdom}\\[0.5em]
\name Caner Korkmaz\thanks{$^{,\dagger}$ Authors contributed equally.} \email{c.korkmaz23@imperial.ac.uk} \\
      \addr{Department of Computing, Imperial College London, United Kingdom}\\[0.5em]
\name Umut \c{S}im\c{s}ekli$^\dagger$ \email{umut.simsekli@inria.fr} \\
      \addr{INRIA, CNRS, Département d'Informatique de l'Ecole Normale Supérieure / PSL, France} \\
      [0.5em]
\name Tolga Birdal$^\dagger$ \email{tbirdal@imperial.ac.uk} \\
      \addr{Department of Computing, Imperial College London, United Kingdom} \\
 }

\begin{document}

\maketitle

\begin{abstract}
Training modern neural networks often relies on large learning rates, operating at the \emph{edge of stability}, where the optimization dynamics exhibit oscillatory and \emph{chaotic behavior}. Empirically, this regime often yields improved generalization performance, yet the underlying mechanism remains poorly understood. In this work, we represent stochastic optimizers as \emph{random dynamical systems}, which often converge to a \emph{fractal attractor set} (rather than a point) with a smaller intrinsic dimension. Building on this connection and inspired by Lyapunov dimension theory, we introduce a novel notion of dimension, coined the `sharpness dimension', and prove a generalization bound based on this dimension. Our results show that generalization in the chaotic regime depends on the \emph{complete} Hessian spectrum and the structure of its \emph{partial determinants}, highlighting a complexity that cannot be captured by the trace or spectral norm considered in prior work. %
Experiments across various MLPs and transformers validate our theory while also providing new insights into the recently observed phenomenon of grokking.
\vspace{-4mm}

\end{abstract}

\vspace{-1.5mm}\section{Introduction}\vspace{-1.5mm}
Understanding why large, overparameterized neural networks trained by gradient-based methods generalize remains one of the central open problems in modern machine learning. 
\begin{figure}[t]
\centering\includegraphics[width=\columnwidth]{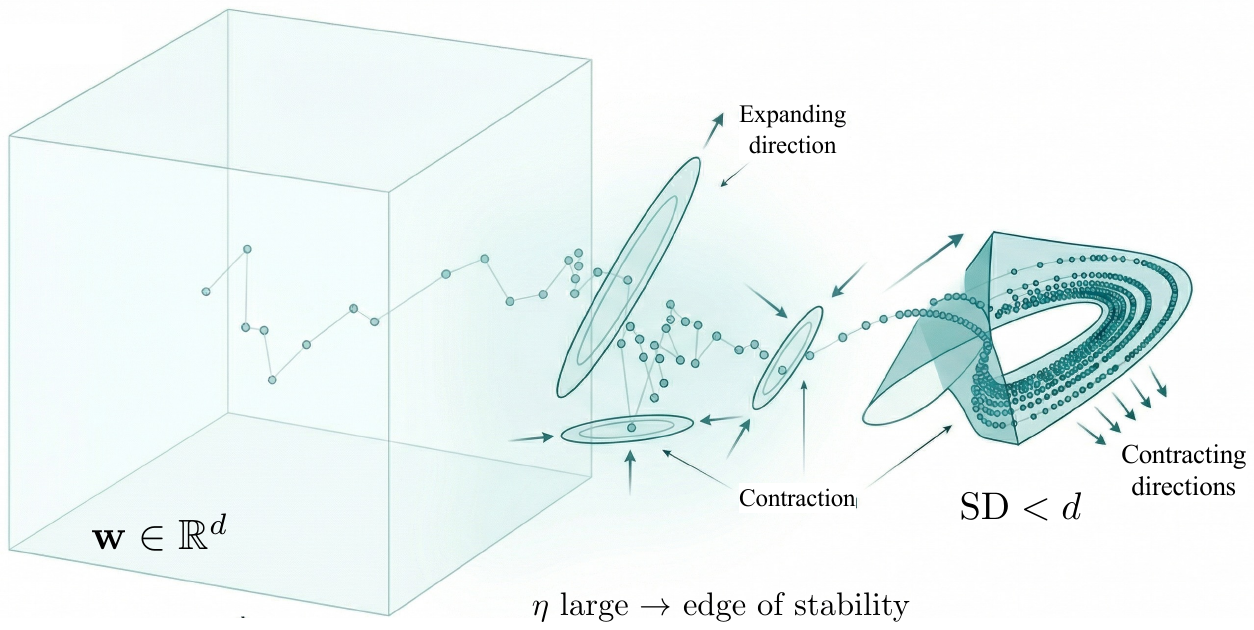}
\caption{\textbf{Generalization at the Edge of Stability (EoS).} Modeling stochastic optimization as a random dynamical system (RDS), we show that at EoS the leading sharpness satisfies $\lambda_1>0$, implying expansion along at least one direction.  The fundamental balance between expansion and contraction implies that the effective dimensionality of the dynamics, measured by our \textbf{Sharpness Dimension} (SD), is strictly smaller than the ambient parameter space: $\mathrm{SD}<d$. We prove that the \textbf{worst-case generalization error is governed by $\mathrm{SD}$ rather than the parameter count}. Our results identify EoS as precisely the regime where generalization is controlled by a provably lower-dimensional attractor, providing a principled explanation for why overparameterized models can generalize beyond classical complexity measures.} 
\label{fig:GATES}
\vspace{-13mm}
\end{figure}
Recent empirical evidence has revealed a phenomenon that challenges classical convex optimization theory. \cite{cohen2021gradient} observed that when training neural networks with gradient descent (GD) with a fixed learning rate~$\eta$, the largest eigenvalue of the loss Hessian often oscillates around, and frequently exceeds $2/\eta$, even as the training loss continues to decrease. This behavior, termed the \emph{edge of stability} (EoS), has generated considerable interest (see Sec.~\ref{sec:related_work}), since the threshold $2/\eta$ implies instability and divergence for quadratic objectives \cite{ghosh2025variational}.
\citet{ly2025optimization} demonstrated that exceeding the threshold of $\frac{2}{\eta}$ is sufficient to induce chaotic training dynamics. Additionally, in the chaotic regime the optimizer will not settle at a single point, rather it explores a bounded, typically fractal like set \cite{singh2023universal}.
This raises a fundamental question:

\emph{How can generalization be explained in the regime that is locally unstable and potentially chaotic?}

A natural response to this puzzle is to examine the local geometry of the loss landscape. In particular, the Hessian, which encodes local curvature, has long been viewed as a key lens for understanding generalization \citep{keskar2016large, jiang2019fantastic},  motivating a large body of work based on \emph{pointwise} notions of \emph{sharpness} and \emph{flatness} (see Sec.~\ref{sec:related_work_hessian}). Despite its appeal and empirical successes, this viewpoint has been challenged. It is now well understood that sharpness-based criteria (e.g., Hessian trace) are neither necessary nor sufficient for good generalization: there exist flat minima that generalize poorly and sharp that do well \citep{dinh2017sharp, kaur2023maximum, wen2023sharpness}.

Consequently, characterizing generalization at the edge of stability through pointwise analysis of individual solutions may be fundamentally inadequate. For practical learning rates, training dynamics are expected to exhibit chaotic behavior \citep{singh2023universal}, wherein training trajectories display sensitive dependence on initialization. In this regime, generalization performance should be attributed not to the properties of any single solution, but rather to the geometric and characteristics of the entire solution set explored by the optimizer in the long term. %

\paragraph{Contributions} We introduce a framework for studying generalization at the edge of chaos where modern deep networks operate~\citep{cai2026does}. In particular, we contribute the following:
\begin{itemize}[noitemsep,topsep=0pt,leftmargin=*]
    \item \textbf{Attractor-Centric Framework}: Modeling stochastic optimization as a random dynamical system (RDS), we shift the study of generalization at EoS from isolated parameter vectors to the \textbf{geometric properties of the (random) attractor}.
    \item \textbf{RDS Sharpness \& Sharpness Dimension ($\mathrm{SD}$)}: We propose \textbf{two new complexity measures}, RDS Sharpness \& $\mathrm{SD}$, derived not from the trajectory but from expansion and contraction rates that characterizes the attractor's geometry.
    \item \textbf{Generalization Bound}: We provide \textbf{a new bound on the worst case generalization error}, rigorously linking it to the fractal dimension of the Random Attractor measured as $\mathrm{SD}$.
    \item \textbf{Empirical Validation}: We explain how to \textbf{compute SD efficiently} and deploy our findings in quantifying generalization across multilayer perceptrons and \textbf{recent transformers} (GPT-2~\citep{radford2019language}) as well as to study the recently introduced paradigm of \emph{grokking}, delayed and sudden generalization~\citep{power2022grokking,prieto2025grokking}. %
\end{itemize}

As illustrated in Fig.~\ref{fig:GATES}, our results identify EoS as precisely the regime where generalization is controlled by a provably lower-dimensional attractor, whose effective dimensionality is controlled by SD and is strictly smaller than the ambient parameter space. We prove that in the EoS, the {worst-case generalization error} is governed exactly by $\mathrm{SD}$. Our findings present a principled explanation for why overparameterized models can generalize beyond classical complexity measures. Our code will be publicly available under: \href{https://circle-group.github.io/research/GATES}{https://circle-group.github.io/research/GATES}. 

\vspace{-2mm}
\section{Related Work}
\label{sec:related_work}
\vspace{-1mm}
\paragraph{Generalization bounds}
Recent work has established strong empirical and theoretical connections between generalization and geometric and topological complexity measures derived from the \emph{optimization trajectory} \citep{simsekli2020hausdorff, birdal2021intrinsic, dupuis2023generalization, andreeva2024topological, tuci2025mutual}. In particular, it has been shown that  weight iterates sampled \emph{after} initial convergence encode critical information about generalization. Topological summaries such as the `$\alpha$-weighted lifetime sum' exhibit consistent correlations with the generalization gap across training runs \citep{andreeva2024topological, tuci2025mutual}. These findings suggest that the stochastic fluctuations observed during late-stage training are not merely noise, but reflect the optimizer exploring a structured geometric object. Differently, \citet{camuto2021fractal} showed that under contraction, stochastic GD (SGD) admits an invariant measure supported on a fractal set and linked its geometric complexity to generalization. However, this theory relies on contraction and therefore does not apply at the EoS.

\paragraph{Hessian and generalization} 
\label{sec:related_work_hessian}
Since \citet{hochreiter1994simplifying}, it has been conjectured that flatter minima generalize better, motivating extensive work on flatness and sharpness \citep{keskar2016large,dinh2017sharp,neyshabur2017exploring,sagun2017empirical,yao2018hessian,chaudhari2019entropy,simsekli2019tail,nguyen2019first,mulayoff2020unique,tsuzuku2020normalized,ahn2023escape}. Despite this effort, there is no universally accepted definition of flatness, and most practical surrogates rely on second-order cues such as the trace of the Hessian \citep[e.g.,][]{jastrzebski2020break,wen2023sharpness}.

In parallel, optimization methods explicitly designed to favor flat minima have shown strong empirical gains in generalization \citep{izmailov2018averaging,wu2020adversarial,zheng2021regularizing,kaddour2022flat,foret2020sharpness}. Theoretically, minimizing the Hessian trace was shown to select the true solution in low-rank matrix recovery \citep{ding2024flat}, extended to deep networks \citep{gatmiry2023inductive}, and supported empirically for large language models \citep{liu2023same}, and linked to output stability \citep{ma2021linear}. More recently, generalization bounds have been obtained in terms of sums of gradient norms \citep{haddouche2024pac,clerico2022generalisation}, which reduce to the Hessian trace under suitable conditions. Despite these advances, recent results show that trace-based flatness alone does not guarantee good generalization \citep{wen2023sharpness}.

\paragraph{Edge of Stability (EoS) and chaos}
The dynamics of deep network training have been widely studied, with early work documenting rapid changes in the local loss landscape during the initial phase of optimization \citep{keskar2016large,jastrzebski2020break,xing2018walk}, and later characterizing it for gradient descent by \citet{cohen2021gradient}. A growing literature has since been interested in EoS phenomena \cite{arora2022understanding,ahn2023learning,ahn2022understanding,wang2022analyzing,chen2023beyond,damian2022self,zhu2022understanding}. In particular, \citet{ahn2022understanding,ma2021linear} showed that the existence of a forward-invariant set prevents divergence, and that for $\tanh$ networks such a set exists, explaining stability even at large learning rates in the EoS regime. Related phenomena have also been observed for SGD via the notion of mini-batch sharpness \citep{andreyev2024edge}. From a complementary perspective, stability analyses reveal connections to chaos \citep{sasdelli2021chaos,ly2025optimization}: in particular, \citet{ly2025optimization} show that sustained criticality of the top Hessian eigenvalue (EoS) is sufficient to induce chaos, and \citet{chemnitz2025characterizing} study a cubic model to characterize the boundary between EoS and divergence.

Our work connects Edge of Stability, chaotic dynamics, Hessian-based generalization bounds via rigorous generalization guarantees through the lens of random dynamical systems. 

\vspace{-2mm}
\section{Preliminaries}
\vspace{-2mm}
\paragraph{Learning setup}
We consider supervised learning with parameter vector $w \in \mathbb{R}^d$ and population risk $\mathcal{R}(w) := \mathbb{E}_{Z \sim \mu}[\ell(w,Z)]$,
where $\mu$ is an unknown data distribution and $\ell: \mathbb{R}^d \times\mathcal{Z}$ is the composed loss function. In practice, training proceeds by minimizing the empirical risk $\er(w) := \frac{1}{n} \sum_{i=1}^n \ell(w,Z_i)$ over a dataset $S=\{Z_i\}_{i=1}^n \sim \datadist$ using stochastic gradient methods. Our analysis focuses on the asymptotic behavior of the optimization dynamics rather than on a single training iterate. Therefore we are interested in the worst-case generalization gap 
\begin{align}
    \label{eq:worst_case_gap}
    \mathcal{G}(\mathcal{A}_S(\omega)) :=\sup_{w \in \mathcal{A}_{S}(\omega)} \risk(w) - \er(w).
\end{align}
Here, $\mathcal{A}_S(\omega)$ denotes a dataset-dependent random set, where $S \in \mathcal{Z}^n$ is the dataset and $\omega$ is an independent random variable capturing algorithmic noise (e.g., minibatch sampling). In particular, $\mathcal{A}(\omega)$ denotes the \emph{random attractor} associated with the optimizer, to be precised in Dfn~\ref{def:random_attractor}.

\paragraph{Random dynamical systems}
To have a rigorous consideration of random attractors as sets, we represent  stochastic optimization algorithms as discrete-time \emph{random dynamical systems} (RDS) according to \citet{arnold2006random}. While the definition is abstract, it will follow with a concrete example for SGD, represented in the RDS form. Note that a similar formalism was introduced to study the stability of SGD in \citet{chemnitz2025characterizing}.
\begin{definition}[Random Dynamical System]
\label{def:rds}
A \emph{discrete-time random dynamical system (RDS)} on $\mathbb{R}^d$ is a tuple $(\Omega, \mathcal{F}, \mathbb{P}, \theta, \phi)$ consisting of the following components:
\begin{enumerate}[leftmargin=*,topsep=0.01em,itemsep=0.01em]
    \item A \textbf{metric dynamical system} $(\Omega, \mathcal{F}, \mathbb{P}, \theta)$, where $(\Omega, \mathcal{F}, \mathbb{P})$ is a probability space and $\theta: \Omega \to \Omega$ is an invertible, measure-preserving, and ergodic transformation\footnote{A transformation $\theta$ is \textit{measure-preserving} if $\mathbb{P}(\theta^{-1}A) = \mathbb{P}(A)$ for all $A \in \mathcal{F}$, and \textit{ergodic} if any invariant set $A = \theta^{-1}A$ satisfies $\mathbb{P}(A) \in \{0, 1\}$.} such that the map $(t, \omega) \mapsto \theta^t (\omega)$ defines a $\mathbb{Z}$-action on $\Omega$. That is, the family of maps $\{\theta^t\}_{t \in \mathbb{Z}}$ satisfies
    \[
    \theta^0 = \mathrm{id}_\Omega\footnote{For an arbitrary set $A$, $\mathrm{id}_{A} \colon A \to A$ denotes the identity map defined by $\mathrm{id}_{A}(x) = x$ for all $x \in A$.}, \qquad \theta^{t+s} = \theta^t \circ \theta^s \quad \text{for all } t,s \in \mathbb{Z},
    \]
where we use the notation $\theta\omega := \theta(\omega)$. %
    \item A \textbf{measurable cocycle} $\phi: \mathbb{N}_0 \times \Omega \times \mathbb{R}^d \to \mathbb{R}^d$ over $\theta$, which is $(\mathcal{B}(\mathbb{N}_0) \otimes \mathcal{F} \otimes \mathcal{B}(\mathbb{R}^d), \mathcal{B}(\mathbb{R}^d))$\footnote{Let $\mathcal{B}(\mathbb{R}^d)$ or $\mathcal{B}(\mathbb{N}_0)$ denote the corresponding Borel $\sigma$-algebras.}
-measurable and satisfies the \emph{cocycle property}:
    \begin{subequations}
    \begin{align}
        &\phi(0, \omega, \cdot) = \text{Id}_{\mathbb{R}^d}, \\
        &\phi(t+s, \omega, w) = \phi(t, \theta^s \omega, \phi(s, \omega, w)),
        \label{eqn:mds_b}
    \end{align}
    \end{subequations}
    for all $t, s \in \mathbb{N}_0$, $\omega \in \Omega$, and $w \in \mathbb{R}^d$.
\end{enumerate}
An RDS is said to be $k$ times \emph{continuously differentiable} (or $C^k$) if the mapping $w \mapsto \phi(t, \omega, w)$ is $C^k$ for all $t \in \mathbb{N}_0$ and $\omega \in \Omega$.
\end{definition}
Let us provide a more intuitive explanation for the above definition by considering SGD as a special case. 
Given a dataset $S = \{Z_1, \dots, Z_n\} \in \mathcal{Z}^n$, SGD is based on the following recursion:
\begin{align}
\label{eqn:sgd}
    w_{t+1} = w_t - \eta \left( \frac{1}{b} \sum_{i\in \Omega_{t}} \nabla \ell(w_t,Z_i) \right),
\end{align}
where $\Omega_t \subset \{1,\dots,n\}$ is the minibatch with $|\Omega_t| = b $ being the batch-size. Since there are only finitely many possible choices of minibatches, i.e., $\binom{n}{b}$ many, we can enumerate all the minibatches such as $\{\Omega^{(1)}, \dots, \Omega^{(\binom{n}{b})}\}$ s.t. $\Omega_t = \Omega^{(j)}$ for some $j$. Hence \eqref{eqn:sgd} can be alternatively written as:
\begin{align}
\label{eqn:sgd_alt}
    w_{t+1} = w_t - \eta \left( \frac{1}{b} \sum_{i\in \Omega^{(j_{t})}} \nabla \ell(w_t,Z_i) \right),
\end{align}
where $j_t$ is randomly drawn from the set $\{1,\dots, \binom{n}{b}\}$. 
\vspace{3pt}

Going back to Definition~\ref{def:rds}, the random event $\omega$ corresponds to the \emph{algorithmic randomness}. In the case of SGD, the only source of algorithmic randomness is the choice of minibatches at every iteration. Hence, for SGD, $\omega = \{\dots, j_{-t},\dots j_{-1}, j_0, j_1, \dots, j_t, \dots \}$ will encapsulate the infinite sequence of minibatch indices that are drawn through optimization\footnote{For technical reasons, we need to consider a doubly infinite sequence that also takes into account for negative times $-t$.}. In other words, a single event $\omega$ will contain all the information about the randomness coming from the algorithm.

Given a sequence of minibatch indices $\omega$, we can define the \emph{RDS map} $\phi$, which essentially corresponds to the algorithm update rule. For $t=1$, we have the following form:
\begin{align}
\label{eqn:phi1}
    \phi(1, \omega, w) :=  w - \eta \left( \frac{1}{b} \sum_{i\in \Omega^{(j_{0})}} \nabla \ell(w,Z_i) \right),
\end{align}
which coincides with \eqref{eqn:sgd_alt}, but only for $t=1$. 
\vspace{3pt}

At iteration $t$, the stochastic gradient is computed over the minibatch with index $j_{t-1}$. Hence, to define the RDS map for a general $t$, i.e. $\phi(t,\omega, w)$, we additionally need to extract the minibatch index $j_{t-1}$ from the infinite sequence $\omega$. This operation will be done by the \emph{metric dynamical system} $\theta$. For this example\footnote{For most of the stochastic optimizers one can use the same metric dynamical system as long as the algorithmic randomness is only coming from random minibatches.}, we set $\theta: \mathbb{Z} \times \Omega \to \Omega$ to the so-called the `left-shift operator', $(\theta \omega)_k := j_{k+1}$, which takes an infinite sequence $\omega$ and shifts its elements by one coordinate, and returns the resulting infinite sequence. Iterating this operator $t$ times would shift the coordinates $t$ times: i.e., $(\theta^t \omega)_k := j_{k+t}$. 
\vspace{3pt}

Given all the ingredients, by \eqref{eqn:mds_b} we can define our RDS for a general $t$:
\begin{align}
\label{eqn:phit}
    \phi(t, \omega, w) = \phi(1, \theta^{t-1} \omega, \phi(t-1, \omega, w)),
\end{align}
where $\phi(1,\cdot,\cdot)$ is defined in \eqref{eqn:phi1}. It is now easy to verify that \eqref{eqn:phit} exactly recovers the recursion given in \eqref{eqn:sgd_alt}.
Finally, we observe that the system satisfies the cocycle property:
\begin{align*}
    \phi(t, \omega, w) = \phi(1, \theta^{t-1}\omega, \cdot) \circ \dots \circ \phi(1, \omega, w), \forall t \in \mathbb{N}_0.
\end{align*}
This property serves as a fundamental consistency requirement. Intuitively, it ensures that evolving a state for $t+s$ steps is equivalent to evolving it for $s$ steps and then resuming for $t$ more steps using the remaining noise history $\theta^s \omega$: system's evolution is chronologically coherent.

While denoting an optimization algorithm with such abstract notions might seem rather unorthodox, thanks to this formal connection, we will be able to access the rich toolbox of random dynamical systems theory.

\paragraph{Random Attractors}
Dynamics driven by stochastic optimization with persistent noise (e.g. constant learning rate SGD) generally do not converge to a single location. We are interested in the set in which an RDS settles. 
\begin{definition}[Pullback Random Attractor]
\label{def:random_attractor}
Let $(\Omega, \mathcal{F}, \mathbb{P}, \theta, \phi)$ be a discrete-time RDS according to Dfn.~\ref{def:rds}. A mapping $\omega \mapsto \mathcal{A}(\omega)$ from $\Omega$ into the space of non-empty compact subsets of $\mathbb{R}^d$ (denoted $\operatorname{comp}(\mathbb{R}^d)$ ) is called a \emph{pullback random attractor} if it is $(\mathcal{F}, \mathcal{B}(\operatorname{comp}(\mathbb{R}^d)))$-measurable and satisfies the following two properties for $\mathbb{P}$-almost all $\omega \in \Omega$: 
\begin{enumerate}[leftmargin=*,topsep=0em,itemsep=0.01em]
    \item \textbf{Invariance:} $\phi(t, \omega, \mathcal{A}(\omega)) = \mathcal{A}(\theta^t \omega)$ for all $t \in \mathbb{N}_0$.
    \item \textbf{Pullback Attraction:} For every deterministic bounded set $B \subset \mathbb{R}^d$:
    \begin{equation}
        \lim_{t \to \infty} \operatorname{dist}\bigl( \phi(t, \theta^{-t}\omega, B), \mathcal{A}(\omega) \bigr) = 0,
    \end{equation}
    where $\operatorname{dist}(X, Y) := \sup_{x \in X} \inf_{y \in Y} \|x - y\|$ is the Hausdorff semi-distance. Further the image of a function is defined as follows. If $f : X \to Y$ is a map and $A \subset X$, then the image of $A$ under $f$ is defined as $f(A) := \{\, f(x) \in Y \;:\; x \in A \,\}$.
\end{enumerate}
\end{definition}
In stochastic systems, attraction must be understood in a \emph{pullback} rather than forward sense. Because fresh noise is continually injected, trajectories that come close can later separate, so pathwise forward convergence generally fails. The pullback viewpoint instead fixes a noise realization and examines 
the state at time $t=0$ obtained by initializing the system in the remote past. The resulting pullback 
attractor is a noise-conditioned ``snapshot'' of the asymptotic state, representing the set to which all past histories converge at the present time. This is the central object of our analysis.

\vspace{-2mm}
\section{Theoretical Results}
\vspace{-1mm}
\paragraph{Sharpness}
Recent work introduced the notion of sharpness in terms of the Hessian of the empirical risk \cite{cohen2021gradient}. We introduce an alternative notion of sharpness that extends to general random dynamical systems and provides an intuitive understanding of how the two notions are related. 
\begin{mydef}{RDS Sharpness}{rds:sharpness}~
Let $(\Omega, \mathcal{F}, \mathbb{P}, \theta, \phi)$ be a $C^1$ random dynamical system on $\mathbb{R}^d$. For a fixed state $w \in \mathbb{R}^d$, let $D\phi(1,\omega,w) \in \mathbb{R}^{d \times d}$ denote the Jacobian (Fr\'echet derivative) 
of the map $x \mapsto \phi(1,\omega,x)$ evaluated at $x = w$. Let $\mathcal{A}(\omega)$ be random compact set. We define the \textbf{RDS Sharpness of Order $k$} as the expected log-variation of the $k$-th singular value:
\begin{equation}
\label{eqn:rds_sharpness}
    \lambda_k := \mathbb{E} \left[ \sup_{w \in \mathcal{A}(\omega)}\ln \sigma_k(\omega, w) \right] \quad \forall k \in \{1, \dots, d\},
\end{equation}
where $\sigma_1(\omega, w) \geq \dots \geq \sigma_d(\omega, w)$ are the singular values of $D\phi(1, \omega, w)$, assuming integrability holds.
\end{mydef}

The next example, shows how Dfn.~\ref{def:rds:sharpness} relates to the classical notion of sharpness introduced in \citet{cohen2021gradient}, which is defined in terms of the largest eigenvalue 
of the Hessian.

\begin{example}{GD Sharpness}{gd_sharpness}~
\\
We can interpret GD as a deterministic instance of a random dynamical system where the algorithmic randomness $\omega$ is constant. Let $\er(w)$ denote the empirical risk for a fixed dataset $S \in \zcal^n$ and $w \in \mathbb{R}^d$; the discrete-time update map $\phi$ is defined as:
\begin{equation}
\label{eq:gd_cocycle}
    \phi(1, w) = w - \eta \nabla \er(w).
\end{equation}
The local stability of this system is governed by the Jacobian $D\phi(1, w) = I - \eta \nabla^2 \er(w)$. In the optimization literature \citep{cohen2021gradient}, sharpness is traditionally defined as the largest eigenvalue  the Hessian $\nabla^2 \er(w)$. \\
\\
If we interpret our sharpness definition (see Dfn.~\ref{def:rds:sharpness}) locally over point (e.g. $\mathcal{A}(\omega) = \{w\}$), we obtain, that the sharpness of order 1 corresponds to $\lambda_1 = \ln (\| I - \eta \nabla^2 \er(w)\|) $, which exhibits related information to the largest eigenvalue of the hessian. Indeed in the EoS regime, we have $\lambda_1 \geq 0$.
\end{example} 
\paragraph{Edge of Stability and Chaos}
To build intuition for the relationship between our notion of RDS-sharpness of order $1$ (Dfn.~\ref{def:rds:sharpness}), the classical sharpness of \citet{cohen2021gradient}, and the onset of chaos, we consider the linearized framework in \citet{ly2025optimization}, and examine GD in the EoS regime through its \emph{sensitivity to initial conditions}. Let $\phi(K, w)$ denote the $K$-step GD cocycle (cf. \eqref{eq:gd_cocycle}) and consider a reference trajectory $x_t = \phi(t, w_0) = \phi(1, w_{t-1})$. To track the propagation of an infinitesimal perturbation $\delta w_t$, we linearize the dynamics:
\begin{equation}
    \delta w_{t+1} := \phi(1,w_t+ \delta w_t) - \phi(1,w_t)   \approx D\phi(1, w_t) \delta w_t,\nonumber
\end{equation}
where $D\phi(1, w_t) = I - \eta \nabla^2 \er(w_t)$ is the \emph{one-step Jacobian} (cf. Ex.~\ref{ex:gd_sharpness}). The local exponential growth rate in the direction $v_t = \delta w_t / \|\delta w_t\|$ is given by:
\begin{equation}
    \log \frac{\| \delta w_{t+1} \|}{\| \delta w_t \|} = \log \| (I - \eta \nabla^2 \er(w_t)) v_t \|.
\end{equation}

Let $\{(\alpha_i(t), u_i(t))\}_{i=1}^d$ denote the eigenpairs of the Hessian $\nabla^2 \er(w_t)$. If the perturbation direction $v_t$ is aligned with an eigenvector $u_i(t)$, the one-step growth factor is $|1 - \eta \alpha_i(t)|$. In the EoS regime, the maximum eigenvalue $\alpha_{\max}(t)$ typically exceeds the stability threshold $2/\eta$. When $\alpha_{\max}(t) > 2/\eta$, the growth factor satisfies $|1 - \eta \alpha_{\max}(t)| > 1$, or equivalently, $\ln |1 - \eta \alpha_{\max}(t)| > 0$. 
In the context of Ex.~\ref{ex:gd_sharpness} this would make the singular value $\sigma_1$ in \eqref{eqn:rds_sharpness} larger than $0$, hence we obtain $\lambda_1 > 0$ for this system.

If this condition holds on average along the trajectory, the top Lyapunov exponent $ \Lambda_1 := \lim_{T \to \infty} \frac{1}{T} \sum_{t=0}^{T-1} \ln \| D\phi(1, x_t) v_t \|$  see \citet[Lemma~3.2.2, p.~113]{arnold2006random} becomes positive . In the language of dynamical systems, $\Lambda_1 > 0$ implies that trajectories diverge from each other exponentially (even though the system might not be divergent), providing a signature of \emph{deterministic chaos} within the EoS oscillations. Hence, we argue that EoS emerges when the system is chaotic with $\lambda_1 > 0$ and our goal is to develop theoretical tools specifically designed for this challenging setup. 
\begin{remark}[Sharpness for an RDS]
\citet{andreyev2024edge} observed that the \emph{expected} mini-batch Hessian typically concentrates near the stability threshold $2/\eta$. 
\end{remark}

\paragraph{Existence of Random Attractor}
Since our work focuses on the random pullback attractor to which the RDS settles, we briefly discuss the existence of the random attractor.
\begin{proposition}[Existence of the Random Pullback Attractor \citep{crauel1997random}]
\label{prop:existence_attractor}
Let $(\Omega, \mathcal{F}, \mathbb{P}, \theta, \phi)$ be a $C^0$ random dynamical system on $\mathbb{R}^d$. Suppose there exists a bounded random set $K(\omega)$ that is \textbf{pullback absorbing} for $\mathbb{P}$-almost every $\omega \in \Omega$; that is, for every deterministic bounded set $D \subset \mathbb{R}^d$, there exists a time $T(D, \omega) \geq 0$ such that
\begin{equation}
    \phi(t, \theta^{-t}\omega, D) \subset K(\omega) \quad \forall t \geq T(D, \omega).
\end{equation}
Then, there exists a unique, compact and measurable \footnote{Measurable with respect to the \textbf{past $\sigma$-algebra} $\mathcal{F}_{-\infty}^0 := \sigma \{ \theta^t \omega : t \leq 0 \}$.} random pullback attractor $\mathcal{A}(\omega)$ satisfying Dfn.~\ref{def:random_attractor}. 
\end{proposition}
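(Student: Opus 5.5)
The plan is to build the attractor as the pullback omega-limit set of the absorbing set and then verify compactness, invariance, attraction, measurability and uniqueness one by one. For $\mathbb{P}$-a.e.\ $\omega$ we set
\[
\mathcal{A}(\omega) := \Omega_K(\omega) := \bigcap_{s\ge 0}\overline{\bigcup_{t\ge s}\phi(t,\theta^{-t}\omega,K(\theta^{-t}\omega))}.
\]
To avoid using $K$ at shifted fibres, where absorption is not assumed, we may first replace $K$ by its closure. We then use only deterministic bounded sets $D$, via $\mathcal{A}(\omega):=\overline{\bigcup_{D}\Omega_D(\omega)}$. Here $\Omega_D(\omega)=\bigcap_{s}\overline{\bigcup_{t\ge s}\phi(t,\theta^{-t}\omega,D)}$ and $D$ ranges over the balls $B(0,m)$, $m\in\mathbb{N}$.

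\textbf{Step 1: nonemptiness, compactness and attraction.} For $t\ge T(D,\omega)$ the set $\phi(t,\theta^{-t}\omega,D)$ lies in $K(\omega)$, which is bounded. Hence $\Omega_D(\omega)$ is a decreasing intersection of nonempty compact sets contained in $\overline{K(\omega)}$, so it is nonempty and compact.

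\begin{itemize}
\item Attraction follows by the standard contradiction argument. Suppose there are $t_n\to\infty$ and points $x_n\in\phi(t_n,\theta^{-t_n}\omega,D)$ with $\operatorname{dist}(x_n,\Omega_D(\omega))\ge\varepsilon$. The $x_n$ lie in the bounded set $K(\omega)$, so a subsequence converges, and its limit lies in $\Omega_D(\omega)$. This is a contradiction.
\item Since $\Omega_D\subset\Omega_{D'}$ for $D\subset D'$, the union over $m$ is increasing and contained in $\overline{K(\omega)}$. Its closure $\mathcal{A}(\omega)$ is therefore compact and attracts every bounded $D$.
\end{itemize}

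\textbf{Step 2: invariance.} We prove $\phi(t,\omega,\Omega_D(\omega))=\Omega_D(\theta^t\omega)$ using continuity of $\phi(t,\omega,\cdot)$ and the cocycle property \eqref{eqn:mds_b}. The cocycle property gives
\[
\phi(t,\omega,\phi(s,\theta^{-s}\omega,x))=\phi(t+s,\theta^{-(t+s)}\theta^t\omega,x).
\]
\begin{itemize}
\item For "$\subseteq$", take $y=\lim\phi(s_n,\theta^{-s_n}\omega,x_n)$. By continuity, $\phi(t,\omega,y)$ is the limit of points of the form defining $\Omega_D(\theta^t\omega)$.
\item For "$\supseteq$", take $z=\lim\phi(t+s_n,\theta^{-(t+s_n)}\theta^t\omega,x_n)=\lim \phi(t,\omega,y_n)$ with $y_n=\phi(s_n,\theta^{-s_n}\omega,x_n)\in K(\omega)$ eventually. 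Extract a convergent subsequence $y_n\to y\in\Omega_D(\omega)$; then $z=\phi(t,\omega,y)$.
\end{itemize}
The same argument passes to the closure of the union, using compactness of $\mathcal{A}(\omega)$ and continuity.

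\textbf{Step 3: uniqueness and measurability.} Uniqueness: let $\tilde{\mathcal{A}}$ be another attractor. $\tilde{\mathcal{A}}(\omega)$ is compact, and invariance gives $\tilde{\mathcal{A}}(\omega)=\phi(t,\theta^{-t}\omega,\tilde{\mathcal{A}}(\theta^{-t}\omega))$. Attraction of $\mathcal{A}$ applies to deterministic bounded sets, so we need $\tilde{\mathcal{A}}(\theta^{-t}\omega)$ to lie in a fixed ball for many $t$. This comes from Poincaré recurrence: $\mathbb{P}(\tilde{\mathcal{A}}\subset B(0,m))\to1$ and $\theta$ is ergodic and measure preserving, so $\theta^{-t}\omega$ returns infinitely often to $\{\tilde{\mathcal{A}}\subset B(0,m)\}$. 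Hence $\operatorname{dist}(\tilde{\mathcal{A}}(\omega),\mathcal{A}(\omega))=0$, and by symmetry the two sets are equal. Measurability: each map $\omega\mapsto\overline{\phi(t,\theta^{-t}\omega,D)}$ is measurable into closed sets, by measurability of $\phi$ and a countable dense subset of $D$. Countable unions, closures and intersections (over rational $s$, integer $t$ and $m$) preserve measurability. Only $\theta^{-t}\omega$ with $t\ge0$ enter, which gives measurability with respect to the past $\sigma$-algebra.

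\textbf{Main obstacle.} I expect the main difficulty to be the measurability and uniqueness details: the Effros measurability of closures of images and the recurrence argument. The topological parts are routine compactness arguments; for the measure-theoretic parts I would follow Crauel–Flandoli closely.
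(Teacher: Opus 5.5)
Your proof is correct and is essentially the standard Crauel--Flandoli construction $\mathcal{A}(\omega)=\overline{\bigcup_{m}\Omega_{B(0,m)}(\omega)}$, which the paper does not reprove but simply imports from \citet{crauel1997random}. Your checks of compactness, attraction and invariance correctly use absorption only at the fibre $\omega$ itself, so the preliminary $\Omega_K$ detour is unnecessary. In $\mathbb{R}^d$, the measurability you flag is straightforward: for compact $C$, $\{\Omega_D(\omega)\cap C\neq\emptyset\}$ is the countable intersection over $s$ of the measurable hitting events of the decreasing compact sets $\overline{\bigcup_{t\ge s}\phi(t,\theta^{-t}\omega,D)}$.

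The one place you deviate from the usual literature is uniqueness. Your recurrence argument works, via ergodicity of $\theta^{-1}$ or already via Poincar\'e recurrence, and is a pathwise alternative to the usual argument. That argument bounds $\mathbb{P}\bigl(\operatorname{dist}(\tilde{\mathcal{A}}(\omega),\mathcal{A}(\omega))>\delta\bigr)$ using only $\theta$-invariance of $\mathbb{P}$ and tightness of $\tilde{\mathcal{A}}$, so it does not need ergodicity.
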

Prop.~\ref{prop:existence_attractor} has a clear dynamical meaning.
The existence of a pullback absorbing set \(K(\omega)\) means that, for a fixed noise realization \(\omega\), all trajectories eventually enter a bounded region of the state space, provided the system is evolved from the distant past to the present.
The random pullback attractor \(\mathcal{A}(\omega)\) is the smallest invariant set inside this region.
It contains exactly the states that can be reached asymptotically under the fixed noise realization \(\omega\).

Indeed, this demonstrates that local instability, when coupled with global dissipativity, leads to the emergence of a compact pullback attractor. An intuitive explanation of this mechanism, for example in the case of neural networks using $\tanh$ as activation function, is given by \citet{ahn2022understanding}.

\paragraph{Complexity meassure of Random Attractors}
Given the existence of the random pullback attractor, a natural question arises: Can we quantify its complexity, particularly in the EoS regime, where we expect chaos.  %

\begin{mydef}{Sharpness Dimension}{local_sharpness_unified}~
Let $(\Omega, \mathcal{F}, \mathbb{P}, \theta, \phi)$ be a $C^1$ discrete random dynamical system on $\mathbb{R}^d$. Further let $\mathcal{A}(\omega) \subset \mathbb{R}^d$ be an almost surely compact random set. As in Dfn.~\ref{def:rds:sharpness}, let $\lambda_k$ be the RDS sharpness of order $k$, for $k=1, \dots, d$.

Set
\[
j^* := \max \left\{ i \in \{1, \dots, d\} : \sum_{k=1}^i \lambda_k \ge 0 \right\},
\]
with the convention $j^*=0$ if $\lambda_1 < 0$.

The \textbf{Sharpness Dimension of } $\mathcal{A}(\omega)$ is defined as
\[
\dim_{\mathrm{S}}\mathcal{A} :=
\begin{cases}
j^* + \frac{\sum_{i=1}^{j^*} \lambda_i}{|\lambda_{j^*+1}|}, & \text{if } 1 \le j^* < d, \\
d, & \text{if } j^* = d, \\
0, & \text{if } \lambda_1 < 0.
\end{cases}
\]
\end{mydef}
Similar notions have been previously considered \cite{kaplan2006chaotic, hunt1996maximum,feng2022dimension}.
The \textbf{Sharpness Dimension} SD measures the effective number of expanding directions of the dynamics on the attractor before global contraction dominates. It is defined from the ordered global sharpness indices $\lambda_1 \ge \lambda_2 \ge \dots \ge \lambda_d$, which quantify worst-case logarithmic stretching rates along principal directions on the attractor. Let $j^*$ be the largest index such that $\sum_{k=1}^{j^*} \lambda_k \ge 0$; then $j^*$ is the maximal dimension in which volumes do not contract. Indeed, we observe that, in the case of a proper set, SD is strictly smaller than the ambient dimension in the EoS regime where $\lambda_1$ is expected to be positive.

\vspace{3mm}
\paragraph{Generalization Bounds}
We are now ready to formulate our main theorem and establish the connection between our novel complexity measure and the worst-case generalization gap over the random pullback attractor. 
We start by stating our main assumptions, the first two of which are standard practice in the literature
\citep{simsekli2020hausdorff,andreeva2024topological,dupuis2024uniform, birdal2021intrinsic}:

\begin{assumption}[Boundedness of Loss]
\label{assum:bounded_loss}
We assume the loss function $\ell: \mathbb{R}^d \times \mathcal{Z} \to \mathbb{R}$ to be bounded. That is, there exists a constant $B > 0$ such that for all weights $w \in \mathbb{R}^d$ and $z \in \mathcal{Z} $ holds: $ 0 \leq \ell(w, \omega) \leq B$.
\end{assumption}

\begin{assumption}[Lipschitz Continuity of Loss]
\label{assum:lipschitz}
We assume that the loss function $\ell: \mathbb{R}^d \times \mathcal{Z} \to \mathbb{R}$ satisfies the following properties for all $z \in \mathcal{Z}$:
The function $w \mapsto \ell(w, z)$ is $L$-Lipschitz continuous for some $L > 0$. That is, for all $w_1, w_2 \in \mathbb{R}^d$: $|\ell(w_1, z) - \ell(w_2, z)| \leq L \|w_1 - w_2\|$.
\end{assumption}

\begin{assumption}[Regular Random Dynamics]
\label{assum:regular_dynamics}
For each dataset $S \in \mathcal{Z}^n$, let $(\Omega, \mathcal{F}, \mathbb{P}, \theta, \phi)$ be a $C^2$ discrete-time RDS according to Dfn.~\ref{def:rds}, with a unique compact random pullback attractor $\mathcal{A}_S(\omega)$. We assume:

\begin{enumerate}[leftmargin=*,topsep=0em,itemsep=0.01em]
     \item Non-Singularity: For $\mathbb{P}$-a.e. $\omega$ we assume  $\inf_{x \in \mathcal{A}(\omega)} \sigma_d(D \phi(1,\omega,x) > 0$

    \item \emph{Integrability}: $$\mathbb{E}\!\left[\sup_{x \in \mathcal{A}_S(\omega)} \ln \|D\phi_S(1,\omega,x)\|\right] < \infty\qquad \text{and} \qquad
    \mathbb{E}\!\left[\sup_{x \in \mathcal{A}_S(\omega)} \ln \|D^2\phi_S(1,\omega,x)\|\right] < \infty.$$
    \item \emph{Transition Index}: There exists an integer $j^* \in \{1, \dots, d-1\}$ such that: $$\sum_{i=1}^{j^*} \lambda_i \;\ge\; 0 
\;>\;
\sum_{i=1}^{j^*+1} \lambda_i,$$
    where $\lambda_i$ denotes the global sharpness of order $i$ (Dfn.~\ref{def:local_sharpness_unified}).
\item Bounded Distortion: For $A \in \mathbb{R}^{d \times d}$ and $j \in \{1, \dots, d\}$, define
\begin{equation}
    \|A\|_j := \sigma_1(A)\cdots\sigma_j(A),
\end{equation}
where $\sigma_1(A) \ge \cdots \ge \sigma_d(A)$ are the singular values of $A$. Equivalently, $\|A\|_j$ is the maximal expansion factor of $A$ on $j$-dimensional volumes.

We assume that the spatial variation of $\|D\phi(m,\omega,\cdot)\|_j$ over the attractor is subexponential in $m$: for each $j \in \{1, \dots, d\}$,
\begin{equation}
    \lim_{m \to \infty} \frac{1}{m}\, \mathbb{E}\!\left[\sup_{x \in \mathcal{A}(\omega)} \ln \|D\phi(m, \omega, x)\|_j \;-\; \inf_{x \in \mathcal{A}(\omega)} \ln \|D\phi(m, \omega, x)\|_j\right] = 0.
\end{equation}
\end{enumerate}
\end{assumption}
Intuitively, the $j$-volume growth rates along different orbits in $\mathcal{A}(\omega)$ may differ at sub-exponential order, but coincide to leading exponential order as $m \to \infty$. Conditions of this type, requiring the spatial variation of the cocycle to be subexponential, are commonly imposed in the smooth ergodic theory of random dynamical systems to obtain Lyapunov exponents that do not depend on the base point $x \in \mathcal{A}(\omega)$; see, e.g., Arnold~\cite{arnold2006random} for related formulations.
Under Assumption~\ref{assum:regular_dynamics}, we now present our main result.
\begin{mytheorem}{Generalization via Sharpness Dimension}{generalization_DLS}~

Let $S = \{z_1, \dots, z_n\} \sim \datadist$ be a dataset of size $n$. Let $(\Omega, \mathcal{F}, \mathbb{P}, \theta, \phi)$ be a discrete-time RDS according to Dfn~\ref{def:rds} such that Assump.~\ref{assum:regular_dynamics} holds.
Under Assumps.~\ref{assum:bounded_loss} and~\ref{assum:lipschitz}, there exists a constant $C>0$ s.t. with probability at least $1 - \zeta - \gamma$ over the joint draw $(S, \omega) \sim \datadist \otimes \mathbb{P}$, there exists $\delta_{n,\gamma} > 0$ such that for all $0 < \delta < \delta_{n,\gamma}$,
\begin{align*}
\mathcal{G}_S(\mathcal{A}(\omega))
&\leq 2 L \delta
+ 2 B \sqrt{\frac{4\, \dim_{\mathrm{S}}\mathcal{A}_S \> \log(1/\delta)}{n}} \\
&\quad + \frac{I_{\infty}(\mathcal{A}_S(\omega), S) + \log(1/\zeta)}{\sqrt{n}}
+ \frac{C B^2}{\sqrt{n}}.
\end{align*}
We recall that $\mathcal{G}_S(\mathcal{A}(\omega))$ denotes the worst-case generalization gap (see \eqref{eq:worst_case_gap}) and $I_{\infty}(\mathcal{A}_S(\omega), S)$ (see Dfn.~\ref{def:total_mutual_information})  denotes the total mutual information between the random pullback attractor $\mathcal{A}_S(\omega)$ and $S$. 
\end{mytheorem}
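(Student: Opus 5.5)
The proof has two parts. The first is a generic data-dependent uniform-convergence bound over a random set, stated in terms of covering numbers and total mutual information. The second is a dynamical estimate: the upper box-counting dimension of the pullback attractor is at most $\dim_{\mathrm{S}}\mathcal{A}_S$.

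For the first part, I would follow the mutual-information/fractal-dimension arguments of \citet{simsekli2020hausdorff, dupuis2023generalization, tuci2025mutual}. Fix $\delta>0$ and take a minimal $\delta$-cover $N_\delta(\mathcal{A}_S(\omega))$ of the compact attractor. By Assumption~\ref{assum:lipschitz}, $\mathcal{G}_S(\mathcal{A}_S(\omega)) \le 2L\delta + \max_{w\in N_\delta}(\risk(w)-\er(w))$. To handle the dependence between the cover and $S$, I would work on a fixed grid of $\delta$-balls. The cover is then a random subset of a countable family of grid points, so the change of measure from $P_{(S,\mathcal{A}_S)}$ to $\mu_z^{\otimes n}\otimes P_{\mathcal{A}_S}$ costs the total mutual information $I_\infty(\mathcal{A}_S(\omega),S)$ via the data-processing inequality. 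Under the product measure, Hoeffding's inequality (Assumption~\ref{assum:bounded_loss}) and a union bound over the $|N_\delta|$ points give a term of order $B\sqrt{\log|N_\delta|/n}$. The confidence level $\zeta$, together with the Hoeffding constants, produces the $(I_\infty+\log(1/\zeta))/\sqrt n$ and $CB^2/\sqrt n$ terms.

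Next, I need $\log|N_\delta(\mathcal{A}_S(\omega))| \le 2\dim_{\mathrm{S}}\mathcal{A}_S\log(1/\delta)$ for all $\delta<\delta_{n,\gamma}$. This holds outside an event of probability $\gamma$ once $\upperbox\mathcal{A}_S(\omega)\le\dim_{\mathrm{S}}\mathcal{A}_S$ almost surely. By the definition of the upper box dimension, the factor $2$ absorbs the gap between the limsup and a finite $\delta$. Because that threshold is random, I would choose $\delta_{n,\gamma}$ by an Egorov-type argument so that it works uniformly on a set of probability $1-\gamma$. Plugging the covering bound into the previous paragraph gives $2B\sqrt{4\dim_{\mathrm{S}}\log(1/\delta)/n}$.

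The main obstacle is the dimension estimate $\upperbox\mathcal{A}(\omega)\le j^*+\sum_{i\le j^*}\lambda_i/|\lambda_{j^*+1}|$. My plan is a random Douady--Oesterlé / Kaplan--Yorke covering argument. By invariance, $\mathcal{A}(\omega)=\phi(m,\theta^{-m}\omega,\mathcal{A}(\theta^{-m}\omega))$. Cover $\mathcal{A}(\theta^{-m}\omega)$ by $r$-balls. Using the $C^2$ bound and Integrability, each image is, up to a second-order error, an ellipsoid with semi-axes $r\,\sigma_k(D\phi(m,\cdot,x))$. Cut each ellipsoid into cubes of side $r\sigma_{j+1}$, where $j=j^*$. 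The number of pieces is $\prod_{k\le j}\sigma_k/\sigma_{j+1}^j$, and each piece has diameter $\lesssim r\sigma_{j+1}$. This gives a recursive inequality for the $s$-dimensional box-counting content of the form $\mathcal{N}_{\rho}\le\mathcal{N}_r\cdot \|D\phi(m)\|_j\,\sigma_{j+1}^{s-j}$ with $\rho=r\sigma_{j+1}$. The product $\|D\phi(m)\|_j\,\sigma_{j+1}^{s-j}$ has to be controlled along the pullback orbit. Subadditivity of $\ln\|\cdot\|_j$ and Kingman's ergodic theorem, together with ergodicity of $\theta$, give $\frac1m\ln\sup_x\|D\phi(m,\theta^{-m}\omega,x)\|_j\to \Lambda^{(j)}\le \sum_{i\le j}\lambda_i$ in expectation. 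The inequality comes from $\|AB\|_j\le\|A\|_j\|B\|_j$ and the sup-inside-expectation in Definition~\ref{def:rds:sharpness}. Bounded Distortion lets me exchange sup and inf over $x$ and treat $\sigma_{j+1}$ at exponential rate $\lambda_{j+1}<0$, which the Transition Index guarantees. For $s>\dim_{\mathrm{S}}$ the exponent $\sum_{i\le j}\lambda_i+(s-j)\lambda_{j+1}$ is negative, so iterating over blocks of length $m$ gives $\mathcal{N}_\rho\lesssim\rho^{-s}$. Non-Singularity ensures the scales $\rho$ shrink geometrically without degenerate jumps, so all small $\delta$ are covered.

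The delicate points are the randomness of the attractor across the pullback times $\theta^{-m}\omega$ and controlling the second-order remainder uniformly. I would handle these with the $D^2\phi$ integrability, choosing $r$ small relative to $m$, and an almost-sure Borel--Cantelli step over the blocks.
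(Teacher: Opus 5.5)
\emph{Assessment: the proposal has a genuine but repairable gap in the dimension estimate.}

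Your two-part architecture matches the paper's. The first part is an Egoroff-uniformized covering bound with a total-mutual-information change of measure. Your grid/Hoeffding/change-of-measure derivation of it, followed by an AM--GM step to reach the $1/\sqrt{n}$ form, is an acceptable substitute for the PAC-Bayes corollary the paper imports with $\lambda=\sqrt{n}$. The second part bounds $\upperbox\mathcal{A}(\omega)\le\dim_{\mathrm{S}}\mathcal{A}$ by an ellipsoid-covering iteration, and organizing that iteration as a pullback onto the fixed fiber $\mathcal{A}(\omega)$ is reasonable.

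The gap is in the exponent bookkeeping of the dimension estimate: the claim that Bounded Distortion lets you ``treat $\sigma_{j+1}$ at exponential rate $\lambda_{j+1}$'' fails. The volume norms $\|\cdot\|_j$ are submultiplicative, but an individual singular value $\sigma_{j+1}$ with $j\ge 1$ is not. So $\sup_x\sigma_{j+1}(D\phi(m,\omega,x))$ for the $m$-step map is not controlled by one-step $(j+1)$-th singular values. Under Bounded Distortion its exponential rate is $\Lambda_{j+1}-\Lambda_j$, with $\Lambda_j=\lim_m\frac1m\mathbb{E}\ln\sup_x\|D\phi(m,\omega,x)\|_j$, and this can be strictly larger than $\lambda_{j+1}$.

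A concrete example: let the Jacobian of a linear RDS on $\mathbb{R}^2$ be drawn i.i.d.\ uniformly from $\mathrm{diag}(2,\tfrac14)$ and $\mathrm{diag}(\tfrac14,2)$.
\begin{itemize}
\item The pullback attractor is $\{0\}$, so Bounded Distortion is trivial.
\item The Transition Index holds with $j^*=1$, since $\lambda_1=\ln 2$ and $\lambda_2=-2\ln 2$.
\item Yet $\mathrm{diag}(\tfrac14,2)\,\mathrm{diag}(2,\tfrac14)=\tfrac12 I$, and $\frac1m\ln\sigma_2(D\phi(m,\omega,0))\to-\tfrac12\ln 2$.
\end{itemize}
So you cannot bound the two factors of $\|D\phi(m)\|_j\,\sigma_{j+1}^{s-j}$ separately, by rates $\sum_{i\le j}\lambda_i$ and $\lambda_{j+1}$. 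The argument you give therefore does not establish that your block exponent is negative.

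What is missing is a way to transfer multi-step growth in the $(j+1)$-th direction back to the one-step sharpness values, using only volume exponents.

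\emph{The paper's route (Step~6.6).} It identifies the rate of $\sup_x\sigma_{j+1}$ as $\Lambda_{j+1}-\Lambda_j$ through a sup/inf sandwich and Bounded Distortion. This gives $\upperbox\mathcal{A}(\omega)\le j^*_\Lambda+\Lambda_{j^*_\Lambda}/(\Lambda_{j^*_\Lambda}-\Lambda_{j^*_\Lambda+1})$. Only then does it compare with $\dim_{\mathrm{S}}\mathcal{A}$, using $\Lambda_j\le\sum_{i\le j}\lambda_i$ and the monotonicity of $g(a,b)=a/(a-b)$ in both arguments. This requires a case split, $j^*_\Lambda=j^*$ versus $j^*_\Lambda<j^*$.

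\emph{A direct repair in your content framework.} The Douady--Oesterl\'e interpolation you name but do not use fixes the step. Take $t=s-j^*\in(0,1)$, which is possible because the Transition Index gives $\dim_{\mathrm{S}}\mathcal{A}<j^*+1$. Write $A_x=D\phi(m,\omega,x)$, $j=j^*$, and use $\sigma_{j+1}=\|\cdot\|_{j+1}/\|\cdot\|_j$ to get
\[
\ln\Bigl[\sup_x\|A_x\|_{j}\bigl(\sup_x\sigma_{j+1}(A_x)\bigr)^{t}\Bigr]\le(1-t)\ln\sup_x\|A_x\|_{j}+t\ln\sup_x\|A_x\|_{j+1}+t\Bigl(\ln\sup_x\|A_x\|_{j}-\ln\inf_x\|A_x\|_{j}\Bigr).
\]
The last term is $o(m)$ in expectation by Bounded Distortion. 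The first two have rate $(1-t)\Lambda_{j^*}+t\Lambda_{j^*+1}\le\sum_{i\le j^*}\lambda_i+t\lambda_{j^*+1}$, which is exactly the exponent you want and is negative for $s>\dim_{\mathrm{S}}\mathcal{A}$. The restriction $t\le 1$ is essential: for $t>1$ the coefficient $1-t$ is negative, and the inequality $\Lambda_{j^*}\le\sum_{i\le j^*}\lambda_i$ would point the wrong way.
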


Thm~\ref{thm:generalization_DLS} rigorously links the generalization gap to the stability of training by showing that, at the EoS (where $\dimS < d$), the generalization gap is governed by the global geometry of the attractor rather than by any isolated solution. In particular, the observed expansion along at least one direction implies that the attractor is confined to a proper subset of the ambient parameter space, of dimension strictly smaller than $d$. The sharpness dimension $\dimS$ quantifies this effect by capturing the spectral balance between the expanding directions and the remaining contracting ones.

\vspace{-4mm}
\begin{proof}[Proof sketch]
    The main idea is to show that $\dimS$ upper bounds another notion of fractal dimension called the Minkowski dimension (MD, see Dfn.~\ref{def:minkowski_dimension}). As the MD is directly linked to covering numbers, we directly obtain a generalization bound by relying on existing results \citep{dupuis2024uniform}. Proving the fact that $\dimS$ is larger than MD is achieved by covering $\mathcal{A}_S(\omega)$ by using ellipsoids whose principal axes are determined by $\lambda_1, \dots, \lambda_d$, then computing the covering number for these ellipsoids. The proof is given in App.~\ref{proof:main}. 
\end{proof}
\vspace{-4mm}
Finally, the mutual term $I_\infty$, wich is used to handle the dependece of the dataset $S$ and the random pullback attractor $\mathcal{A}_S(\omega)$, can be avoided by using the recent `set stability' framework of \cite{tuci2025mutual}. Within this framework, we provide another generalization bound without the mutual information term in Thm.~\ref{thm:stability_bound_D_S}.

\vspace{-2mm}
\section{Empirical Results}
\vspace{-1mm}
\paragraph{Numerical Implementation}
Unlike trajectory-based approaches~\cite{birdal2021intrinsic,simsekli2020hausdorff,andreeva2024topological}, our generalization bound neither requires access to the full training trajectory nor necessitates tools of topological data analysis. Instead, our computational bottleneck lies in estimating the eigenvalue spectrum of the Hessian of a parameter near convergence (see App.~\ref{app:grok} for a formal approximation result). For small-scale networks, this can be achieved via (potentially randomized) GPU-parallelized singular value decomposition (SVD)~\cite{halko2011finding}. However, the quadratic memory complexity of the Hessian renders such approaches rapidly intractable as model size grows. This challenge is exacerbated by the fact that we require access not merely to the leading eigenvalues, but to the entire spectrum, which is dominated by a large mass of near-zero modes, precisely the regime where Krylov-based methods such as Lanczos iterations~\cite{lanczos1950iteration} become ineffective. To address this, we adopt \emph{stochastic Lanczos quadrature} (SLQ)~\cite{lin2016approximating,golub1969calculation}, a scalable spectral estimation technique that has recently been analyzed and successfully applied in the context of (moderately sized) neural networks~\cite{ghorbani2019investigation,papyan2018full}. %
Unlike classical SLQ where all runs probe a fixed matrix–vector product operator, we vary the operator across runs by computing Hessian–vector products on independently sampled minibatches. The operator is held fixed within each Lanczos run but resampled across runs so as to estimate the SD directly in expectation rather than post-hoc averaging over a single global spectrum.
We present a complexity analysis in App.~\ref{app:grok}.

\paragraph{Metrics}
We assess the \textbf{correlation} between various notions of complexity and generalization error by using Kendall's coefficients (KC) \cite{kendall_new_1938} as well as their ``granulated'' versions (GC)~\cite{jiang_fantastic_2019}. While the classical KC (denoted $\tau$) measures correlation between two quantities, it may fail to capture their causal relationship. Instead, one GC is defined for each hyperparameter (i.e., $\boldsymbol{\psi}_{\mathrm{LR}}$ for $\eta$ and $\boldsymbol{\psi}_{\mathrm{BS}}$ for $b$); it measures correlation when only this hyperparameter is varying.  %
Note that scaling the constant $B$ in Thm.~\ref{thm:generalization_DLS} does not affect the observed correlation between generalization and topological complexities.

As target metrics, we compute the \emph{generalization gap} (Gen. Gap) and \emph{loss gap} as the absolute difference in accuracy and loss, respectively, both between training and test data.

\paragraph{Notions of complexity} 
As a proxy to generalization, we use both Euclidean ($\|\cdot\|$) and data-dependent ($\rho$) persistent homology (PH) dimensions (PH-Dim)~\cite{birdal2021intrinsic,dupuis2023generalization}, and $\alpha$-weighted lifetime sums~\cite{andreeva2024topological} ($E_\alpha$) using a simulated trajectory of 5,000 iterations (PH-Dim-Fwd). We also consider the last 5,000 training iterations without trajectory simulation (PH-Dim-Bwd). We further include our \emph{RDS sharpness} as $\lambda_1$, the leading singular value (SV) of $I - \eta \nabla^2 \er(w)$ and our Sharpness Dimension (SD). In grokking settings, we also track \emph{Hessian trace} ($\mathrm{tr}(H)$) and the magnitude of its largest eigenvalue (\emph{sharpness}), obtained as the absolute value of the largest Hessian eigenvalue. When $\eta \cdot \mathrm{sharpness} > 1$, the two quantities correspond; otherwise, RDS sharpness is determined by a non-positive Hessian eigenvalue.

\vspace{-2mm}
\subsection{Analysis}
\vspace{-1mm}

\paragraph{Generalization in 3-layer small MLPs}
We begin by experimenting on classical learning scenarios with a small 3-layer MLP trained on MNIST that allows computation of the exact eigenvalue spectrum of Hessian matrices. We use a width of 16, ReLU activation, and no bias, for a total of 12,960 parameters. We train the networks using SGD, without momentum or weight decay (${WD}$), for 250 epochs using cross-entropy loss. We vary the learning rate (lr, $\eta$) and batch size ($b$) to define a $5 \times 5$ grid of hyperparameters. For each ($\eta$, $b$) pair, we estimate the expectation in our Sharpness Dimension $\dimS$ (SD) by sampling random minibatches, computing parameter Hessians, and taking the SVD of the $I - \eta \nabla^2 \er(w)$ matrices. RDS sharpness values $\lambda_k$ are then estimated as the average of the log SVs and SD is computed following Definition~\ref{def:local_sharpness_unified}. 

For our SLQ-based eigenvalue density estimator (SD-SLQ), we use 500 Lanczos runs of 100 steps each via random Rademacher initializations and full reorthogonalization. The expectation is estimated using a separate minibatch per Lanczos run, followed by Gaussian quadrature and kernel smoothing. We then compute SD by integrating the estimated spectral density and the eigenvalue-weighted spectral density using the trapezoidal rule.

\setlength{\columnsep}{6pt}

\begin{wrapfigure}{r}{0.52\columnwidth}
  \centering
    \vspace{-2mm}
    \includegraphics[width=\linewidth]{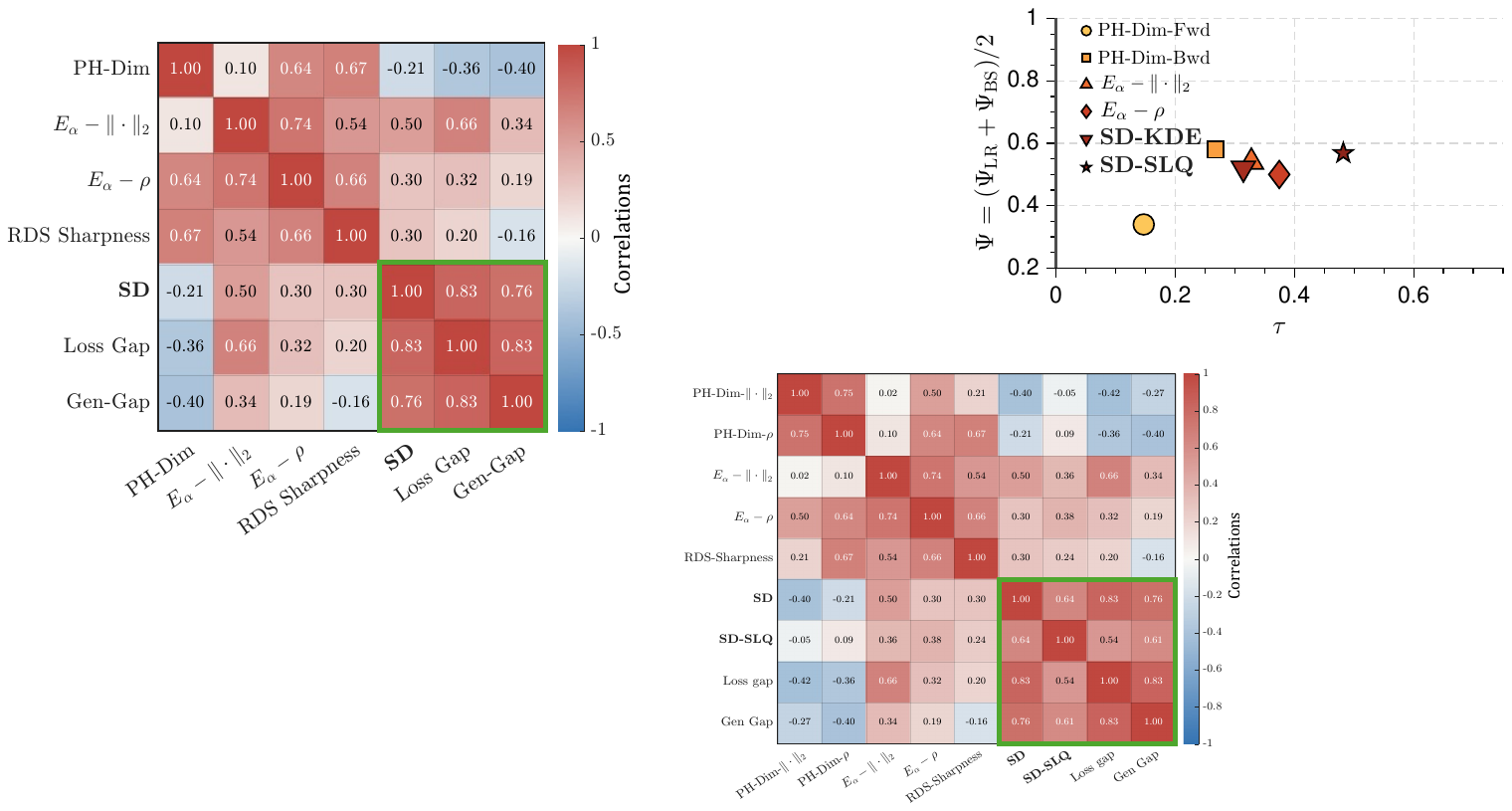}
    \vspace{-2mm}
    \caption{
    \small
    Correlations between various generalization indices and the empirical generalization gap on our small 3-layer MLP trained on MNIST dataset. The region indicated in green shows that our proposed Sharpness Dimension (SD) better predicts the generalization and loss gaps.
    }
    \label{fig:small-mlp-correlations1}
    \vspace{-2mm}
\end{wrapfigure}

Fig. \ref{fig:small-mlp-correlations1} shows correlations between various generalization measures and the empirical generalization and loss gaps. Both the Euclidean and data-dependent PH-Dims fail to capture the generalization behaviour. In addition, both Euclidean and data-dependent $E_\alpha$ show only weak positive correlation with the generalization gap. These results suggest that the trajectory-based topological indices may fail to accurately quantify generalization when networks operate at the edge of stability regime, where $I - \eta \nabla^2 \er(w)$ has singular values larger than 1, i.e., not contractive on average. The top RDS-Sharpness $\lambda_1$ alone also shows a weak negative correlation, suggesting that the largest SV alone is not indicative of generalization in this setting. In contrast, both SD and the approximate SD-SLQ show higher correlations with the gap. The full Hessian-spectrum SD achieves the strongest result indicating that SD better quantifies generalization in this edge of stability regime. 

\begin{wrapfigure}[19]{r}{0.44\columnwidth}
  \centering
  \vspace{-2.5mm}\includegraphics[width=0.40\columnwidth]{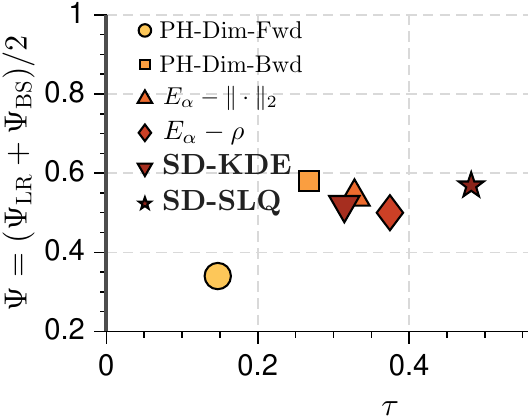}
  \caption{Kendall coefficients and their average granulated variant for the same 5-layer MLP trained using various batch sizes and learning rates. Our estimate further uses the SLQ approximation and its density estimating variant, SD-KDE.}
  \label{fig:large-mlp-kendall}
\end{wrapfigure}
\noindent\textbf{Generalization in larger 5-layer MLPs.}
To evaluate performance on larger networks, we consider a 5-layer MLP on MNIST with a width of 200, corresponding to 278,800 parameters. We follow the same training setup as the 3-layer MLP above and use the same $5 \times 5$ $(\eta, b)$ hyperparameter grid, with networks trained for 100 epochs.
Here, we use the same SLQ setup to estimate eigenvalue densities, with the number of Lanczos runs varying with minibatch size, corresponding to two training epochs. We compute both the SLQ-based histogram and the Gaussian kernel-smoothed estimate, denoted as SD-SLQ and SD-KDE, respectively. We cannot use the exact SVD, and hence do not report SD.

For various complexity measures, Fig.~\ref{fig:large-mlp-kendall} indicates that backward PH-Dim, forward $E_\alpha$'s, our SD-SLQ and SD-KDE achieve similar average GKC values, with backward PH-Dim performing best and SD-SLQ following closely. Yet, when comparing Kendall $\tau$ values, SD-SLQ outperforms all other indicators. Overall, when learning rates and batch sizes are grouped separately, the indicators perform similarly. However, when the full hyperparameter grid is considered, our dimension most accurately captures generalization. 

\begin{figure*}[t]
            \begin{center}
    \includegraphics[width=\linewidth]{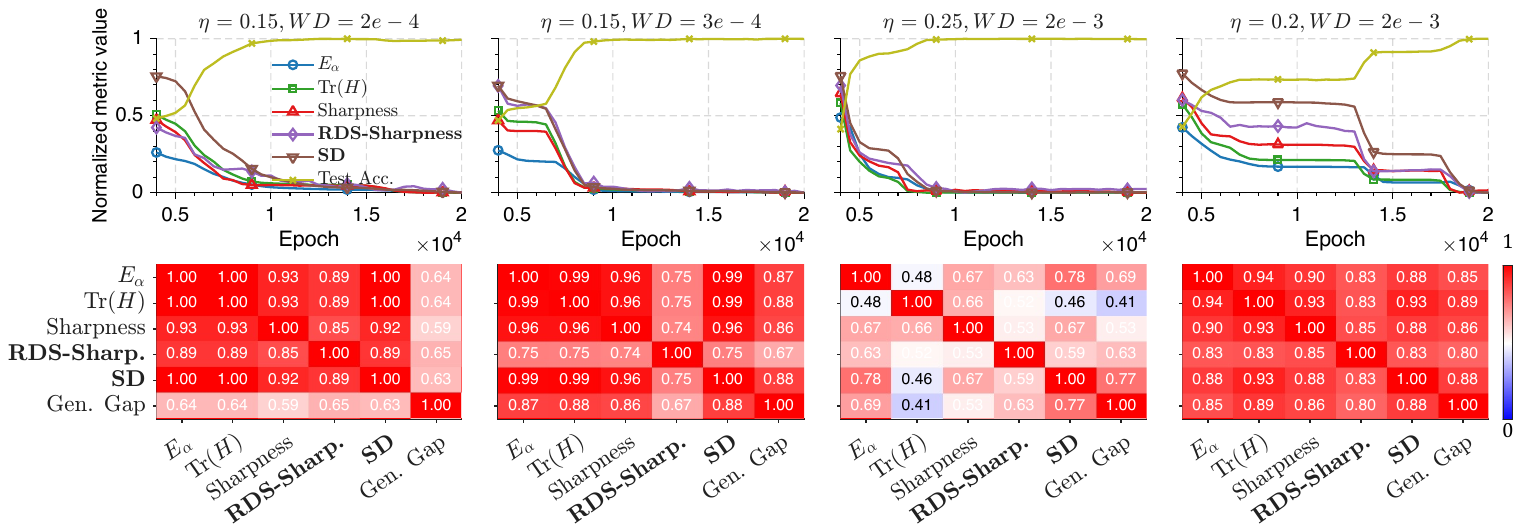}
            \end{center}
            \vspace{-1.5mm}
            \caption{\textbf{Grokking analysis} for different learning rates ($\eta$), weight decay ($WD$) and seeds across two architectures: (\textbf{top row}) 2-layer MLP. (\textbf{second row}) 2-layer MLP. Note that the \emph{suddenness } of the grokking behavior is best captured in the complexity measures we introduce: RDS-Sharpness and Sharpness Dimension (SD).\vspace{-5mm}}
            \label{fig:grokking}
\end{figure*}
\paragraph{Studying grokking}
Next, we study the phenomenon of \emph{grokking}, delayed and sudden generalization~\cite{power2022grokking, Nanda2023-hf,prieto2025grokking} through our EoS framework. %
We consider the task of \emph{arithmetic modulo 97}, a family of supervised learning tasks where two one-hot encoded inputs representing integers $a,b < p$ are used to predict the target $y=a*b \mod p$. $*$ is a binary operation and $p$ is a prime. In all our experiments, we use addition as the binary operation. %
The dataset size is defined as the percentage of the $97^2$ possible input pairs used for training, with the remainder used for testing as in~\cite{Nanda2023-hf} and \cite{power2022grokking}. We use a 40\%/60\% train/test split. 

To induce and observe grokking, we use a 2-layer MLP with 32 hidden features and GELU activation, trained using SGD with momentum and WD for 20k epochs. In all evaluations presented in Fig.~\ref{fig:grokking}, training acc.\ exceeds 90\% and reaches 100\% early, while test acc.\ continues to increase, a.k.a. \emph{grok}.
We report complexity measures for 100 uniformly spaced checkpoints as well as their intra-correlations and correlations with the Gen. Gap in Fig.~\ref{fig:grokking}. In App.~\ref{app:grok}, we also consider a 3-layer MLP with 32 hidden features and ReLU activation, trained with SGD using only WD.

Fig.~\ref{fig:grokking} shows that correlations our SD achieves with the generalization gap are comparable to or higher than those of $E_\alpha$, $\mathrm{tr}(H)$, and Sharpness, with our other measure, RDS sharpness, performing worse than SD for all the cases but the first. We observe that the Sharpness and $\mathrm{tr}(H)$ are less stable for certain hyperparameter choices, performing worse in one of the configurations. $E_\alpha$ shows stable correlations when using the training trajectories, whereas SD performs similarly or better even with only the model weights from the current epoch. %
These results indicate that SD captures information about training dynamics and the underlying attractor without requiring access to trajectory information like $E_\alpha$. In addition, during the grokking phase transition occurs~\cite{rubingrokking} simultaneously with changes in the structure of the Hessian spectrum.

\paragraph{Generalization in modern transformers: the case of GPT-2}
To assess whether the same behavior persists at a larger scale, we consider GPT-2~\cite{radford2019language} (124M parameters) trained from
scratch on WikiText-2~\cite{merity2016pointer}. We evaluate SGD and SGD with momentum on a $5 \times 5$ hyperparameter grid over learning rate
and effective batch size, and AdamW~\cite{loshchilov2019adamw} on a $4 \times 3 \times 3$ hyperparameter grid over learning rate, batch size, and
weight decay. We report the correlations between the resulting complexity measures and the empirical loss gap and 0--1
loss gap. 
For this experiment we additionally report SD-PS, an equal-mass pseudo-spectrum estimator computed from the smoothed SLQ density; see App.~\ref{app:sd-ps} for details.
Fig.~\ref{fig:GPT2} indicates that the qualitative conclusions remain consistent with our smaller-scale
experiments, with the strongest distinction appearing for AdamW. In particular, classical sharpness does not provide a
reliable indicator of generalization, and under AdamW, it shows a strong negative correlation with the loss gaps. By
contrast, the RDS-based quantities remain informative overall, with the SD variants, especially the smoothed estimates
SD-KDE and SD-PS, showing the most consistent positive correlations across optimizers. PH-Dim and $E_\alpha$ also
provide useful signals in some settings, but their behavior is less uniform. Overall, these results suggest that, in this
transformer setting as well, generalization is better captured by SD and related RDS-based quantities than by classical
sharpness alone.

\begin{figure*}[t]
            \begin{center}
    \includegraphics[width=\linewidth]{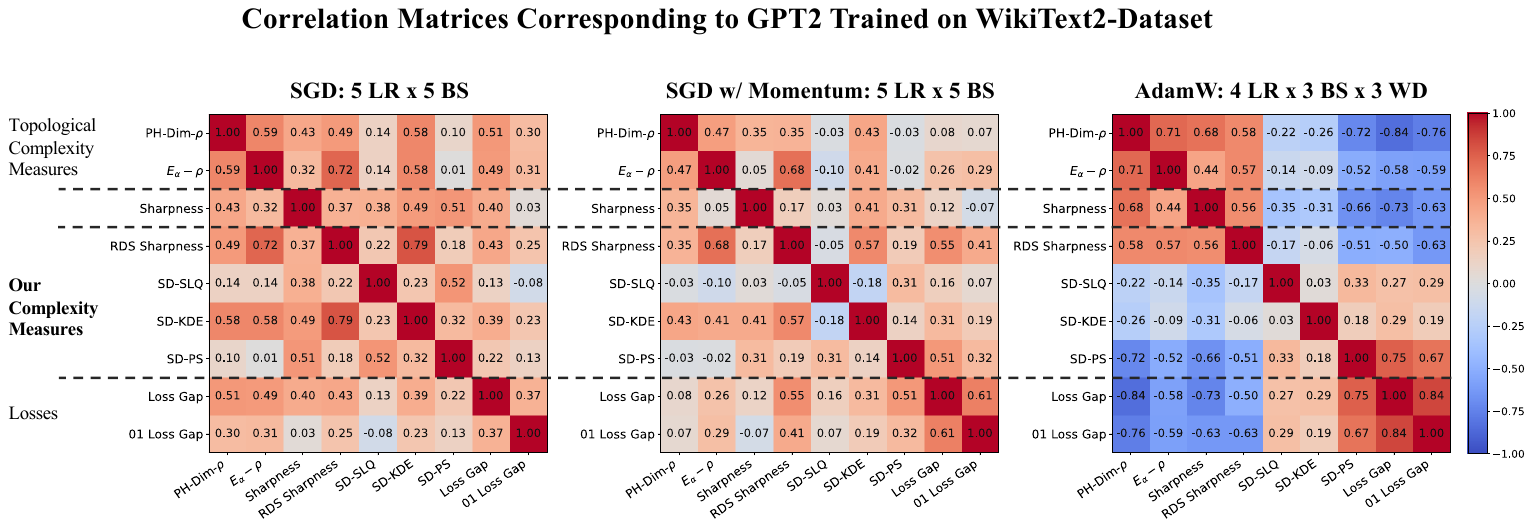}
            \end{center}
            \vspace{-2.5mm}
            \caption{\textbf{Correlation Matrices Corresponding to GPT-2 Trained on WikiText2-Dataset} for different learning rates (LR), batch sizes (BS) and weight decay values (WD), across three different optimizers: SGD, SGD with momentum and AdamW.\vspace{-4mm}}
            \label{fig:GPT2}
\end{figure*}

\section{Conclusion}
Training neural networks at the edge of numerical stability challenges classical generalization theories that assume convergence to a single solution. In this work, we show that this regime is instead governed by the \textbf{geometry of a random pullback attractor} induced by stochastic optimization viewed as a random dynamical system. Building on this, we introduce the Sharpness Dimension (SD), a Lyapunov-inspired spectral complexity measure that captures the effective dimensionality of expanding directions on the attractor. We prove a worst-case generalization bound over the entire attractor, establishing SD as a principled quantity linking chaotic optimization dynamics to generalization. Empirically, SD reliably predicts generalization gaps across optimization regimes, correlates with grokking phase transitions, and remains computable at scale via stochastic Lanczos quadrature.

\paragraph{Limitations \& future work}
Currently, we are required to estimate the full Hessian spectrum, which, despite scalable approximations, remains computationally demanding for very large models. 
Future work involves tightening the theory under weaker regularity assumptions, extending it to adaptive optimizers and large-scale architectures such as transformers~\cite{zhang2024transformers}.

\paragraph{Acknowledgments}
The authors are grateful for the support of the Excellence Strategy of local and state governments.
T. B. was supported by a UKRI Future Leaders Fellowship (MR/Y018818/1). The authors acknowledge support from the UK AI Research Resource (AIRR Isambard AI) through grant 0251-4584-0945-1 - TopoFound. U.\c{S}. is partially supported by the French government under the management of Agence Nationale de la Recherche as part of the ``Investissements d'avenir'' program, reference ANR-19-P3IA-0001 (PRAIRIE 3IA Institute). M.T and U.\c{S}. are partially supported by the European Research Council Starting Grant DYNASTY – 101039676.

\bibliographystyle{apalike}
\bibliography{arxiv}     %

\clearpage
\appendix

\appendix
\onecolumn

\section*{Appendix}
We provide here the technical details and full proofs omitted from the main paper, along with supplementary experiments that further support our findings. The appendix is structured as follows:

\begin{itemize}[topsep=0em,itemsep=0em,leftmargin=*]
    \item \textbf{Appendix A} introduces additional background material from random dynamical systems, random attractors, and worst-case generalization bounds.
    \item \textbf{Appendix B} presents the complete proofs of all theoretical results stated in the main text, as well as several supplementary theoretical developments.
    \item \textbf{Appendix C} describes the experimental setups and implementation details required to reproduce our empirical results.
    \item \textbf{Appendix D} We provide additional empirical results.
\end{itemize}
\vspace{-2mm}
\section{Theoretical Background}
\vspace{-2mm}
This section provides the theorems and technical background necessary for our main results.
\vspace{-2mm}
\subsection{Random Attractor}
\label{sec:random_attractor}
The notion of an \emph{attractor} is one of the fundamental concepts in dynamical systems theory. While this concept has been central in the study of deterministic systems for several decades, its extension to stochastic systems is more subtle.

In stochastic systems (like SGD), the "noise" never stops kicking the particle. Therefore, the attractor is likely not to be a single static point. Instead, it is a \emph{random set}, a moving target that fluctuates in shape and position over time, driven by the specific realization of the noise. The theory of attractors has been thoroughly developed over the past decades \citep{crauel1994attractors, arnold2006random}. Unlike the autonomous deterministic case, Random Dynamical Systems (RDS) are inherently non-autonomous and exhibit multiple notions of attraction, such as pullback and forward attraction. All definitions rely on the concept of a compact, invariant, random set.

\begin{definition}[\textbf{Compact and Invariant Random Sets}]
\label{def:invariant_random_set}
Let $\phi$ be an RDS on a Polish space $X$. A mapping $A: \Omega \to \mathcal{P}(X)$ is called a \textbf{compact random set} if:
\begin{enumerate}
    \item \textbf{Measurability:} The indicator function $\omega \mapsto \mathbf{1}_{A(\omega)}(x)$ is measurable for each fixed $x \in X$.
    \item \textbf{Compactness:} $A(\omega)$ is \textbf{compact} (i.e., closed and bounded) for $\mathbb{P}$-almost every $\omega \in \Omega$.
\end{enumerate}

The set $A$ is said to be an \textbf{invariant compact random set} if additionally it satisfies \textbf{$\phi$-Invariance}:
\[
\phi(t,\omega,A(\omega)) = A(\theta^t \omega), \quad \forall t \in \mathbb{T}^+, \quad \mathbb{P}\text{-a.s.}
\]
\end{definition}

\begin{definition}[\textbf{Modes of Random Attraction}]
\label{def:random_attractors}
Let $(\theta, \phi)$ be an RDS on a Polish space $X$, and let $A$ be an invariant compact random set according to Dfn.~\ref{def:invariant_random_set}. Let $\mathcal{S}$ be a collection of bounded subsets of $X$ (e.g., bounded initializations). We denote $\operatorname{dist}(E,F) := \sup_{x \in E} \inf_{y \in F} d(x,y)$. $A$ is called:

\begin{enumerate}
    \item \textbf{A Random Pullback Attractor} if for all $D \in \mathcal{S}$:
    \[
    \lim_{t \to \infty} \operatorname{dist} \left( \phi(t, \theta^{-t}\omega) D, A(\omega) \right) = 0 \quad \text{a.s.}
    \]
    \textbf{Intuition:} This asks: "If I started training infinitely long ago ($t \to -\infty$) with noise $\theta^{-t}\omega$, where would I be \emph{right now} ($t=0$)?"

    \item \textbf{A Random Forward Attractor} if for all $D \in \mathcal{S}$:
    \[
    \lim_{t \to \infty} \operatorname{dist} \left( \phi(t, \omega) D, A(\theta^t\omega) \right) = 0 \quad \text{a.s.}
    \]
    \textbf{Intuition:} This asks: "If I start training \emph{now} ($t=0$), where will I be in the distant future?"
    \textbf{SGD Context:} This describes the physical trajectory of a specific training run.

    \item \textbf{A Weak (Point) Attractor} if the convergence holds in probability rather than almost surely, (for singleton sets $D=\{y\}$).
\end{enumerate}
\end{definition}

Fig. \ref{fig:pullback_fractall_app} complements our definitions and results with an example. Indeed in the main write up we focus on the notion of random pullback attractor.

\begin{figure}[t]
\centering\includegraphics[width=\columnwidth]{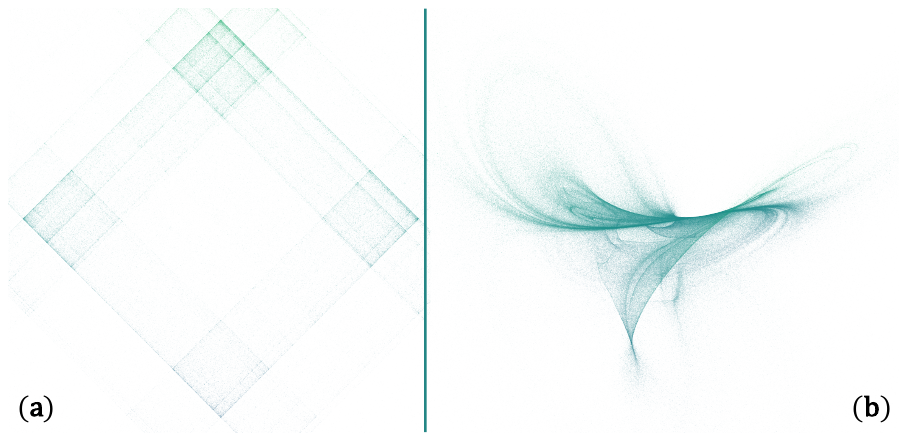}
\caption{\textbf{(a) Fractal Pullback Attractor at the Edge of Stability.} 
Visualization of the random snapshot attractor $\mathcal{A}(\zeta)$ generated by 
Stochastic Gradient Descent (SGD) on a non-linear function 
$L(x) = \frac{1}{2}\|x\|^2 - A \prod_{i=1}^2 \cos(k x_i)$ ($A=2.0, k=4.0$). 
The system is evolved for $T=250$ steps with a learning rate $\eta=0.15$ 
placing the dynamics at the edge of stability. The figure illustrates the 
collapse of $6 \times 10^5$ particles onto a fractal skeleton under a shared 
noise realization $\zeta = \{\xi_t\}_{t=1}^T$, where $\xi_t \sim \mathcal{N}(0, \sigma^2 I)$ 
with $\sigma=0.1$. The intricate filaments emerge from the recursive 
stretching and folding of the state space, characteristic of chaotic 
synchronization in random dynamical systems (RDS).
\textbf{(b) 3D chaotic attractor of an RDS.}  Illustration of a chaotic random pullback attractor using a stochastic generalized Nosé-Hoover system \cite{posch1986canonical}.} 

\label{fig:pullback_fractall_app}
\vspace{-5mm}
\end{figure}

\subsection{Fractal Dimensions}
We start by introducing the notion of a covering, which plays a central role throughout this paper and is especially important in the proofs of our main results.
\begin{definition}[Covering of a set]
\label{def:covering}
Let $X \subseteq \mathbb{R}^d$ be a set. A $\delta$-cover of $X$ is a family $(U_i)_i$ of sets with diameter at most $\delta$ such that
\[
X \subseteq \bigcup_i U_i.
\]
If $X$ is bounded, a $\delta$-cover of minimal cardinality is called a \emph{minimal cover}, and its cardinality is called the \emph{covering number}, denoted by $\mathcal{N}(X, \delta)$.
\end{definition}
Based on Dfn.~\ref{def:covering}, we are now able to define the upper Minkowski dimension, which will later be bounded using our new notion of fractal dimension.
\begin{definition}[Minkowski dimensions]
\label{def:minkowski_dimension}
Let $X \subset \mathbb{R}^d$ be a bounded set. The lower and upper Minkowski (or box) dimensions of $X$ are defined as
\[
\underline{\dim}_{\mathrm{B}}(X)
:=
\liminf_{\delta \to 0}
\frac{\log \mathcal{N}(X, \delta)}{\log (1/\delta)},
\qquad
\overline{\dim}_{\mathrm{B}}(X)
:=
\limsup_{\delta \to 0}
\frac{\log \mathcal{N}(X,\delta)}{\log (1/\delta)}.
\]
If the two quantities agree, the resulting limit is called the \emph{Minkowski dimension} of $X$, written $\dim_{\mathrm{B}}(X)$.

\end{definition}

\subsection{Data-dependent worst-case generalization bounds}
Before presenting the proofs of our main results, we state the following technical assumption, lemmata and preperation which are necessray to formulate our main result. 

Recent advances in topological generalization bounds rely on data-dependent worst-case generalization bounds, leveraging PAC-Bayesian theory on random sets \cite{dupuis2024uniform} or the stability-based framework of \citet{tuci2025mutual}.

\begin{theorem}[Egoroff's theorem]
\label{thm:egoroff}
Let $(\Omega,\mathcal{F},\mathbb{P})$ be a probability space, and let $f$ and $(f_n)_{n\in\mathbb{N}}$ be measurable functions on $\Omega$. Suppose that
\[
f_n(x) \longrightarrow f(x) \quad \text{for almost all } x \in \Omega.
\]
Then, for any $\varepsilon > 0$, there exists a measurable set $\Omega_\varepsilon \in \mathcal{F}$ such that
\[
\mathbb{P}(\Omega_\varepsilon) \ge 1 - \varepsilon,
\]
and the convergence of $f_n$ to $f$ is uniform on $\Omega_\varepsilon$.
\end{theorem}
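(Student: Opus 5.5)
We prove Egoroff's theorem (Theorem~\ref{thm:egoroff}) by the classical $\varepsilon 2^{-k}$ argument, using only countable additivity and finiteness of $\mathbb{P}$. First we discard the null set $N\in\mathcal{F}$ on which $f_n(x)\not\to f(x)$ (or on which $f$ or some $f_n$ is not finite-valued), so that $|f_m-f|$ is a well-defined measurable function on $\Omega\setminus N$. For integers $n,k\ge 1$ we then define the bad sets
\[
E_{n,k} := \bigcup_{m\ge n}\bigl\{x\in\Omega\setminus N : |f_m(x)-f(x)|\ge 1/k\bigr\},
\]
which are measurable as countable unions of measurable sets.

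For fixed $k$, the sets $E_{n,k}$ decrease in $n$. Their intersection $\bigcap_n E_{n,k}$ consists of points where $|f_m-f|\ge 1/k$ for infinitely many $m$, so it is empty on $\Omega\setminus N$, since there $f_m\to f$ pointwise. Because $\mathbb{P}$ is a finite measure, continuity from above gives $\mathbb{P}(E_{n,k})\to 0$ as $n\to\infty$. This is the only place finiteness of the measure is used, and it is the one genuinely essential step: the theorem fails on infinite measure spaces.

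Given $\varepsilon>0$, for each $k$ we choose $n_k$ with $\mathbb{P}(E_{n_k,k})<\varepsilon 2^{-k}$. We then set
\[
\Omega_\varepsilon := \Omega\setminus\Bigl(N\cup\bigcup_{k\ge1}E_{n_k,k}\Bigr).
\]
This set is measurable, and by countable subadditivity $\mathbb{P}(\Omega_\varepsilon)\ge 1-\sum_k\varepsilon 2^{-k}=1-\varepsilon$.

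Finally we check uniform convergence on $\Omega_\varepsilon$. Given $\eta>0$, pick $k$ with $1/k<\eta$. For every $x\in\Omega_\varepsilon$ and every $m\ge n_k$ we have $x\notin E_{n_k,k}$, hence $|f_m(x)-f(x)|<1/k<\eta$. Therefore $\sup_{x\in\Omega_\varepsilon}|f_m(x)-f(x)|\le 1/k$ for all $m\ge n_k$, which is uniform convergence. No step is a real obstacle. The only care needed is the measurability of the sets $E_{n,k}$ and the application of continuity from above, both of which are routine.
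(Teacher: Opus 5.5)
Your proof is correct and complete. It is the standard $\varepsilon 2^{-k}$ argument: continuity from above on the decreasing sets $E_{n,k}$ (the one place where finiteness of $\mathbb{P}$ enters), then countable subadditivity to control $\mathbb{P}(\Omega\setminus\Omega_\varepsilon)$.

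The paper gives no proof of its own. It quotes Egoroff's theorem as classical background and cites Bogachev's measure-theory text when using it in the proof of Corollary~\ref{cor:uniform_covering_minkowski}, so your argument is precisely the textbook proof being invoked. One implicit assumption is worth stating: $f$ and the $f_n$ must be real-valued. For extended-real-valued functions, the set you discard as null (where $f$ or some $f_n$ is infinite) need not be null.
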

This theorem allows us to relate the notion of covering (see Dfn.~\ref{def:covering}) to the Minkowski dimension (see Dfn.~\ref{def:minkowski_dimension}). Similar ideas have been used in previous work \citep{simsekli2020hausdorff,dupuis2023generalization,dupuis2024uniform}.
\begin{corollary}[Uniform control of covering numbers]
\label{cor:uniform_covering_minkowski}
Let $(\Omega_U,\mathcal{F}_U,\mathbb{P}_U)$ be the probability space supporting the random variable $U$, and let $(\mathcal{Z},\mathcal{F})$ denote the data space. Assume that the random set $\mathcal{W}_{S,U}$ is almost surely of finite diameter. Then, for any $\gamma>0$, there exists a measurable set $\Omega_\gamma \subseteq \mathcal{F}^{\otimes n} \otimes \mathcal{F}_U$ with
\[
\mu^{\otimes n} \otimes \mathbb{P}_U(\Omega_\gamma) \ge 1-\gamma,
\]
such that, on $\Omega_\gamma$, the following holds: there exists $\delta_{n,\gamma}>0$ for which, for all $0<\delta<\delta_{n,\gamma}$,
\[
\log \bigl| \mathcal{N}(\mathcal{W}_{S,U}, \delta) \bigr|
\;\le\;
2\, \upperbox(\wcal_{S,U}) \, \log\!\left(\tfrac{1}{\delta}\right).
\]
\end{corollary}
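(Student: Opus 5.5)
The plan is to apply Egoroff's theorem (Theorem~\ref{thm:egoroff}) to the functions
\[
f_k(S,U) := \sup_{0<\delta<1/k} \frac{\log \mathcal{N}(\wcal_{S,U},\delta)}{\log(1/\delta)},
\]
which decrease pointwise to $f(S,U) := \upperbox(\wcal_{S,U})$ by the definition of the upper Minkowski dimension (Dfn.~\ref{def:minkowski_dimension}) as a $\limsup$. Since $\wcal_{S,U}$ is almost surely of finite diameter, it is bounded. Hence $\mathcal{N}(\wcal_{S,U},\delta)\le (c\,\mathrm{diam}/\delta)^d$, so $f_k\le d+o(1)$ is finite, and the convergence $f_k\to f$ holds almost surely on the product space $\mathcal{Z}^n\times\Omega_U$ with measure $\mu^{\otimes n}\otimes\mathbb{P}_U$.

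\textbf{Measurability.} First I would check that each $f_k$ is measurable. The map $(S,U)\mapsto \mathcal{N}(\wcal_{S,U},\delta)$ is measurable once $\wcal_{S,U}$ is a measurable random closed set, which follows from the measurability assumed for random compact sets. The supremum over $\delta$ can be restricted to a countable set. To see this, use the monotonicity of $\delta\mapsto\mathcal{N}(\cdot,\delta)$ to sandwich arbitrary $\delta$ between rationals; this changes the ratio by only a vanishing amount. I expect this step to be the main technical nuisance. If it is delicate, I would define $f_k$ directly with rational $\delta$ and carry the $o(1)$ slack into the final factor of $2$.

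\textbf{Applying Egoroff.} Egoroff gives a set $\Omega_\gamma$ of measure at least $1-\gamma$ on which $f_k\to f$ uniformly. Uniformity is stronger than what is needed here. It suffices that for each $(S,U)\in\Omega_\gamma$ there is a $k$ (which may even be chosen uniformly over $\Omega_\gamma$) with
\[
f_k(S,U)\le f(S,U)+\epsilon_k, \qquad \epsilon_k\to 0.
\]

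\textbf{Extracting the bound.} Set $\delta_{n,\gamma}=1/k$ for a $k$ such that $f_k\le f+\epsilon_k$ on $\Omega_\gamma$. Then for $\delta<\delta_{n,\gamma}$,
\[
\log\mathcal{N}(\wcal_{S,U},\delta)\le \bigl(\upperbox(\wcal_{S,U})+\epsilon_k\bigr)\log(1/\delta).
\]
This is at most $2\,\upperbox(\wcal_{S,U})\log(1/\delta)$ provided $\epsilon_k\le \upperbox(\wcal_{S,U})$.

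\textbf{Degenerate case.} The remaining issue is $\upperbox=0$, where a multiplicative factor $2$ gives no room. I would handle it by replacing $\upperbox$ with $\max(\upperbox,\epsilon)$ inside the argument, or by noting that in the application $\upperbox\le\dimS$ and $\dimS\ge j^*\ge1$ by Assumption~\ref{assum:regular_dynamics}. Either way, I would pick $k$ so that $\epsilon_k\le\upperbox$ on $\Omega_\gamma$, shrinking $\Omega_\gamma$ slightly if needed via continuity of measure. On the event $\{\upperbox>0\}$ one can pick $k$ pointwise with $f_k\le 2f$, and a measurable selection of the smallest such $k$ together with $\mathbb{P}(k(S,U)>K)\to0$ yields the uniform $\delta_{n,\gamma}$.
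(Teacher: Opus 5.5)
Your main line is the paper's: Egoroff applied to $f_k=\sup_{0<\delta<1/k}\log\mathcal{N}(\wcal_{S,U},\delta)/\log(1/\delta)\downarrow\upperbox(\wcal_{S,U})$. You are also right, and more careful than the paper, that uniform convergence only gives an additive slack $f_k\le\upperbox+\epsilon_k$. The factor $2$ therefore needs $\epsilon_k\le\upperbox$ on all of $\Omega_\gamma$, whereas the paper simply asserts $f_{\delta_k}\le 2\upperbox$ there. Your last idea settles the case where $\upperbox>0$ but is not bounded away from $0$, and it does not even need Egoroff. The events $\{f_K\le 2\upperbox\}$ increase in $K$, so you can pick $K$ with probability at least $1-\gamma$ and set $\delta_{n,\gamma}=1/K$.

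The gap is the degenerate case, and none of your remedies closes it. Let $E:=\{\upperbox(\wcal_{S,U})=0\text{ and }\wcal_{S,U}\text{ has at least two points}\}$. On $E$ the claimed inequality is false for every small $\delta$: for a two-point set, $\upperbox=0$ while $\log\mathcal{N}(\wcal_{S,U},\delta)=\log 2>0$. So if $\mathbb{P}(E)>0$, no valid $\Omega_\gamma$ exists for $\gamma<\mathbb{P}(E)$, and your step ``pick $k$ with $\epsilon_k\le\upperbox$ on $\Omega_\gamma$'' cannot be carried out. Each of your fixes fails here:
\begin{itemize}
\item Continuity of measure removes $\{0<\upperbox<\eta\}$, but not an atom at $0$.
\item The ``smallest $k$'' is $+\infty$ on $E$.
\item Replacing $\upperbox$ by $\max(\upperbox,\epsilon)$ proves a different inequality.
\item $\upperbox\le\dimS$ with $\dimS\ge 1$ is an upper bound, so it gives no lower bound on $\upperbox$.
\end{itemize}
What you can actually prove is the corollary under the extra hypothesis $\mathbb{P}(E)=0$; then $\{f_K\le 2\upperbox\}\uparrow E^c$ and your selection argument finishes. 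Alternatively you get the weaker bound with $\upperbox+\epsilon$ in place of $2\upperbox$. For the paper's purpose, use your $\dimS$ remark on the right-hand side instead. Once $\epsilon_k\le 1$ on $\Omega_\gamma$, Theorem~\ref{thm:KY_bound_general} and $\dimS\ge j^*\ge 1$ give $\log\mathcal{N}\le(\upperbox+1)\log(1/\delta)\le(\dimS+1)\log(1/\delta)\le 2\dimS\log(1/\delta)$. That is all Theorem~\ref{thm:generalization_DLS} needs, with no lower bound on $\upperbox$. The paper's own proof has the same defect.
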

\begin{proof}
Let $(\Omega_U,\mathcal{F}_U,\mathbb{P}_U)$ denote the probability space supporting the auxiliary random variable $U$, and let $\mathcal{F}$ denote the $\sigma$-algebra associated with the data space $\mathcal{Z}$. Assume that the random set $\mathcal{W}_{S,U}$ is almost surely of finite diameter.

By definition of the upper Minkowski (box-counting) dimension, we have $\mu^{\otimes n} \otimes \mathbb{P}_U$-almost surely
\[
\dim_{\mathrm{B}}(\mathcal{W}_{S,U})
:=
\limsup_{\delta \to 0}
\frac{\log \bigl|\mathcal{N}(\mathcal{W}_{S,U}, \delta)\bigr|}{\log(1/\delta)}
=
\lim_{\delta \to 0} f_\delta(\mathcal{W}_{S,U}),
\]
where we define
\[
f_\delta(\mathcal{W}_{S,U})
:=
\sup_{0 < r < \delta}
\frac{\log \bigl|\mathcal{N}(\mathcal{W}_{S,U},r)\bigr|}{\log(1/r)}.
\]

Let $(\delta_k)_{k \ge 0}$ be a decreasing sequence of positive numbers in $(0,1)$ such that $\delta_k \to 0$, and fix $\gamma > 0$. By Egoroff’s theorem (see \citealp{bogachev2007measure}), there exists a measurable set
\[
\Omega_\gamma \in \mathcal{F}^{\otimes n} \otimes \mathcal{F}_U
\]
such that
\[
\mu^{\otimes n} \otimes \mathbb{P}_U(\Omega_\gamma) \ge 1 - \gamma,
\]
and such that the convergence
\[
f_{\delta_k}(\mathcal{W}_{S,U}) \longrightarrow \dim_{\mathrm{B}}(\mathcal{W}_{S,U})
\]
is uniform on $\Omega_\gamma$.

Consequently, there exists an index $k_{\gamma,n} \ge 0$ such that for all $k \ge k_{\gamma,n}$ and all $(S,U) \in \Omega_\gamma$,
\[
f_{\delta_k}(\mathcal{W}_{S,U})
\le
2\,\dim_{\mathrm{B}}(\mathcal{W}_{S,U}).
\]

Therefore, for any $0 < \delta < \delta_k$ and any $(S,U) \in \Omega_\gamma$, we obtain
\[
\log \bigl|\mathcal{N}(\mathcal{W}_{S,U}, \delta)\bigr|
\le
2\,\dim_{\mathrm{B}}(\mathcal{W}_{S,U}) \, \log\!\left(\tfrac{1}{\delta}\right),
\]
which concludes the proof.
\end{proof}

The framework of \citet{dupuis2024uniform} is based on information-theoretic quantities. In particular, we provide below a precise definition of the total mutual information term appearing in our main theoretical results; see \citep{van2014renyi,hodgkinson2022generalization} for further background.

\begin{definition}[Total mutual information]
\label{def:total_mutual_information}
Let $X$ and $Y$ be two random elements defined on a probability space $(\Omega,\mathcal{F},\mathbb{P})$ (note that the codomains of $X$ and $Y$ may be distinct). We define the total mutual information between $X$ and $Y$ by
\[
I_\infty(X,Y)
=
\log \sup_{A \in \mathcal{A}_{X,Y}}
\frac{\mathbb{P}_{X,Y}(A)}
{\mathbb{P}_X \otimes \mathbb{P}_Y(A)} .
\]
\end{definition}

\paragraph{Generalization Bounds via Mutual Information}
Many studies have employed information-theoretic techniques, particularly within the “fractal-based” literature \citep{simsekli2020hausdorff,birdal2021intrinsic}. A unifying perspective recently emerged with the PAC-Bayesian theory for random sets \citep{dupuis2024uniform}, which was recently employed by \citet{andreeva2024topological} to establish generalization bounds based on novel topological complexity measures. Informally, all these bounds are of the following form\footnote{We use the notation $\lesssim$ in informal statements where absolute constants have been omitted.}:
\begin{align}
\label{eq:informal-topological-bounds}
{\textstyle\sup_{w \in \wcal_{S,U}} } \big( \risk(w) - \er(w) \big) \lesssim \sqrt{\frac{\mathbf{C}(\wcal_{S,U}) + \mathrm{IT} + \log(1/\zeta)}{n}},
\end{align}
with probability at least $1 - \zeta$. The term $\mathrm{IT}$ is an \emph{information-theoretic} (IT) term, typically the \emph{total} mutual information between the dataset $S$ and the set $\wcal_{S,U}$
The aforementioned bounds differ in the choice of complexity measure $\mathbf{C}(\mathcal{W}_{S,U})$, but all include an IT term. 

By adapting Corollary 33 of \citet{dupuis2024uniform} to our framework, we derive the following Corollary 

\begin{corollary}
\label{cor:covering_bound}
Assume that $\ell(w,z)$ is $L$-Lipschitz in $w$, bounded, and that $\mathcal{W}_{S,U}$ is almost surely bounded. Then there exists a constant $C>0$ such that, for any $\lambda, \delta > 0$, with probability at least $1-\zeta$ under the joint law of (S,U), we have
\[
\sup_{w \in \mathcal{W}_{S,U}} \left(\risk(w)- \er(w) \right) \leq 2L\delta  + 2B  \sqrt{\frac{2 \log |\mathcal{N}(\mathcal{W}_{S,U},\delta)|}{n}} + \frac{I_{\infty}(\mathcal{W}_{S,U},S) + \log\frac{1}{\zeta}}{\lambda} + C\lambda \frac{B^2}{n}
\]
\end{corollary}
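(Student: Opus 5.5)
The plan is to reduce the supremum over the (possibly uncountable) random set $\mathcal{W}_{S,U}$ to a maximum over a finite, measurably chosen net. The remaining dependence between that net and $S$ is then removed by a change of measure controlled by $I_\infty(\mathcal{W}_{S,U},S)$. Finally, the gap of a finite set that is independent of the data is controlled with an exponential-moment argument, which produces the free parameter $\lambda$.

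\textbf{Step 1 (discretization).} Fix $\delta>0$. To keep measurability clean, I would use a deterministic partition of $\mathbb{R}^d$ into dyadic cubes of diameter at most $\delta$. Let $N_\delta(\mathcal{W}_{S,U})$ be the finite set of cubes meeting $\mathcal{W}_{S,U}$, and pick one representative point per cube by a fixed measurable selection. This yields a finite set $\widetilde{\mathcal{W}}$ that is a measurable function of $\mathcal{W}_{S,U}$ alone. Its cardinality is comparable to $|\mathcal{N}(\mathcal{W}_{S,U},\delta)|$: each set of diameter $\delta$ meets at most a dimension-dependent number of cubes. That factor is absorbed by the slack constant $2$ inside the square root, possibly after rescaling $\delta$ by a constant. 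For every $w\in\mathcal{W}_{S,U}$ there is $\tilde w\in\widetilde{\mathcal{W}}$ with $\|w-\tilde w\|\le\delta$. By Assumption~\ref{assum:lipschitz}, both $\risk$ and $\er$ are $L$-Lipschitz, so
\[
\sup_{w\in\mathcal{W}_{S,U}}\bigl(\risk(w)-\er(w)\bigr)\le 2L\delta+X,\qquad X:=\max_{\tilde w\in\widetilde{\mathcal{W}}}\bigl(\risk(\tilde w)-\er(\tilde w)\bigr).
\]

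\textbf{Step 2 (decoupling).} Since $\widetilde{\mathcal{W}}$ is a function of $\mathcal{W}_{S,U}$, the data-processing property of the total mutual information gives $I_\infty(\widetilde{\mathcal{W}},S)\le I_\infty(\mathcal{W}_{S,U},S)$. By Definition~\ref{def:total_mutual_information}, for every nonnegative measurable $f$,
\[
\mathbb{E}_{\mathbb{P}_{\widetilde{\mathcal{W}},S}}[f]\le e^{I_\infty}\,\mathbb{E}_{\mathbb{P}_{\widetilde{\mathcal{W}}}\otimes\mathbb{P}_S}[f].
\]
This follows by approximating $f$ with simple functions. I would apply it to
\[
f=\exp\!\Bigl(\lambda\bigl(X-2B\sqrt{2\log|\mathcal{N}(\mathcal{W}_{S,U},\delta)|/n}\bigr)\Bigr).
\]

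\textbf{Step 3 (independent case).} Under the product measure, $\widetilde{\mathcal{W}}$ is independent of $S$, so I condition on it and treat it as a fixed finite set of size $N$. By Assumption~\ref{assum:bounded_loss} and Hoeffding's lemma, each $\risk(\tilde w)-\er(\tilde w)$ is $B^2/(4n)$-sub-Gaussian. The maximal inequality then gives $\mathbb{E}[X]\le B\sqrt{2\log N/n}$, which lies below the $2B\sqrt{2\log N/n}$ centering after the cube-count adjustment. Changing one $Z_i$ moves $X$ by at most $B/n$, so McDiarmid's inequality shows that $X-\mathbb{E}X$ is sub-Gaussian with variance proxy of order $B^2/n$. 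Hence
\[
\mathbb{E}_\otimes[f]\le\exp\!\bigl(C\lambda^2B^2/n\bigr)
\]
for an absolute constant $C$.

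\textbf{Step 4 (conclusion).} Combining Steps 2 and 3 gives $\mathbb{E}_{\text{joint}}[f]\le\exp(I_\infty+C\lambda^2B^2/n)$. Markov's inequality at level $\zeta$ then shows that, with probability at least $1-\zeta$,
\[
\lambda\bigl(X-2B\sqrt{2\log N/n}\bigr)\le I_\infty+\log(1/\zeta)+C\lambda^2B^2/n.
\]
Dividing by $\lambda$ and adding the $2L\delta$ term from Step 1 gives the claim.

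The main obstacle is Step 3 combined with the random cardinality: the centering term depends on $N$, which is itself random. This is why the conditioning must be done under the product law, where $N$ is fixed given $\widetilde{\mathcal{W}}$. The other delicate point is matching the cube-based cardinality to $\mathcal{N}(\mathcal{W}_{S,U},\delta)$ without losing more than the constant already built into the bound; I would handle it by rescaling $\delta$, or by absorbing the dimension-dependent factor into $C$.
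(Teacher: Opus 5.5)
\textbf{Overall approach.} Your skeleton matches the one behind Corollary 33 of \citet{dupuis2024uniform}, which the paper invokes without a separate proof: a change of measure controlled by $I_\infty$, a bounded-differences exponential moment, a maximal inequality over a finite $\delta$-net, and Markov. In that source the expected supremum is presumably handled by symmetrization plus Massart's lemma, which would account for the two factors of $2$ in the statement. Your direct sub-Gaussian maximal inequality is a valid and even sharper substitute, and Steps 3--4 are otherwise fine.

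\textbf{The gap: Step 1.} You claim that the cube count is comparable to $\mathcal{N}(\mathcal{W}_{S,U},\delta)$ up to constants the bound can absorb, and this fails. Fix a grid of cubes of diameter $\delta$ (side $\delta/\sqrt d$). A set of diameter at most $\delta$ can meet up to $(\lceil\sqrt d\,\rceil+1)^d$ cells, and a small ball centred at a grid vertex already meets $2^d$ of them. Hence $\log|\widetilde{\mathcal{W}}|$ can exceed $\log\mathcal{N}(\mathcal{W}_{S,U},\delta)$ by an \emph{additive} amount of order $d$ (up to $d\log d$), independent of $\delta$.

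A multiplicative slack inside $2B\sqrt{2\log\mathcal{N}/n}$ cannot absorb an additive term. When $\mathcal{N}(\mathcal{W}_{S,U},\delta)=1$ your centering is $0$, while $\mathbb{E}X$, a maximum of up to $2^d$ distinct centred gaps, is in general strictly positive. Then $\mathbb{E}_S f\ge e^{\lambda\mathbb{E}X}$ violates $\mathbb{E}_S f\le e^{C\lambda^2B^2/n}$ for small $\lambda$.

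The two proposed repairs do not work either:
\begin{itemize}
\item Rescaling $\delta$ does not help, because vertex-straddling occurs at every scale.
\item The surplus cannot go into $C\lambda B^2/n$. After dividing by $\lambda$ it is a $\lambda$-independent term of size about $B\sqrt{d\log d/n}$, whereas the statement must hold for every $\lambda>0$ with one constant $C$.
\end{itemize}
As written, you prove a bound carrying an ambient-dimension term, which is exactly what this paper's fractal-dimension bounds are meant to avoid.

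\textbf{The fix.} Discretize only after decoupling, and only for a frozen set.
\begin{enumerate}
\item Set $Y(\mathcal{W},S):=\sup_{w\in\mathcal{W}}\bigl(\mathcal{R}(w)-\widehat{\mathcal{R}}_S(w)\bigr)$ and $f:=\exp\bigl(\lambda\bigl(Y-2L\delta-2B\sqrt{2\log\mathcal{N}(\mathcal{W},\delta)/n}\bigr)\bigr)$.
\item Change measure for the pair $(\mathcal{W}_{S,U},S)$ directly. This also repairs a mismatch in your Step 2: your $f$ depends on $\mathcal{W}_{S,U}$ through $\mathcal{N}(\mathcal{W}_{S,U},\delta)$, not only on $\widetilde{\mathcal{W}}$, so the data-processing bound for $I_\infty(\widetilde{\mathcal{W}},S)$ is not the one you need.
\item Under the product law, fix $\mathcal{W}$, take a \emph{minimal} $\delta$-cover of it, and pick one point $\tilde w_i\in\mathcal{W}$ in each cover element. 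Pointwise, $Y\le 2L\delta+\max_i\bigl(\mathcal{R}(\tilde w_i)-\widehat{\mathcal{R}}_S(\tilde w_i)\bigr)$, which gives $\mathbb{E}_S Y\le 2L\delta+B\sqrt{\log\mathcal{N}(\mathcal{W},\delta)/(2n)}$ with exactly $\mathcal{N}(\mathcal{W},\delta)$ points.
\item $Y$ itself has bounded differences $B/n$, so McDiarmid gives $\mathbb{E}_S f\le e^{\lambda^2B^2/(8n)}$.
\end{enumerate}
Because the cover is used only inside a bound for a fixed $\mathcal{W}$, no measurable selection of nets is needed. What remains is joint measurability of $Y$ (via a countable dense subset of the closed random set) and of $\mathcal{W}\mapsto\mathcal{N}(\mathcal{W},\delta)$. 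Your cube construction did not remove that requirement anyway, since your $f$ already contains $\mathcal{N}(\mathcal{W}_{S,U},\delta)$.
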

By combining Cor.~\ref{cor:uniform_covering_minkowski} and Cor.~\ref{cor:covering_bound} and applying a union bound, we can now state the following theorem, which serves as a key foundation for our subsequent analysis.

\begin{corollary}
\label{cor:uniform_generalization_bound}
Assume that the loss $\ell(w,z)$ is $L$-Lipschitz in $w$, bounded by $B$, 
and that the random set $\mathcal{W}_{S,U}$ is almost surely of finite 
diameter. Then there exists a constant $C>0$ such that, for any 
$\lambda > 0$, with probability at least $1-\zeta-\gamma$ under the 
joint law of $(S,U)$, there exists $\delta_{n,\gamma} > 0$ such that 
for all $0 < \delta < \delta_{n,\gamma}$,
\begin{multline}
\sup_{w \in \mathcal{W}_{S,U}} 
  \bigl( \mathcal{R}(w) - \widehat{\mathcal{R}}_S(w) \bigr)
\le 2 L \delta
+ 2 B \sqrt{\frac{4\, \overline{\dim}_{\mathrm{B}}(\mathcal{W}_{S,U}) 
  \, \log(1/\delta)}{n}} \\
+ \frac{I_{\infty}(\mathcal{W}_{S,U}, S) + \log(1/\zeta)}{\lambda}
+ \frac{C \lambda B^2}{n}.
\end{multline}
\end{corollary}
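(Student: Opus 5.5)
The plan is to combine Corollary~\ref{cor:uniform_covering_minkowski} and Corollary~\ref{cor:covering_bound} by intersecting their good events and then substituting the covering-number estimate into the covering-based bound. Fix $\lambda>0$, $\zeta>0$ and $\gamma>0$.

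First, apply Corollary~\ref{cor:uniform_covering_minkowski} to the almost surely bounded random set $\mathcal{W}_{S,U}$. This gives a measurable event $\Omega_\gamma$ with $\mu^{\otimes n}\otimes\mathbb{P}_U(\Omega_\gamma)\ge 1-\gamma$. On $\Omega_\gamma$ there is a threshold $\delta_{n,\gamma}>0$ such that
\[
\log|\mathcal{N}(\mathcal{W}_{S,U},\delta)|\le 2\,\overline{\dim}_{\mathrm{B}}(\mathcal{W}_{S,U})\log(1/\delta)
\qquad\text{for all } 0<\delta<\delta_{n,\gamma}.
\]
Second, let $E_\zeta$ be the event of probability at least $1-\zeta$ on which the inequality of Corollary~\ref{cor:covering_bound} holds. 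A union bound gives $\mathbb{P}(\Omega_\gamma\cap E_\zeta)\ge 1-\zeta-\gamma$. On this intersection, plugging the covering estimate into the middle term yields
\[
2B\sqrt{\frac{2\log|\mathcal{N}(\mathcal{W}_{S,U},\delta)|}{n}}\;\le\; 2B\sqrt{\frac{4\,\overline{\dim}_{\mathrm{B}}(\mathcal{W}_{S,U})\log(1/\delta)}{n}}.
\]
The remaining terms $2L\delta$, $\big(I_\infty(\mathcal{W}_{S,U},S)+\log(1/\zeta)\big)/\lambda$ and $C\lambda B^2/n$ carry over unchanged, which gives exactly the claimed inequality.

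The main obstacle is the order of quantifiers. Corollary~\ref{cor:covering_bound} is stated for each fixed $\delta$, whereas the claim holds simultaneously for all $\delta<\delta_{n,\gamma}$, with a threshold that is itself random. My first attempt will be to reopen the argument behind Corollary~33 of \citet{dupuis2024uniform} and verify that the high-probability event there depends only on $\lambda$ and $\zeta$, not on $\delta$. In that argument the exceptional event comes from a single change-of-measure step (a Donsker--Varadhan type inequality controlled by $I_\infty$) applied to one fixed $\lambda$-exponential moment. The scale $\delta$ enters only afterwards and deterministically: the Lipschitz approximation $|\ell(w,z)-\ell(w',z)|\le L\delta$ produces the $2L\delta$ term, and the covering number enters through a finite maximal inequality. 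If this holds, $E_\zeta$ can be taken independent of $\delta$ and the argument above is complete.

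If the event does depend on $\delta$, the fallback is to discretize. Take the dyadic grid $\delta_k=2^{-k}$ and spend confidence $\zeta_k=\zeta 2^{-k-1}$ at level $k$, so the total failure probability is still at most $\zeta$. For $\delta\in[\delta_{k+1},\delta_k)$, use the monotonicity $\mathcal{N}(\cdot,\delta)\le\mathcal{N}(\cdot,\delta_{k+1})$ together with $\delta_k\le 2\delta$. This reproduces the same bound, but at the cost of modified absolute constants and an extra $\log k$ term inside $\log(1/\zeta)$. That term is $O(\log\log(1/\delta))$, so it is dominated by $\log(1/\delta)$ and can be absorbed into the constants.

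Measurability of $\Omega_\gamma$ and of the events in Corollary~\ref{cor:covering_bound} with respect to $\mathcal{F}^{\otimes n}\otimes\mathcal{F}_U$ is taken as given from those corollaries, so the union bound is legitimate.
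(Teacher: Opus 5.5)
Your proposal is essentially the paper's proof, which simply intersects the good events of Corollary~\ref{cor:uniform_covering_minkowski} and Corollary~\ref{cor:covering_bound} by a union bound and substitutes $\log\mathcal{N}(\mathcal{W}_{S,U},\delta)\le 2\,\overline{\dim}_{\mathrm{B}}(\mathcal{W}_{S,U})\log(1/\delta)$. Your primary resolution of the quantifier issue is correct and is what the paper's one-line union bound tacitly relies on: the exceptional event comes from a single change of measure at fixed $\lambda,\zeta$, and $\delta$ enters only deterministically through the covering bound on the exponential moment; your dyadic fallback, however, would not give the stated inequality. With $\zeta_k=\zeta 2^{-k-1}$ the extra cost is $(k+1)\log 2/\lambda$, which is of order $\log(1/\delta)/\lambda$ rather than $\log k$, and neither it nor the grid-induced change of constants can be absorbed, because the only free constant $C$ multiplies the $\delta$-independent term $\lambda B^2/n$.
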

\begin{remark}
    The parameter $\delta_n$ appearing in \Cref{thm:fractal-dim-bound} 
    deserves a brief comment. As can be seen from the proof, it 
    quantifies the uniformity in $n$ of the limit defining the upper 
    box-counting dimension: if this convergence is uniform in $n$, then 
    $\delta$ becomes independent of $n$. A similar parameter already 
    arises in \citep{dupuis2023generalization}, and 
    \citep{dupuis2024uniform} shows that $\delta$ can indeed be taken 
    independent of $n$ under a suitable convergence assumption on the 
    distributions of the random sets. We refer the reader to these 
    works for further details.
\end{remark}

\paragraph{Generalization Bounds via Random Set Stability}
A different perspective was taken by \citet{tuci2025mutual}, who showed that various 'topological generalization' bounds can be recovered within a novel stability framework. In our case, we make use of both frameworks. Before stating our main bound in terms of the stability parameter, we recall the following assumption on the random set, denoted by $\wcal_{S,U}$, which, in our setting, is the random attractor.
\begin{definition}
    \label{def:selection}
    A data-dependent selection of $\wcal_{S,U}$ is a deterministic mapping
   $\omega: \mathrm{CL}(\Rd) \times \zcal^n \to \Rd$ such that $\omega(\wcal_{S,U}, S') \in \wcal_{S,U}$, almost surely. In particular, we assume the existence of a random variable $\omega_0(\wcal_{S,U}, S')$ such that, almost surely, $\omega_0(\wcal_{S,U}, S') \in \argmax_{w \in \wcal_{S,U}} G_{S'} (w)$.
\end{definition}

\begin{assumption}[Random set stability by \citet{tuci2025mutual}]
    \label{ass:random-trajectory-stability}
    $\mathcal{W}_{S,U}$ is $\beta_n$-random set stable if for any $J\in \mathbb{N}^\star$ and any data-dependent selection $\omega$ of $\wcal_{S,U}$, there exists a map $\omega' : \mathrm{CL}(\mathbb{R}^d) \times \mathbb{R}^d \to \mathbb{R}^d$ such that:
    \begin{itemize}
        \item For any $S$, $U$ and $w\in\Rd$, $\omega' (\wcal_{S,U}, w) \in \wcal_{S,U}$ .
        \item For all $z \in \mathcal{Z}$ and two datasets $S, S' \in \zcal^n$ differing by $J$ elements we have:
        \begin{align*}
           \mathbb{E}_{U} [| \ell(\omega(\mathcal{W}_{S,U}, S), z) - \ell(\omega'(\mathcal{W}_{S', U},\omega(\mathcal{W}_{S,U},S)), z ) |] \leq \beta_n J.
      \end{align*}

    \end{itemize}
\end{assumption}

As noted by \citet{tuci2025mutual}, in the absence of algorithmic randomness (that is, when $U$ is constant), Assump.~\ref{ass:random-trajectory-stability} reduces to a special case of the celebrated random set stability notion introduced by \citet{foster2019hypothesis}. Stability assumptions are typically formulated for neighboring datasets; here, we instead consider a variant in which datasets differ by $J$ elements. The two formulations are equivalent, and we adopt this version to streamline subsequent proofs and simplify notation.

\begin{theorem}[\citet{tuci2025mutual} Theorem 4.3.]
    \label{thm:fractal-dim-bound}
        Suppose that the loss $\ell$ is bounded by $B$, $L$-Lipschitz and  Assump.~\ref{ass:random-trajectory-stability} hold, and that $\wcal_{S,U}$ is a.s. of finite diameter. Without loss of generality, assume that $ \beta_n^{-2/3}$ is an integer divisor of $n$. There exists $\delta_n > 0$ such that for all $\delta < \delta_n$
       \begin{align*}
        \mathbb{E} \bigg[\sup_{w \in \mathcal{W}_{S,U}} \left(\mathcal{R}(w) - \widehat{\mathcal{R}}_S(w) \right) \bigg] \leq  2 \mathbb{E} \bigg[ \frac{B}{n} + \delta L + \beta_n^{1/3} \left( 1 + B \sqrt{4\upperbox(\wcal_{S,U})\log \frac1{\delta}} \right) \bigg],
\end{align*}
where $\upperbox(\wcal_{S,U})$ is the upper box-counting dimension (see Dfn.~\ref{def:minkowski_dimension})
\end{theorem}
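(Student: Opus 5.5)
The plan is to prove the bound by a block-resampling argument that trades the dependence between $S$ and $\wcal_{S,U}$ against the stability parameter $\beta_n$. The covering machinery from Cor.~\ref{cor:uniform_covering_minkowski} then handles the worst case over the set.

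\textbf{Setup.} Set $J:=\beta_n^{-2/3}$, which divides $n$ by assumption, and partition $S$ into $m=n/J$ blocks $S^{(1)},\dots,S^{(m)}$ of $J$ points each. Write
\[
\er(w)=\frac1m\sum_{k=1}^m \widehat{\mathcal R}_{S^{(k)}}(w).
\]
Let $w^\star:=\omega_0(\wcal_{S,U},S)$ be the data-dependent selection of Dfn.~\ref{def:selection} attaining the supremum of the gap, so that the left-hand side equals $\mathbb{E}[\risk(w^\star)-\er(w^\star)]$. For each block $k$, draw an independent ghost block $\tilde S^{(k)}\sim\mu^{\otimes J}$ and let $S'_k$ be $S$ with $S^{(k)}$ replaced by $\tilde S^{(k)}$. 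Then $S'_k$ and $S$ differ in exactly $J$ points, and $\wcal_{S'_k,U}$ is independent of $S^{(k)}$.

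\textbf{Stability transfer.} Assump.~\ref{ass:random-trajectory-stability} gives a point $w'_k:=\omega'(\wcal_{S'_k,U},w^\star)\in\wcal_{S'_k,U}$. Replacing $w^\star$ by $w'_k$ in both the population loss and the block-$k$ empirical loss costs at most $2\beta_n J=2\beta_n^{1/3}$ in expectation. To make this legitimate, integrate the pointwise stability inequality over $z\sim\mu$ and over $z\in S^{(k)}$, using Fubini and the fact that it holds for every $z$. After this step we must control
\[
\mathbb{E}\Big[\sup_{w\in\wcal_{S'_k,U}}\big(\risk(w)-\widehat{\mathcal R}_{S^{(k)}}(w)\big)\Big].
\]
Here the set is independent of the $J$ i.i.d.\ points in $S^{(k)}$.

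\textbf{Covering and concentration.} Conditionally on $\wcal_{S'_k,U}$, take a minimal $\delta$-cover with centres in the set. The Lipschitz Assump.~\ref{assum:lipschitz} moves any point to its nearest centre at cost $2L\delta$. Over the finitely many centres, Hoeffding plus a union bound (or the maximal inequality for sub-Gaussian variables) gives an expected supremum of at most
\[
B\sqrt{\frac{2\log\mathcal N(\wcal_{S'_k,U},\delta)}{J}}.
\]
Since $1/\sqrt J=\beta_n^{1/3}$ and $\wcal_{S'_k,U}\overset{d}{=}\wcal_{S,U}$, Cor.~\ref{cor:uniform_covering_minkowski} lets us replace $\log\mathcal N$ by $2\upperbox(\wcal_{S,U})\log(1/\delta)$ for $\delta<\delta_n$. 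This yields the term $B\beta_n^{1/3}\sqrt{4\upperbox\log(1/\delta)}$. Averaging over the $m$ blocks and collecting the $\delta L$, $\beta_n^{1/3}$ and $B/n$ slack terms (the last absorbing integer rounding and the fact that the supremum is attained only almost surely) gives the claim with the factor $2$.

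\textbf{Main obstacle.} I expect the hardest step to be the passage from the high-probability covering control of Cor.~\ref{cor:uniform_covering_minkowski}, which holds only on $\Omega_\gamma$, to an in-expectation statement with a single $\delta_n$. I would first try to use boundedness of the loss to charge the complement $\Omega_\gamma^c$ a cost of $B\gamma$, and then choose $\gamma\lesssim 1/n$ so that this is absorbed in the $B/n$ term. This choice of $\gamma$ is what makes $\delta_n$ depend on $n$.
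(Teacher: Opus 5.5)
\paragraph{Comparison with the paper} The paper does not prove this statement. It imports it, in simplified form, from \citet{tuci2025mutual}, Theorem~4.3, and uses it only as a black box in Thm.~\ref{thm:stability_bound_D_S}. Your text is therefore a reconstruction rather than an alternative to an in-paper argument, and it is sound.

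Your skeleton is:
\begin{itemize}
\item blocks of size $J=\beta_n^{-2/3}$;
\item a ghost block, so that $\wcal_{S'_k,U}$ is independent of $S^{(k)}$;
\item a transfer cost of $2\beta_n J=2\beta_n^{1/3}$, from applying Assump.~\ref{ass:random-trajectory-stability} pointwise in $z$ and then integrating;
\item a finite-class bound over a $\delta$-cover with centres in the set.
\end{itemize}
Applying the assumption also at $z\in S^{(k)}$ is legitimate, because it holds for all fixed $z,S,S'$ with the expectation taken over $U$ only. This is the scheme that the $J$-element form of the stability assumption and the condition $\beta_n^{-2/3}\mid n$ are evidently set up for.

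Compared with quoting the theorem, your version makes the origin of every term explicit. You apply the sub-Gaussian maximal inequality conditionally on $(S'_k,U)$; no symmetrization is needed, since the set is already independent of the block. As a result your constants are no worse than the stated ones: you get a single $B/n$ and $B\beta_n^{1/3}\sqrt{4\,\upperbox\log(1/\delta)}$ without the outer factor $2$.

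Your accounting of the $B/n$ term is slightly off in one place. It comes only from the event $\{(S'_k,U)\notin\Omega_\gamma\}$, which has probability at most $\gamma$, is independent of $S^{(k)}$, and on which the block supremum is at most $B$. Taking $\gamma=1/n$ gives the $B/n$ and fixes $\delta_n:=\delta_{n,1/n}$. Nothing needs to be absorbed for rounding, since divisibility is assumed. Nothing is needed for almost-sure attainment of the supremum either, since null sets cost nothing in expectation.

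\paragraph{A caveat you inherit} Cor.~\ref{cor:uniform_covering_minkowski} passes from uniform convergence of $f_{\delta_k}$ to $\upperbox$ on $\Omega_\gamma$ to the bound $f_{\delta_k}\le 2\,\upperbox$. That step needs $\upperbox(\wcal_{S,U})$ to be bounded away from $0$ on $\Omega_\gamma$, or Egoroff applied to the ratio $f_{\delta_k}/\upperbox$ on $\{\upperbox>0\}$.

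The step genuinely fails for finite sets. A fixed, data-independent finite $\wcal$ is $\beta_n$-stable for every $\beta_n$ and has $\upperbox(\wcal)=0$. With $\beta_n=n^{-3/2}$, the stated right-hand side is $2B/n+2\delta L+2n^{-1/2}$. With suitably decorrelated losses, the true expected worst-case gap is of order $B\sqrt{\log|\wcal|/n}$, which is larger once $|\wcal|$ is large.

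To make your argument airtight, do one of the following:
\begin{itemize}
\item assume $\upperbox(\wcal_{S,U})>0$ almost surely; or
\item carry $\log\mathcal N(\wcal,\delta)\le(\upperbox+\varepsilon)\log(1/\delta)$ through the argument in place of $2\,\upperbox\log(1/\delta)$.
\end{itemize}
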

Indeed, we present a simplified version of the theorem here. Since we 
assume that the loss $\ell$ is globally $L$-Lipschitz, this constant 
appears directly in the bound. It would in fact suffice to assume 
Lipschitz continuity only on the random set itself.

We will next provide the proofs omitted from the main paper.

\newpage
\section{Omitted Proofs}
Before we present our main theorem. We recall some definition and basic facts, which will be used in the proof and improve readability. 
\subsection{Preliminaries}
\subsection*{Important Sets}
\begin{definition}[Minkowski Sum]
 Let $U$ and $V$ be two non-empty subsets of a vector space (in this context, $\mathbb{R}^d$). The Minkowski sum, denoted by $U \oplus V$, is the set formed by adding every element of $U$ to every element of $V$:
\[
U \oplus V := \{u+v \mid u \in U, v \in V\}
\]   

In the context of the proof, the \textbf{Minkowski sum} provides a rigorous way to define the ``thickening'' of a transformed set to account for approximation errors. Let $U \subset \mathbb{R}^d$ be a compact set and let $B(0, \epsilon)$ be the closed ball of radius $\epsilon > 0$ centered at the origin. The Minkowski sum $U \oplus B(0, \epsilon)$ corresponds exactly to the \textbf{closed $\epsilon$-neighborhood} of $U$:
\begin{equation}
\label{eqn:def_setnhbd}
    U \oplus B(0, \epsilon) = \{ y \in \mathbb{R}^d \mid \exists u \in U, \|y - u\| \le \epsilon \}
\end{equation}
Equivalently, this can be expressed using the distance function $\text{dist}(y, U) = \inf_{u \in U} \|y - u\|$:
\begin{equation}
    U \oplus B(0, \epsilon) = \{ y \in \mathbb{R}^d \mid \text{dist}(y, U) \le \epsilon \}
\end{equation}

\end{definition}
\subsection*{Geometric Interpretation: Ellipsoids and Linear Images of Balls}

Let $L \in \mathbb{R}^{d \times d}$ be a real matrix of rank
\[
k := \mathrm{rank}(L) \le d.
\]
We study the geometry of the image of the unit ball
\[
B(0,1) := \{ \xi \in \mathbb{R}^d : \|\xi\|_2 \le 1 \}
\]
under the linear map $L$.

Define
\[
E := L(B(0,1)) = \{ L\xi \in \mathbb{R}^d : \|\xi\|_2 \le 1 \}.
\]
Then $E$ is an ellipsoid if $L$ is invertible, and a \emph{degenerate ellipsoid} (i.e.\ a flattened ellipsoid lying in a lower-dimensional subspace) if $\mathrm{rank}(L) < d$.

\paragraph{Canonical Form of the Ellipsoid}
By the Singular Value Decomposition, there exist orthonormal bases
\[
\{u_1,\dots,u_d\} \subset \mathbb{R}^d, \qquad \{v_1,\dots,v_d\} \subset \mathbb{R}^d
\]
and singular values
\[
\sigma_1 \ge \sigma_2 \ge \cdots \ge \sigma_k > 0, 
\qquad 
\sigma_{k+1} = \cdots = \sigma_d = 0
\]
such that
\[
L = \sum_{i=1}^k \sigma_i \, v_i u_i^T.
\]
Equivalently,
\[
L u_i = \sigma_i v_i \quad \text{for } i=1,\dots,k, 
\qquad 
L u_i = 0 \quad \text{for } i = k+1,\dots,d.
\]
The singular values satisfy
\[
\sigma_i = \sqrt{\lambda_i(L^T L)},
\]
where $\lambda_i(L^T L)$ are the eigenvalues of $L^T L$ ordered in decreasing order. Since $\{u_1,\dots u_d \}$ an orthonormal basis we rewrite $\xi \in \mathbb{R}^d$ with $\|\xi\| \leq 1$  as
\[
\xi = \sum_{i=1}^d a_i u_i, 
\qquad 
\sum_{i=1}^d a_i^2 \le 1.
\]
Then
\[
L\xi = \sum_{i=1}^k \sigma_i a_i v_i.
\]
Therefore for $t_i := \sigma_i a_i$,
\[
E 
= 
\left\{ 
\sum_{i=1}^k t_i v_i 
\;\Bigg|\; 
\sum_{i=1}^k \frac{t_i^2}{\sigma_i^2} \le 1 
\right\}.
\]
Geometrically, $E$ lies in the $k$-dimensional subspace
\[
\mathrm{Im}(L) = \mathrm{span}\{v_1,\dots,v_k\},
\]
its principal axes are the directions $v_1,\dots,v_k$, and the length of the $i$-th semi-axis is $\sigma_i$.

\paragraph{Image of a Ball of Radius $\rho$}
Let $\rho > 0$. Since
\[
B(0,\rho) = \rho \, B(0,1),
\]
by linearity we have
\[
L(B(0,\rho)) = \rho \, L(B(0,1)) = \rho \, E.
\]
Thus the scaled ellipsoid is
\[
\rho E 
= 
\left\{ 
\sum_{i=1}^k t_i v_i 
\;\Bigg|\; 
\sum_{i=1}^k \frac{t_i^2}{(\rho \sigma_i)^2} \le 1 
\right\},
\]
with principal semi-axis lengths
\[
\rho \sigma_1, \; \rho \sigma_2, \; \dots, \; \rho \sigma_k.
\]

\paragraph{Image of a Shifted Ball.}
For any $x \in \mathbb{R}^d$,
\[
B(x,\rho) = \{x\} \oplus B(0,\rho),
\]
and therefore
\[
L(B(x,\rho)) = \{Lx\} \oplus \rho E.
\]
\paragraph{Covering Estimates for Ellipsoid}
\begin{lemma}[Ellipsoid covering]
\label{lemma:ellipsoid_covering}
Let $E \subset \mathbb{R}^d$ be an ellipsoid with semi-axes
\[
\sigma_1 \ge \sigma_2 \ge \cdots \ge \sigma_d > 0.
\]
Let $\rho > 0$, and let $j \in \{0,1,\dots,d\}$ be such that
\[
\sigma_{j+1} \le \rho
\quad
(\text{with } \sigma_{d+1} := 0).
\]
Then the minimal number of Euclidean balls of radius $\rho$ needed to cover $E$ satisfies
\[
\mathcal{N}(E,\rho) \;\le\; 3^d \prod_{i=1}^j \frac{\sigma_i}{\rho}.
\]
\end{lemma}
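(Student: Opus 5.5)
The plan is a volumetric packing argument applied to an enlarged ellipsoid whose semi-axes are never smaller than $\rho$. Since covering numbers are invariant under rigid motions, I first rotate so that $E$ is axis-aligned and centred at the origin, with semi-axes $\sigma_1\ge\cdots\ge\sigma_d$. I then define the auxiliary ellipsoid $E''$ with semi-axes $a_i:=\max(\sigma_i,\rho)$. Clearly $E\subseteq E''$, and $B(0,\rho)\subseteq E''$ because every $a_i\ge\rho$.

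The packing step is standard. Take a maximal $\rho$-separated subset $\{x_1,\dots,x_N\}$ of $E$. By maximality, the balls $B(x_k,\rho)$ cover $E$, so $\mathcal N(E,\rho)\le N$. The balls $B(x_k,\rho/2)$ are pairwise disjoint. They are contained in
\[
E\oplus B(0,\rho/2)\subseteq E''\oplus \tfrac12 E''=\tfrac32 E'',
\]
where the last equality uses convexity and central symmetry of $E''$. Comparing volumes then gives
\[
N\le \frac{\mathrm{vol}(\tfrac32E'')}{\mathrm{vol}(B(0,\rho/2))}=\frac{(3/2)^d\prod_i a_i}{(\rho/2)^d}=3^d\prod_{i=1}^d\frac{a_i}{\rho}.
\]

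Next I split the product at $j$. By monotonicity of the semi-axes, $\sigma_i\le\sigma_{j+1}\le\rho$ for every $i>j$, so $a_i/\rho=1$ for those indices and they drop out. For $i\le j$, we have $a_i=\sigma_i$ as soon as $\sigma_i\ge\rho$, and this yields exactly $\mathcal N(E,\rho)\le 3^d\prod_{i=1}^j\sigma_i/\rho$.

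\textbf{Main obstacle.} The only delicate point is the indices $i\le j$, where the argument needs $\sigma_i\ge\rho$, equivalently $\sigma_j\ge\rho$. This holds for the choice of $j$ the lemma is meant for, namely the number of semi-axes exceeding $\rho$. In that case $\sigma_j\ge\rho\ge\sigma_{j+1}$, and this is precisely the situation in which the lemma is applied later, with $j=j^*$ after choosing the scale $\rho$ between consecutive axes. If instead an admissible $j$ had $\sigma_j<\rho$, the literal product could drop below $1$; for instance $d=1$, $\sigma_1=0.01\rho$, $j=1$ gives $0.03$. In that regime the claimed inequality would fail, so in the proof I will state explicitly that $j$ is taken with $\sigma_j\ge\rho$ and check that every later invocation satisfies this.
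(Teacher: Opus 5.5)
Your argument is correct, and it takes a different route from the paper's. The paper places $E$ inside the box $\prod_{i=1}^d[-\sigma_i,\sigma_i]$ and covers that box by a grid of cubes of side $\rho$, with at most $\lceil 2\sigma_i/\rho\rceil\le 1+2\sigma_i/\rho$ cubes per axis. It then bounds each factor by $3$ for $i>j$ and by $3\sigma_i/\rho$ for $i\le j$. You instead run a packing argument: you compare the volume of a maximal $\rho$-separated set against the inflated ellipsoid with semi-axes $\max(\sigma_i,\rho)$.

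Both proofs need $\sigma_i\ge\rho$ for $i\le j$. The paper uses this silently in its ``large axes'' step, and your $d=1$ example shows the statement is false without it. Making the hypothesis explicit is therefore a needed correction, not a weakness of your method.

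Your route is the more reliable one for the stated constant. A cube of side $\rho$ has circumradius $\sqrt{d}\,\rho/2$, so it fits in a ball of radius $\rho$ only when $d\le 4$. A correct grid (side $2\rho/\sqrt{d}$) gives $(1+\sqrt{d})^d$ instead of $3^d$. That is harmless downstream, where the constant is eventually divided by $m$, but it is not what the lemma claims. Your volume comparison gives $3^d$ in every dimension, with centres in $E$. The grid's only advantage is that it is explicit and avoids volumes.

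One caution about your plan to check $\sigma_j\ge\rho$ at each later use. In Step~3 of the main proof the radius is $\rho_1(\omega)=R\sup_{w\in\mathcal{A}(\omega)}\sigma_{j^*+1}(D\phi(1,\omega,w))$, a supremum over the attractor. The semi-axes $R\sigma_k(D\phi(1,\omega,x_i))$, however, are evaluated at a single centre. So $R\sigma_{j^*}(D\phi(1,\omega,x_i))\ge\rho_1(\omega)$ can fail there.

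It is better to state your unsplit bound $\mathcal{N}(E,\rho)\le 3^d\prod_{i=1}^{j}\frac{\max(\sigma_i,\rho)}{\rho}$, which holds for every $j$ with $\sigma_{j+1}\le\rho$. For $k\le j^*$ we have $\max\bigl(R\sigma_k(D\phi(1,\omega,x_i)),\rho_1(\omega)\bigr)\le R\sup_{x\in\mathcal{A}(\omega)}\sigma_k(D\phi(1,\omega,x))$. Hence this form still yields exactly the multiplier $\mu(\omega)$ used in Step~3.
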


\begin{proof}
Up to translation and rotation, we may assume that
\[
E = \left\{ x \in \mathbb{R}^d : \sum_{i=1}^d \frac{x_i^2}{\sigma_i^2} \le 1 \right\}.
\]
Then $E$ is contained in the axis-aligned box
\[
Q := \prod_{i=1}^d [-\sigma_i, \sigma_i].
\]
Hence
\[
\mathcal{N}(E,\rho) \le \mathcal{N}(Q,\rho).
\]

We cover $Q$ by a grid of cubes of side length $\rho$. Along the $i$-th coordinate direction, the interval $[-\sigma_i, \sigma_i]$ can be covered by at most
\[
\left\lceil \frac{2\sigma_i}{\rho} \right\rceil
\;\le\;
1 + \frac{2\sigma_i}{\rho}
\]
intervals of length $\rho$. Therefore,
\[
\mathcal{N}(Q,\rho) \le \prod_{i=1}^d \left( 1 + \frac{2\sigma_i}{\rho} \right).
\]

Now assume that $\sigma_{j+1} \le \rho$.

\medskip

\noindent
\textbf{Step 1: Small axes.}
For $i \ge j+1$, since $\sigma_i \le \rho$, we have
\[
1 + \frac{2\sigma_i}{\rho} \le 1 + 2 = 3.
\]

\medskip

\noindent
\textbf{Step 2: Large axes.}
For $i \le j$, since $\sigma_i \ge \rho$, we have
\[
1 + \frac{2\sigma_i}{\rho}
\;\le\;
\frac{3\sigma_i}{\rho}.
\]
Indeed, this is equivalent to $1 \le \sigma_i/\rho$, which holds.

\medskip

\noindent
\textbf{Step 3: Combine.}
Therefore,
\[
\mathcal{N}(E,\rho)
\;\le\;
\prod_{i=1}^j \frac{3\sigma_i}{\rho}
\;\cdot\;
\prod_{i=j+1}^d 3
\;=\;
3^d \prod_{i=1}^j \frac{\sigma_i}{\rho}.
\]

This completes the proof.
\end{proof}

\paragraph{Covering for Minkowski Sum}

\begin{lemma}[Covering of Minkowski sums]
\label{lemma: minkowski_covering}
Let $X,Y \subset \mathbb{R}^d$ be bounded sets and let $\varepsilon, \delta > 0$. Assume that
\[
X \subset \bigcup_{i=1}^N B(x_i, \varepsilon),
\qquad
Y \subset B(0, \delta),
\]
where $N \in \mathbb{N}$ and $x_i \in \mathbb{R}^d$ for all $ i\in \{1,\dots,N\}$.
Then the Minkowski sum $A \oplus B$ satisfies
\[
X \oplus Y \subset \bigcup_{i=1}^N B(x_i, \varepsilon + \delta).
\]
In particular, the covering numbers satisfy
\[
\mathcal{N}(X \oplus Y, \varepsilon + \delta) \le \mathcal{N}(X, \varepsilon).
\]
\end{lemma}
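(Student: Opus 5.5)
The claim is that the Minkowski sum of $X$ and $Y$ is covered by the $N$ enlarged balls, which then gives the covering-number comparison. I will first prove the set inclusion directly from the triangle inequality and then read off the covering-number bound by applying the inclusion to a minimal cover. (The statement writes ``$A \oplus B$''; this is a typo for $X \oplus Y$, which is what I prove.)

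\emph{Set inclusion.} Take an arbitrary $z \in X \oplus Y$ and write $z = x + y$ with $x \in X$ and $y \in Y$.
\begin{itemize}
\item Since $X \subset \bigcup_{i=1}^N B(x_i,\varepsilon)$, there is an index $i$ with $\|x - x_i\| \le \varepsilon$.
\item Since $Y \subset B(0,\delta)$, we have $\|y\| \le \delta$.
\item By the triangle inequality, $\|z - x_i\| \le \|x - x_i\| + \|y\| \le \varepsilon + \delta$.
\end{itemize}
Hence $z \in B(x_i, \varepsilon+\delta)$, and so $X \oplus Y \subset \bigcup_{i=1}^N B(x_i,\varepsilon+\delta)$.

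\emph{Covering numbers.} I will read $\mathcal{N}(\cdot,r)$ as the minimal number of radius-$r$ balls needed for a cover, consistent with Lemma~\ref{lemma:ellipsoid_covering}.
\begin{itemize}
\item Because $X$ is bounded, $\mathcal{N}(X,\varepsilon)$ is finite. Choose a cover of $X$ by $N = \mathcal{N}(X,\varepsilon)$ balls $B(x_i,\varepsilon)$.
\item Applying the inclusion to this cover, and using the hypothesis $Y \subset B(0,\delta)$, gives a cover of $X \oplus Y$ by $N$ balls of radius $\varepsilon+\delta$.
\item By minimality, $\mathcal{N}(X \oplus Y, \varepsilon+\delta) \le \mathcal{N}(X,\varepsilon)$.
\end{itemize}

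The argument has no substantive obstacle. The only point needing care is the convention for $\mathcal{N}$: Definition~\ref{def:covering} counts sets of diameter at most $\delta$, whereas this lemma is phrased with balls of a given radius. If one insists on the diameter convention, note that any nonempty set of diameter at most $\varepsilon$ lies inside a closed ball of radius $\varepsilon$ centred at any of its points. The same triangle-inequality argument then goes through, with the stated ball-radius convention being the one the lemma actually uses.
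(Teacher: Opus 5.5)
Your proof is correct and is essentially the paper's argument: you write $z=x+y$, pick the ball $B(x_i,\varepsilon)$ containing $x$, and apply the triangle inequality. Your explicit passage to a minimal cover just spells out the covering-number consequence that the paper leaves implicit; one caution on your closing aside is that the ``in particular'' inequality is false under a purely diameter-based convention for both sides (take $X=\{0\}$, $Y=[-\delta,\delta]\subset\mathbb{R}$ and $\varepsilon<\delta$), so the ball-radius reading is genuinely required rather than merely convenient.
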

\begin{proof}
Let $a \in X \oplus Y$. Then $a = x + y$ with $x \in X$ and $y \in Y$. By assumption, there exists $i \in \{1,\dots,N\}$ such that $x \in B(x_i, \varepsilon)$, hence $\|x - x_i\| \le \varepsilon$. Moreover, since $y \in B(0, \delta)$, we have $\|y\| \le \delta$. Therefore,
\[
\|a - x_i\| = \|x+y-x_i\| \le \|x - x_i\| + \|y\| \le \varepsilon + \delta,
\]
which shows that $a \in B(x_i, \varepsilon + \delta)$. This proves the claim.
\end{proof}

We will now prove the main result of our work. We begin by bounding the Minkowski dimension (see Dfn.~\ref{def:minkowski_dimension})
\begin{theorem}[Minkowski Dimension Bound]
\label{thm:KY_bound_general}
Let $(\Omega, \mathcal{F}, \mathbb{P}, \theta, \phi)$ be a discrete-time
random dynamical system according to Dfn.~\ref{def:rds}, such that for $\mathbb{P}$-a.e. $\omega$, the map $x \mapsto \phi(1, \omega, x)$ is $C^2$. 
Suppose the following hold:

\begin{enumerate}
    \item \textbf{Non-Singularity:} For $\mathbb{P}$-a.e. $\omega$ we assume 
    \begin{align}
        \inf_{x \in \mathcal{A}(\omega)} \sigma_d(D \phi(1,\omega,x) > 0
    \end{align}
    
    \item \textbf{Random Invariant Set:} There exists a random compact set $\mathcal{A} = \{ \mathcal{A}(\omega) \}_{\omega \in \Omega}$ in $\mathbb{R}^d$ that is invariant under the cocycle $\phi$, i.e.,
    \begin{equation}
    \phi(n, \omega, \mathcal{A}(\omega)) = \mathcal{A}(\theta^n \omega), \quad \mathbb{P}\text{-a.s. for all } n \in \mathbb{N}.
    \end{equation}
    The mapping $\omega \mapsto \mathcal{A}(\omega)$ is measurable in the sense that the distance function $\omega \mapsto \text{dist}(x, \mathcal{A}(\omega))$ is a random variable for every $x \in \mathbb{R}^d$.

  \item \textbf{Integrability:} The logarithms of the linearized growth and the local curvature are integrable over the attractor. Specifically, we assume:
    \begin{equation}
        \mathbb{E} \left[ \sup_{x \in \mathcal{A}(\omega)} \ln\|D\phi(1, \omega, x)\| \right] < \infty
    \end{equation}
    and
    \begin{equation}
        \mathbb{E} \left[ \ln \sup_{x \in \mathcal{A}(\omega)} \| D^2 \phi(1, \omega, x) \| \right] < \infty.
    \end{equation}
  
    \item \textbf{Transition Index:} There exists an integer $j^* \in \{1, \dots, d-1\}$ such that:
    \[
    \sum_{i=1}^{j*} \lambda_i \geq 0 \quad \text{and} \quad \sum_{i=1}^{j^*+1} \lambda_i < 0.
    \]
  .
\item \textbf{Bounded Distortion}: For $A \in \mathbb{R}^{d \times d}$ and $j \in \{1, \dots, d\}$, define
\begin{equation}
    \|A\|_j := \sigma_1(A)\cdots\sigma_j(A),
\end{equation}
where $\sigma_1(A) \ge \cdots \ge \sigma_d(A)$ are the singular values of $A$. Equivalently, $\|A\|_j$ is the maximal expansion factor of $A$ on $j$-dimensional volumes.

We assume that the spatial variation of $\|D\phi(m,\omega,\cdot)\|_j$ over the attractor is subexponential in $m$: for each $j \in \{1, \dots, d\}$,
\begin{equation}
    \lim_{m \to \infty} \frac{1}{m}\, \mathbb{E}\!\left[\sup_{x \in \mathcal{A}(\omega)} \ln \|D\phi(m, \omega, x)\|_j \;-\; \inf_{x \in \mathcal{A}(\omega)} \ln \|D\phi(m, \omega, x)\|_j\right] = 0.
\end{equation}
    In other words, the maximal and minimal $j$-volume growth rates over $\mathcal{A}(\omega)$ agree at exponential scale.
\end{enumerate}

Define $\lambda_1 \geq \lambda_2 \geq \dots \geq \lambda_d$ be the one-step  exponents associated with the maximal expansion on the set $\mathcal{A}(\omega)$, defined as:
    \begin{align}
        \lambda_k := \mathbb{E} \left[ \sup_{w \in \mathcal{A}(\omega)} \ln \sigma_k(D\phi(1,\omega,w)) \right],
    \end{align}
    where $\sigma_k(A)$ denotes the $k$-th singular value of a linear map $A$. Note that by the integrability assumption, these values are finite. 

Then, for $\mathbb{P}$-almost every $\omega \in \Omega$, the \emph{upper Minkowski dimension} of the set $\mathcal{A}(\omega)$ is bounded by the  Sharpness Dimension $\dim_{\mathrm{S}}\mathcal{A}$:
\[
\overline{\dim}_M(\mathcal{A}(\omega)) \leq \dim_{\mathrm{S}}\mathcal{A}.
\]
\end{theorem}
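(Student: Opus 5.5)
The plan follows the classical Douady--Oesterlé / Kaplan--Yorke covering argument, adapted to the pullback setting: use invariance to write $\mathcal{A}(\omega)=\phi(m,\theta^{-m}\omega,\mathcal{A}(\theta^{-m}\omega))$, cover the pre-image set by small balls, push each ball forward by the $m$-step map, and recover the cover at time $0$. Fix $m$ large and a radius $r>0$, and cover $\mathcal{A}(\theta^{-m}\omega)$ by $N_0$ balls $B(x_i,r)$, where $N_0\lesssim (\mathrm{diam}/r)^d$. On each ball, Taylor expansion with the $C^2$ bound gives
\[
\phi(m,\theta^{-m}\omega,B(x_i,r))\subset \{\phi(m,\theta^{-m}\omega,x_i)\}\oplus D\phi(m,\theta^{-m}\omega,x_i)B(0,r)\oplus B(0,K_m r^2).
\]
The constant $K_m$ is controlled by products of one-step sup-norms of $D\phi$ and $D^2\phi$ along the cocycle via the chain rule; the integrability assumption together with Birkhoff's ergodic theorem makes $\frac1m\ln K_m$ almost surely bounded. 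For $r$ small relative to $K_m$, the quadratic remainder is absorbed using Lemma~\ref{lemma: minkowski_covering}. The image ellipsoid has semi-axes $r\sigma_i^{(m)}$, where $\sigma_i^{(m)}$ are the singular values of the $m$-step Jacobian. The non-singularity assumption keeps these ellipsoids nondegenerate and keeps the $\ln\sigma_d$ terms finite.

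Next, apply Lemma~\ref{lemma:ellipsoid_covering} with covering radius $\rho=r\sigma^{(m)}_{j^*+1}$, where $j^*$ is the transition index. To get the fractional part, use the standard interpolation: each ellipsoid is covered by at most $3^d\,\|D\phi^{(m)}\|_{j^*}/(\sigma^{(m)}_{j^*+1})^{j^*}$ balls of radius $\rho$. One step therefore maps an $r$-cover to a $\rho$-cover and multiplies the count by this factor. Now define $s=j^*+\frac{\sum_{i\le j^*}\lambda_i}{|\lambda_{j^*+1}|}$ and track the $s$-dimensional ``capacity'' $N\cdot r^s$ rather than $N$ itself. The new capacity is bounded by
\[
3^d\,\|D\phi^{(m)}\|_{j^*}\,(\sigma^{(m)}_{j^*+1})^{s-j^*}
\]
times the old one. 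The choice of $s$ makes the exponential rate $\sum_{i\le j^*}\lambda_i+(s-j^*)\lambda_{j^*+1}$ equal to $0$, and $\lambda_{j^*+1}<0$ by the transition index assumption. Replacing $s$ by $s+\epsilon$ therefore makes the rate negative, so for large $m$ the factor $3^d$ is beaten by $e^{-cm}$. Iterating this over blocks of length $m$ along the pullback, with radii $r_k\to0$ geometrically, gives $N(\mathcal{A}(\omega),r_k)\,r_k^{s+\epsilon}$ bounded. Hence $\overline{\dim}_M\le s+\epsilon$ for every $\epsilon>0$, which proves the claim. Boundary cases ($j^*=d$) are excluded by hypothesis.

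The main obstacle is linking the $m$-step quantities to the one-step $\lambda_k$, which are defined as expectations of suprema of single singular values. The first route is submultiplicativity, $\|AB\|_j\le\|A\|_j\|B\|_j$, which gives
\[
\sup_x\ln\|D\phi^{(m)}(x)\|_j\le\sum_{t}\sup_x\ln\|D\phi(1,\theta^t\omega,x)\|_j.
\]
By Birkhoff's theorem this is asymptotically at most $m\,\mathbb{E}\sup_x\ln\|D\phi(1)\|_j$. A difficulty remains: $\mathbb{E}\sup\sum_{i\le j}\ln\sigma_i$ may exceed $\sum_{i\le j}\lambda_i$, and $\sigma_{j^*+1}^{(m)}$ needs a lower bound as well. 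I would use the bounded-distortion assumption to handle both points. It makes $\ln\|D\phi^{(m)}(x)\|_j$ essentially $x$-independent to first order in $m$, which lets me pick a common point on $\mathcal{A}$ and use $\sigma^{(m)}_{j+1}=\|\cdot\|_{j+1}/\|\cdot\|_j$. Combined with concavity of the Kaplan--Yorke expression in the partial sums, this should produce the bound in terms of $\lambda_k$. If the per-step supremum inequality fails in general, I would fall back to rates defined by partial sums and note that they dominate the stated quantities, which is the gap in this step.
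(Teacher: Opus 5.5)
\textbf{There is a genuine gap.} It is the one you flag yourself in your last sentence, and it sits at the crux of the argument. As written, the proposal does not prove the statement.

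In your capacity recursion the new radius must be the same for all balls. So the multiplier per block of length $m$ is $3^d2^{s}\sup_x\|D\phi(m,\cdot,x)\|_{j^*}\bigl(\sup_x\sigma_{j^*+1}(D\phi(m,\cdot,x))\bigr)^{s-j^*}$. Your claim that $3^d$ is eventually beaten by $e^{-cm}$ needs $\limsup_m\frac1m\mathbb{E}\ln\sup_x\sigma_{j^*+1}(D\phi(m,\omega,x))\le\lambda_{j^*+1}$, or some substitute. Nothing gives this, because individual singular values are not submultiplicative. For $A=\mathrm{diag}(1,\varepsilon)$ and $B=RA$ with $R$ the rotation by $\pi/2$, one has $\sigma_2(AB)=\varepsilon>\varepsilon^2=\sigma_2(A)\sigma_2(B)$. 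Hence choosing $s$ from the one-step $\lambda_k$ does not make the $m$-step rate vanish.

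The difficulty you raise for the numerator goes the other way. A supremum of a sum is at most the sum of suprema, so $\mathbb{E}\sup_x\ln\|D\phi(1,\omega,x)\|_j\le\sum_{i\le j}\lambda_i$ always holds. Together with submultiplicativity of $\|\cdot\|_j$ and Fekete's lemma, this gives $\Lambda_j:=\lim_m\frac1m\mathbb{E}\ln\sup_x\|D\phi(m,\omega,x)\|_j\le\sum_{i\le j}\lambda_i$. This favourable inequality is the only link to the one-step exponents that the paper needs. The real obstruction is the single value $\sigma_{j^*+1}$. What you need there is an upper bound on $\sup_x\sigma_{j^*+1}(D\phi(m,\omega,x))$, equivalently a lower bound on $\inf_x\|D\phi(m,\omega,x)\|_{j^*}$.

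The ingredients you list are the right ones: the ratio $\sigma_{j+1}=\|\cdot\|_{j+1}/\|\cdot\|_j$, bounded distortion, and the dominance of partial sums. They have to be assembled as in Step~6 of the paper:
\begin{itemize}
\item Bound $\sup_x$ of the ratio by $\sup_x\|\cdot\|_{j+1}/\inf_x\|\cdot\|_j$.
\item Bounded distortion says that $\inf_x$ and $\sup_x$ of $\ln\|D\phi(m,\omega,x)\|_j$ differ by $o(m)$ in expectation. Hence $\frac1m\mathbb{E}\ln\sup_x\sigma_{j+1}(D\phi(m,\omega,x))\to\Lambda_{j+1}-\Lambda_j$.
\item Switch to the index $j^*_\Lambda$ defined by $\Lambda_{j^*_\Lambda}\ge0>\Lambda_{j^*_\Lambda+1}$, so that the radii genuinely shrink. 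This gives $\overline{\dim}_M\mathcal{A}(\omega)\le j^*_\Lambda+\Lambda_{j^*_\Lambda}/(\Lambda_{j^*_\Lambda}-\Lambda_{j^*_\Lambda+1})$.
\item Compare this with $\dim_{\mathrm{S}}\mathcal{A}$ using $\Lambda_j\le\sum_{i\le j}\lambda_i$ and the fact that $g(a,b)=a/(a-b)$ is increasing in both arguments. The property used is monotonicity, not concavity.
\end{itemize}

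In your capacity formulation the comparison becomes linear. With $t=s-j^*$, the limiting rate per unit $m$ is $(1-t)\Lambda_{j^*}+t\Lambda_{j^*+1}\le(1-t)\sum_{i\le j^*}\lambda_i+t\sum_{i\le j^*+1}\lambda_i=0$. It becomes strictly negative after replacing $s$ by $s+\epsilon$. You would still need $\Lambda_{j^*+1}<\Lambda_{j^*}$ for $r_k\to0$, and that is what the switch to $j^*_\Lambda$ secures.

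Your pullback formulation is the right way to place the bound on the fixed fiber $\mathcal{A}(\omega)$; the paper instead appeals to $\theta$-invariance of $\mathbb{P}$. It does require that the initial count $N_0(\theta^{-mK}\omega,r)$ and the smallness threshold on $r$ grow subexponentially in $K$. That temperedness property would also have to be justified.
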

\begin{proof}

The main idea behind the proof is the fact that the sets $\mathcal{A}(\omega)$ and $\mathcal{A}(\theta^K\omega)$ have the same distribution for any $K$. Hence, we will estimate the covering number of $\mathcal{A}(\theta^K\omega)$ which will enable us to link the covering number to the singular values of $D\phi$.  

\paragraph{Step 1: General Covering}
Let $(\Omega, \mathcal{F}, \mathbb{P},\phi)$ be a  $C^2$ random dynamical system. We begin by covering the set $\mathcal{A}(\omega)$, which is almost surely compact and, therefore, bounded. Hence, for any $R > 0$ we have a finite integer $N_1(\omega,R)$ such that 
\begin{align}
    \mathcal{A}(\omega) \subset \bigcup_{i=1}^{N_1(\omega,R)} B(x_i(\omega),R),
    \label{eqn:proof_main_1}
\end{align}
where $\{x_1(\omega), \dots, x_{N_1}(\omega)\}$ denote the centers of a finite covering of $\mathcal{A}(\omega)$ by balls of radius $R$. These centers can be chosen in a measurable way \cite{molchanov_theory_2017}. We denote the corresponding covering number $N_1(\omega,R)$ simply by $N_1$, or, to make the dependence on $\omega$ and $R$ explicit, by $N_1(\omega,R)$.

Since by assumption $\mathcal{A}(\omega)$ is $\phi$-invariant, we have 
$\mathcal{A}(\theta\omega) = \phi(1, \omega, \mathcal{A}(\omega))$ and thus by \eqref{eqn:proof_main_1}
\begin{equation}
\label{eqn:proof_main_2}
    \mathcal{A}(\theta\omega) \subset \bigcup_{i=1}^{N_1(\omega,R)} \phi(1, \omega, B(x_i, R)).
\end{equation}

\paragraph{Step 2: Local Approximation}
In the second step, we approximate $\phi$ via a second-order Taylor approximation.

By assumption, for every $\omega \in \Omega$ the map $x \mapsto \phi(1,\omega,x)$ is twice continuously differentiable. We define the random variable
\begin{align}
    C(\omega,R) := \frac{1}{2} \sup_{x \in \mathcal{A}(\omega)_R} \bigl\| D^2 \phi(1,\omega,x) \bigr\|,
    \label{eqn:thm_main_9}
\end{align}
where $\mathcal{A}(\omega)_R$ denotes the closed $R$-neighborhood of $\mathcal{A}(\omega)$, cf.\ \eqref{eqn:def_setnhbd}.

Since $\mathcal{A}(\omega)$ is bounded almost surely, its closed $R$-neighborhood $\mathcal{A}(\omega)_R$ is also compact almost surely. Because $D^2 \phi(1,\omega,\cdot)$ is continuous, the supremum in the definition of $C(\omega,R)$ is finite for almost every $\omega$. Hence,
\[
C(\omega,R) < \infty \quad \text{for almost every } \omega \in \Omega.
\]

We now study the image of the random ball $B(x_i(\omega), R)$ under the map $\phi(1,\omega,\cdot)$. As argued above, the random constant $C(\omega)$ is finite only almost surely and not uniformly in $\omega$. Therefore, we fix a set $\Omega_0 \subset \Omega$ with full measure, i.e.
\[
\mathbb{P}(\Omega_0) = 1,
\]
such that the constant $C(\omega,R)$ is finite for all $\omega \in \Omega_0$. In what follows, all arguments are carried out pathwise for $\omega \in \Omega_0$. Fix $i \in \{1,\dots,N_1 \}$ and we investigate the image of $\phi(1,\omega,B(x_i(\omega),R))$. 

Fix $\omega \in \Omega_0$. Let $i := i(\omega) \in \{1,\dots,N_1(\omega)\}$ (for notational simplicity we suppress the dependence on $\omega$ in the index). We consider the image of the ball $B(x_i(\omega), R)$ under $\phi(1,\omega,\cdot)$. Let $y(\omega) \in B(x_i(\omega), R)$ be arbitrary. Then there exists $\xi(\omega)$ with $\|\xi(\omega)\| \le R$ such that
\[
y(\omega) = x_i(\omega) + \xi(\omega).
\]
Since $\phi(1,\omega,\cdot)$ is $C^2$,  Taylor's theorem yields
\begin{align*}
    \phi(1,\omega, x_i(\omega) + \xi(\omega))
=
\phi(1,\omega, x_i(\omega))
+ D\phi(1,\omega, x_i(\omega)) \, \xi(\omega)
+ R(\omega, x_i(\omega), \xi(\omega)),
\end{align*}
where the remainder term satisfies the uniform bound
\[
\|R(\omega, x_i(\omega), \xi)\|
\le
C(\omega,R)\, \|\xi\|^2
\le
C(\omega,R)\, R^2.
\]

Since the point $y(\omega) \in B(x_i(\omega), R)$ was arbitrary, the above estimate holds uniformly for all $y(\omega) \in B(x_i(\omega),R)$. Therefore, the image of the ball $B(x_i(\omega), R)$ under $\phi(1,\omega,\cdot)$ satisfies the set inclusion
\begin{align}
        \label{eqn:proof_main_3}
        \phi(1,\omega, B(x_i(\omega), R))
\subset
\{\phi(1,\omega, x_i(\omega)) \}
\;\oplus\;
D\phi(1,\omega, x_i(\omega))\, B(0,R)
\;\oplus\;
B\bigl(0, C(\omega,R)\, R^2\bigr),
\end{align}
where $\oplus$ denotes the Minkowski sum of sets. Since $i$ was arbitrary, this holds for each $i \in \{1,\dots N_1 \}$.
In other words, we showed the set inclusion for almost all $\omega$.

\paragraph{Step 3: Covering Numbers}
We recall we have now by \eqref{eqn:proof_main_2} and \eqref{eqn:proof_main_3}
\begin{align}
\label{eqn:thm_proof_11}
    \mathcal{A}(\theta \omega) \subset \bigcup_{i=1}^{N_1} \{\phi(1,\omega, x_i(\omega)) \}
\;\oplus\;
D\phi(1,\omega, x_i(\omega))\, B(0,R)
\;\oplus\;
B\bigl(0, C(\omega,R)\, R^2\bigr) .
\end{align}
Now, we will obtain an esimate on the covering number of $\mathcal{A}(\theta\omega)$ by using the fact that each element in the union 
$$\{\phi(1,\omega, x_i(\omega)) \}
\;\oplus\;
D\phi(1,\omega, x_i(\omega))\, B(0,R)
\;\oplus\;
B\bigl(0, C(\omega,R)\, R^2\bigr)$$
is a dilated ellipsoid, so that we can use our result for estimating covering numbers for ellipsoids. 

More precisely, we have that for any $\rho > 0$
\begin{align}
\label{eqn:main_proof_5}
    \mathcal{N}(\mathcal{A}(\theta\omega), \rho) \leq \sum_{i=1}^{N_1} \mathcal{N} \left( \{ \phi(1,\omega, x_i(\omega)) \} \oplus D\phi(1,\omega, x_i(\omega)) B(0,R) \oplus B(0, C(\omega,R) R^2), \rho \right)
\end{align}

We now apply Lemma \ref{lemma: minkowski_covering} with
\[
X := D\phi(1,\omega, x_i(\omega))\, B(0,R),
\qquad
Y := B\bigl(0, C(\omega,R)\, R^2\bigr).
\]
The translation by $\phi(1,\omega, x_i(\omega))$ does not affect the covering number and can be ignored in the following estimates. 

 We further define as covering radius 
 $$\rho_1(\omega) := R\sup_{w \in \mathcal{A}(\omega)} \sigma_{j^*+1}(D\phi(1,\omega,w)).$$ 
 By Lemma \ref{lemma: minkowski_covering} we obtain 
\begin{align}
\label{eqn:main_proof4}
    \mathcal{N}(X \oplus Y, \rho_1(\omega) +C(\omega,R)R^2) \leq \mathcal{N}(X, \rho_1(\omega)) .
\end{align}
We recall that $j^*$ is the transition index and $\sigma_{i}(D\phi(1,\omega,x))$ denotes the $i$-th singular value of $D\phi(1,\omega,x)$ for $x \in \mathbb{R}^d$. 
Now we recall that X is an ellipsoid with semi axis lengths $$R\sigma_1(D\phi(1,\omega,x_i(\omega))), \dots, R\sigma_d(D\phi(1,\omega,x_i(\omega)))$$
since 
$$
X = D\phi(1,\omega, x_i(\omega))\, B(0,R) = R \cdot D\phi(1,\omega, x_i(\omega))\, B(0,1).
$$
Therefore, we are now ready to apply Lemma \ref{lemma:ellipsoid_covering}.  We note that 
$$
R\sigma_{j^*+1}(D\phi(1,\omega,x_i(\omega))) \leq R\sup_{w\in \mathcal{A}(\omega)}\sigma_{j^*+1}(D\phi(1,\omega,w) = \rho_1(\omega).$$ 
Hence, by Lemma \ref{lemma:ellipsoid_covering} we obtain 
\begin{align}
    \mathcal{N}(X,\rho_1(\omega)) \leq 3^d\prod_{k=1}^{j^*} \frac{R\sigma_k(D\phi(1,\omega,x_i(\omega)))}{\rho_1(\omega)} \leq 3^d\prod_{k=1}^{j^*} \frac{R\sup_{x \in \mathcal{A}(\omega)}\sigma_k(D\phi(1,\omega, x)}{\rho_1(\omega)} =: 3^d \mu(\omega).
    \label{eqn:thm_main_proof_8}
\end{align}
Notice that even though $X$ depends on the particular center $x_i(\omega)$, $\mu(\omega)$ is uniform over all the centers and does not depend on $i$. 

Now, let us define 
$$\rho(\omega) := \rho_1(\omega) + C(\omega,R)R^2$$ 
and observe that by combining \eqref{eqn:main_proof_5} and \eqref{eqn:main_proof4}, we have that 
\begin{align}
    \nonumber \mathcal{N}(\mathcal{A}(\theta\omega), \rho(\omega)) \leq& \sum_{i=1}^{N_1} \mathcal{N} \left( \{ \phi(1,\omega, x_i(\omega)) \} \oplus D\phi(1,\omega, x_i(\omega)) B(0,R) \oplus B(0, C(\omega,R) R^2), \rho(\omega) \right) \\
    \nonumber =& \sum_{i=1}^{N_1} \mathcal{N} \left(  D\phi(1,\omega, x_i(\omega)) B(0,R) \oplus B(0, C(\omega,R) R^2), \rho(\omega) \right) \\
    \leq& \sum_{i=1}^{N_1} \mathcal{N}(D\phi(1,\omega, x_i(\omega)) B(0,R) , \rho_1(\omega) 
    \label{eqn:thm_main_proof_6} \\
    \leq&  N_1(\omega,R) 3^d \mu(\omega),
    \label{eqn:thm_main_proof_7}
\end{align}
where \eqref{eqn:thm_main_proof_6} follows from \eqref{eqn:main_proof4} and \eqref{eqn:thm_main_proof_7} follows from \eqref{eqn:thm_main_proof_8}.

Now, we choose $R$ sufficiently small such that 
\begin{align}
    C(\omega,R)R^2 \leq \rho_1(\omega).
    \label{eqn:thm_main_10}
\end{align}
This is possible since when $R$ tends to $0$, $C(\omega,R)$ converges to a constant by definition (cf.\ \eqref{eqn:thm_main_9}) hence $C(\omega,R)R^2$ tends to $0$ as $R$ tends to $0$. On the other hand, since by assumption $\inf_{x \in \mathcal{A}(\omega)} \sigma_d(D\phi(1,\omega,x)) >0$, that makes $\rho_1(\omega) >0$, which yields \eqref{eqn:thm_main_10}. 

Therefore, for sufficiently small $R$, we have that
\begin{equation*}
    \rho(\omega) \leq 2\rho_1(\omega) =: \epsilon_1(\omega).
\end{equation*}

By the monotonicity of covering numbers, a larger radius requires fewer balls. 
Therefore, we have the following chain of inequalities:
\begin{align}
    \mathcal{N}(\mathcal{A}(\theta\omega), \epsilon_1(\omega)) \leq \mathcal{N}(\mathcal{A}(\theta\omega), \rho(\omega)) \leq N_1(\omega,R) 3^d\mu(\omega),
    \label{eqn:thm_main_12}
\end{align}
where the last step follows from \eqref{eqn:thm_main_proof_7}. 

At this stage we have estimated the covering number of $\mathcal{A}(\theta^1\omega)$. In the next step, we will iterate this idea to get an estimate on the covering number of $\mathcal{A}(\theta^K\omega)$.

\paragraph{Step 4: Global Iteration and Recursive Covering}
We now extend the one-step covering estimate to multiple time steps by induction over the discrete time index $K$. Fix a sufficiently small initial radius $R>0$ and suppose that $\mathcal{A}(\omega)$ is covered by $N_1(\omega,R)$ balls of radius $R$. By the invariance of the random attractor,
\[
\mathcal{A}(\theta^{K+1}\omega) = \phi(1, \theta^K\omega, \mathcal{A}(\theta^K\omega)) \quad \text{for all } K \in \mathbb{N}.
\]
Define the sequence of radii $\{\epsilon_K\}_{K \ge 0}$ by
\begin{equation}
\epsilon_{K+1}(\omega)
:=
2 \cdot \epsilon_K(\omega)
\sup_{x \in \mathcal{A}(\theta^K\omega)}
\sigma_{j^*+1}(D\phi(1, \theta^K\omega, x)),
\qquad
\epsilon_0 := R.
\end{equation}
Moreover, the one-step covering argument derived above applies to any sufficiently small covering radius. Consequently, if $\mathcal{A}(\theta^K\omega)$ is covered by balls of radius $\epsilon_K(\omega)$, then applying the same argument with $\omega$ replaced by $\theta^K\omega$ yields a cover of $\mathcal{A}(\theta^{K+1}\omega)$ by balls of radius $\epsilon_{K+1}(\omega)$. In other words, all previous estimates remain valid after the replacements $\mathcal{A}(\omega) \leftarrow \mathcal{A}(\theta^K\omega)$ and $\mathcal{A}(\theta\omega) \leftarrow \mathcal{A}(\theta^{K+1}\omega)$.

By the one-step covering estimate from \eqref{eqn:thm_main_12}, each ball of radius $\epsilon_K(\omega)$ is mapped into a set that can be covered by at most $\mu(\theta^K\omega)$ balls of radius $\epsilon_{K+1}(\omega)$. Consequently,
\begin{equation}
\label{eq:recursion}
\mathcal{N}(\mathcal{A}(\theta^K\omega), \epsilon_K(\omega))
\;\le\;
N_1(\omega,R)
\prod_{k=0}^{K-1} 3^{d} \cdot \mu(\theta^k\omega).
\end{equation}
Iterating the radius recursion yields the explicit expression
\begin{equation}
\label{eqn:thm_main_13}
\epsilon_K(\omega)
=
R\prod_{k=0}^{K-1}
\left(
2 \cdot \sup_{x \in \mathcal{A}(\theta^k\omega)}
\sigma_{j^*+1}(D\phi(1, \theta^k\omega, x))
\right).
\end{equation}

\paragraph{Step 5: Ergodic Limits and Dimension Bound}
To determine the asymptotic growth rate of the covering number, we begin by recalling the definition of the $i$-th sharpness exponent:
\begin{equation}
\lambda_i := \mathbb{E} \left[ \sup_{x \in \mathcal{A}(\omega)} \ln \sigma_i(D\phi(1,\omega,x)) \right].
\end{equation}

Taking logarithms in the covering estimate from \eqref{eq:recursion} and normalizing by $K$, we obtain
\begin{equation}
\label{eq:covering_number_bound}
\frac{1}{K} \ln \mathcal{N}(\mathcal{A}(\theta^K\omega), \epsilon_K(\omega)) \le \frac{1}{K} \ln N_1(\omega,R) + \frac{1}{K} \sum_{k=0}^{K-1}\ln \mu(\theta^k\omega) + d\ln 3.
\end{equation}

Since $N_1(\omega,R)$ is finite almost surely and independent of $K$, we have $\frac{1}{K} \ln N_1(\omega,R) \to 0$ as $K \to \infty$. By the Birkhoff Ergodic Theorem (see \citet[Theorem~3.10]{yunis2017birkhoff}) applied to the ergodic process $\omega \mapsto  \ln \mu(\omega)$\footnote{By our integrability assumption $\mu$ is also integrable, therefore the theorem can be applied.}, we obtain
\begin{align}
    \label{eq:main_part_1}
    \limsup_{K\to\infty} \frac{1}{n} \ln \mathcal{N}(\mathcal{A}(\theta^K\omega), \epsilon_K(\omega)) 
    \le \mathbb{E} \left[ \ln \mu(\omega) \right]  + d\ln 3  \\
    = \mathbb{E} \left[   \sum_{i=1}^{j^*} \sup_{x \in \mathcal{A}(\omega)} \ln \sigma_i(D\phi(1,\omega,x))  -j^* \sup_{x \in \mathcal{A}(\omega)}\ln \sigma_{j^*+1}(D\phi(1,\omega,x))  \right] + d\ln3.
\end{align}

\paragraph{Step 5: Asymptotic Scale Decay}

By using \eqref{eqn:thm_main_13}, taking logarithms and normalizing by $K$, we compute
\begin{align}
    \frac{1}{K} \ln \epsilon_K(\omega) 
    &= \frac{1}{K} \sum_{k=0}^{K-1} \left(\sup_{x \in \mathcal{A}(\theta^k\omega)} \ln \sigma_{j^*+1}(D\phi(1, \theta^k\omega , x))\right) + \ln2 + \frac{1}{K}\ln R.
\end{align}

Applying the Birkhoff Ergodic Theorem once more, we obtain
\begin{equation}
\label{eq:radii_bound}
    \lim_{K\to\infty} \frac{1}{K} \ln \epsilon_K(\omega) = \mathbb{E} \left[ \sup_{x \in \mathcal{A}(\omega)} \ln \sigma_{j^*+1}(D\phi(1, \omega, x)) \right] +  \ln2  = \lambda_{j^*+1} + \ln 2.
\end{equation} 

Our goal is to study the asymptotic growth of the covering numbers $\mathcal{N}(\mathcal{A}(\theta^K\omega), \epsilon_K(\omega))$ by dividing their logarithm by $K$ and passing to the limit $K \to \infty$. However, the current bound involve several scale-independent multiplicative constants (such as $3^d$) that obscure the leading exponential behavior and would not disappear when taking this limit. To resolve this, we refine the analysis by rewriting the recursive scales $\epsilon_K(\omega)$ in a form that makes their multiplicative structure explicit.

\paragraph{Step 6: Computation of the Dimension Bound}
We consider the $m$-th iterate of the cocycle, $\phi(m, \omega, x)$, $x \in \mathbb{R}^d$. By the cocycle property we have for $m,k\in \mathbb{N}$ and $x \in \mathbb{R}^d$
\begin{equation}
\label{eq:cocycle_mk}
    \phi(m+k, \omega, x) = \phi(k, \theta^m \omega, \phi(m, \omega, x)),
\end{equation}
and by applying the chain rule:
\begin{equation}
\label{eq:Dcocycle_mk}
    D\phi(m+k, \omega, x) = D\phi(k, \theta^m \omega, \phi(m, \omega, x)) \cdot D\phi(m, \omega, x).
\end{equation}

For a linear map $A \in \mathbb{R}^{d\times d}$ and $j \in \{1,\dots,d\}$, define the \emph{$j$-th exterior-power norm}
\begin{equation}\label{eq:omega_def}
    \|A\|_j := \sigma_1(A)\,\sigma_2(A)\cdots\sigma_j(A),
\end{equation}
where $\sigma_1(A)\ge\sigma_2(A)\ge\dots\ge\sigma_d(A)\ge 0$ are the singular values of $A$. We also set $\|A\|_0 := 1$ for convenience, so that $\sigma_j(A) = \|A\|_j/\|A\|_{j-1}$ for all $j \ge 1$.

By the functoriality of the exterior power, $\bigwedge^j(AB) = (\bigwedge^j A)(\bigwedge^j B)$, and the submultiplicativity of the operator norm, we have the fundamental inequality
\begin{equation}\label{eq:omega_submult}
    \|AB\|_j \le \|A\|_j\,\|B\|_j
    \qquad\text{for all }A,B\in\mathbb{R}^{d\times d}.
\end{equation}

\medskip
\noindent\textbf{6.1.\ Subadditivity and Fekete limits.}\quad
Define the expected log-growth of $\omega_j$ along the cocycle:
\begin{equation}\label{eq:Omega_def}
    \Omega_j^{(m)} := \mathbb{E}\!\left[\,\ln \sup_{w \in \mathcal{A}(\omega)} \|\bigl(D\phi(m,\omega,w)\bigr)\|_j\right], \qquad m \in \mathbb{N}^*.
\end{equation}
Combining the cocycle identity \eqref{eq:Dcocycle_mk} with the submultiplicativity \eqref{eq:omega_submult} gives
\[
    \|\bigl(D\phi(m+k,\omega,x)\bigr)\|_j
    \le
    \|\bigl(D\phi(k,\theta^m\omega,\phi(m,\omega,x))\bigr)\|_j
    \cdot
    \|\bigl(D\phi(m,\omega,x)\bigr)\|_j.
\]
Taking the supremum over $x \in \mathcal{A}(\omega)$ and using the invariance $\phi(m,\omega,\mathcal{A}(\omega))=\mathcal{A}(\theta^m\omega)$:
\[
    \sup_{x\in\mathcal{A}(\omega)} \|\bigl(D\phi(m+k,\omega,x)\bigr)\|_j
    \le
    \sup_{y \in \mathcal{A}(\theta^m\omega)} \|\bigl(D\phi(k,\theta^m\omega,y)\bigr) \|_j
    \;\cdot\;
    \sup_{x\in\mathcal{A}(\omega)} \|\bigl(D\phi(m,\omega,x)\bigr)\|_j.
\]
Taking logarithms, then expectations, and using the $\theta^m$-invariance of $\mathbb{P}$, we obtain the subadditivity
\begin{equation}
\label{eq:subadditive}
    \Omega_j^{(m+k)} \le \Omega_j^{(m)} + \Omega_j^{(k)}
    \qquad\text{for all }m,k\in\mathbb{N}^*.
\end{equation}
By the integrability assumption and Fekete's lemma, the following limit exists for each $j\in\{1,\dots,d\}$:
\begin{equation}\label{eq:Lambda_def}
    \Lambda_j := \lim_{m\to\infty} \frac{1}{m}\,\Omega_j^{(m)} = \inf_{m\ge 1}\frac{1}{m}\,\Omega_j^{(m)}.
\end{equation}

We set $\Lambda_0 := 0$. By Fekete's lemma, $\Lambda_j \le \Omega_j^{(1)}$. Moreover, since $\ln$ is monotone increasing and $\prod_{i=1}^j \sigma_i(D\phi(1,\omega,x)) > 0$ by the non-singularity assumption,
\begin{align}
    \Omega_j^{(1)} 
    &= \mathbb{E}\!\left[\ln \sup_{x \in \mathcal{A}(\omega)} \prod_{i=1}^j \sigma_i(D\phi(1,\omega,x))\right] 
    = \mathbb{E}\!\left[\sup_{x \in \mathcal{A}(\omega)} \sum_{i=1}^j \ln \sigma_i(D\phi(1,\omega,x))\right] \notag \\
    &\le \sum_{i=1}^j \mathbb{E}\!\left[\sup_{x \in \mathcal{A}(\omega)} \ln \sigma_i(D\phi(1,\omega,x))\right] 
    = \sum_{i=1}^j \lambda_i,
\end{align}
where the inequality uses $\sup_x(f_1(x) + \cdots + f_j(x)) \le \sup_x f_1(x) + \cdots + \sup_x f_j(x)$. Hence $\Lambda_j \le \sum_{i=1}^j \lambda_i$. In particular, if $\Lambda_j \ge 0$ then $\sum_{i=1}^j \lambda_i \ge 0$, so the transition index $j^*(\Lambda)$ (the largest $j \in \{0,\dots,d-1\}$ with $\Lambda_j \ge 0$) satisfies $j^*_\Lambda \le j^*$ the transition index defined in terms of the one-step exponents $\lambda_i$.

Now define the $m$-step sharpness exponents
\begin{equation}\label{eq:lambda_m_def}
    \lambda_i^{(m)} := \mathbb{E}\!\left[\sup_{x\in\mathcal{A}(\omega)}\ln\sigma_i\bigl(D\phi(m,\omega,x)\bigr)\right], \qquad i = 1,\dots,d.
\end{equation}
\noindent\textbf{6.3.\ Covering bound for the $m$-step map.}\quad
We apply the covering argument from Steps~1--3 to the $m$-step map 
$\phi(m,\omega,\cdot)$ in place of $\phi(1,\omega,\cdot)$, with the 
replacement $\theta\omega \leftarrow \theta^m\omega$. Steps~1 and~2 
(general covering and Taylor approximation) carry over without 
modification. In Step~3, the ellipsoid covering lemma 
(Lemma~\ref{lemma:ellipsoid_covering}) involves a free index $j$ at 
which the singular values are split into expanding directions 
$\sigma_1, \dots, \sigma_j$ and a contracting direction 
$\sigma_{j+1}$ that determines the covering radius. We choose $j = j^*_\Lambda$, where $j^*_\Lambda$ is the largest 
integer in $\{0,\dots,d-1\}$ such that $\Lambda_{j^*_\Lambda} \ge 0$. 
We now verify that this choice is well-defined and that the resulting 
covering argument is valid.

First, $j^*_\Lambda$ is well-defined: by Assumption~4, 
$\sum_{i=1}^{j^*+1}\lambda_i < 0$. Since 
$\Lambda_{j^*+1} \le \sum_{i=1}^{j^*+1}\lambda_i < 0$, and 
$\Lambda_0 = 0 \ge 0$, there exists at least one $j$ with 
$\Lambda_j \ge 0$ and at least one with $\Lambda_j < 0$, so 
$j^*_\Lambda$ is well-defined and satisfies 
$0 \le j^*_\Lambda \le j^*$.

Second, $\Lambda_{j^*_\Lambda + 1} < 0$ by definition of $j^*_\Lambda$. 
By Fekete's lemma, $\frac{1}{m}\,\Omega_{j^*_\Lambda+1}^{(m)} \to 
\Lambda_{j^*_\Lambda+1} < 0$. Since  for any $A \in \mathbb{R}^{d\times d}$ it holds $ \sigma_1(A) \ge \sigma_2(A) \ge \cdots \ge \sigma_d(A)$, we 
have $\sigma_k(A) \ge \sigma_{j^*_\Lambda+1}(A)$ for all 
$k \le j^*_\Lambda+1$, and therefore
\[
    \|A\|_{j^*_\Lambda+1} 
    = \prod_{k=1}^{j^*_\Lambda+1} \sigma_k(A) 
    \ge \sigma_{j^*_\Lambda+1}(A)^{j^*_\Lambda+1}.
\]
Taking logarithms and dividing by $j^*_\Lambda+1$:
\[
    \ln \sigma_{j^*_\Lambda+1}(A) 
    \le \frac{1}{j^*_\Lambda+1}\,\ln \|A\|_{j^*_\Lambda+1}.
\]
Applying this with $A = D\phi(m,\omega,x)$ and taking the supremum 
over $x \in \mathcal{A}(\omega)$:
\[
    \sup_{x \in \mathcal{A}(\omega)} \ln \sigma_{j^*_\Lambda+1}
    (D\phi(m,\omega,x)) 
    \;\le\; 
    \frac{1}{j^*_\Lambda+1}\, \ln \sup_{x \in \mathcal{A}(\omega)} 
    \|(D\phi(m,\omega,x))\|_{j^*_\Lambda+1},
\]
where we used the monotonicity of $\ln$ and the fact that 
$t \mapsto t^{1/(j^*_\Lambda+1)}$ is monotone increasing, so the 
supremum passes inside. Taking expectations and dividing by $m$:
\[
    \frac{1}{m}\,\lambda_{j^*_\Lambda+1}^{(m)} 
    \;\le\; 
    \frac{1}{j^*_\Lambda+1}\cdot\frac{1}{m}\,
    \Omega_{j^*_\Lambda+1}^{(m)} 
    \;\xrightarrow{m\to\infty}\; 
    \frac{\Lambda_{j^*_\Lambda+1}}{j^*_\Lambda+1} < 0,
\]
by Fekete's lemma. In particular, 
$\lambda_{j^*_\Lambda+1}^{(m)} \to -\infty$, which shows that 
$\sup_{x \in \mathcal{A}(\omega)} 
\sigma_{j^*_\Lambda+1}(D\phi(m,\omega,x))$ decays exponentially 
in $m$.
Therefore, the 
covering radius 
$\rho_1(\omega,m) = R\,\sup_{x}\sigma_{j^*_\Lambda+1}
(D\phi(m,\omega,x))$ shrinks with each iteration of the $m$-step map, 
and the iterative covering argument of Steps~4--5 produces a finite 
dimension bound.

With this choice, the covering number of each image ellipsoid satisfies
\begin{equation}
\label{eq:ellipsoid_cover_m}
    \mathcal{N}\bigl(D\phi(m,\omega,x_i)\,B(0,R),\;\rho_1(\omega,m)\bigr)
    \;\le\;
    3^d\,\frac{\prod_{k=1}^{j^*_\Lambda}\sigma_k\bigl(D\phi(m,\omega,x_i)\bigr)}
    {\bigl(\sup_{x\in\mathcal{A}(\omega)}\sigma_{j^*_\Lambda+1}(D\phi(m,\omega,x))\bigr)^{j^*_\Lambda}},
\end{equation}
where $\rho_1(\omega,m) := R\,\sup_{x\in\mathcal{A}(\omega)}\sigma_{j^*_\Lambda+1}(D\phi(m,\omega,x))$. To obtain a bound that is uniform over all centers $x_i \in \mathcal{A}(\omega)$, we use
\[
    \prod_{k=1}^{j^*_\Lambda}\sigma_k\bigl(D\phi(m,\omega,x_i)\bigr) = \|\bigl(D\phi(m,\omega,x_i)\bigr)\|_{j^*_\Lambda} \le \sup_{x\in\mathcal{A}(\omega)}\|\bigl(D\phi(m,\omega,x)\bigr)\|_{j^*_\Lambda}.
\]
Define the multiplier
\begin{equation}\label{eq:mu_m_def}
    \mu^{(m)}(\omega) := \frac{\sup_{x\in\mathcal{A}(\omega)} \|\bigl(D\phi(m,\omega,x)\bigr)\|_{j^*_\Lambda}}
    {\Bigl(\sup_{x\in\mathcal{A}(\omega)}\sigma_{j^*_\Lambda+1}\bigl(D\phi(m,\omega,x)\bigr)\Bigr)^{j^*_\Lambda}}.
\end{equation}
Then the one-step covering estimate (applied to the $m$-step map) gives
\begin{equation}\label{eq:covering_m}
    \mathcal{N}\bigl(\mathcal{A}(\theta^m\omega),\,\epsilon_1(\omega,m)\bigr)
    \le N_1(\omega,R)\,3^d\,\mu^{(m)}(\omega),
\end{equation}
where $\epsilon_1(\omega,m) = 2\rho_1(\omega,m) = 2R\,\sup_{x\in\mathcal{A}(\omega)}\sigma_{j^*_\Lambda+1}(D\phi(m,\omega,x))$.

\medskip
\noindent\textbf{6.4.\ Iterating the $m$-step map.}\quad
Define the sequence of radii for the $K$-fold iteration of the $m$-step map:
\begin{equation}\label{eq:eps_km}
    \epsilon_{K+1}(\omega,m)
    := 2\,\epsilon_K(\omega,m)\,\sup_{x\in\mathcal{A}(\theta^{mK}\omega)}\sigma_{j^*_\Lambda+1}\bigl(D\phi(m,\theta^{mK}\omega,x)\bigr),
    \qquad \epsilon_0 := R.
\end{equation}
By the inductive argument of Step~5 (with $\theta$ replaced by $\theta^m$):
\begin{equation}\label{eq:recursion_m}
    \mathcal{N}\bigl(\mathcal{A}(\theta^{mK}\omega),\,\epsilon_K(\omega,m)\bigr)
    \le N_1(\omega,R)\,\prod_{k=0}^{K-1} 3^d\,\mu^{(m)}(\theta^{mk}\omega).
\end{equation}
The radii iterate to
\begin{equation}\label{eq:radius_iterate_m}
    \epsilon_K(\omega,m) = R\,\prod_{k=0}^{K-1} 2\,\sup_{x\in\mathcal{A}(\theta^{mk}\omega)}\sigma_{j^*_\Lambda+1}\bigl(D\phi(m,\theta^{mk}\omega,x)\bigr).
\end{equation}

\medskip
\noindent\textbf{6.5.\ Ergodic limits.}\quad
Taking logarithms in \eqref{eq:recursion_m}, dividing by $K$, and applying the Birkhoff Ergodic Theorem to $\omega\mapsto\ln\mu^{(m)}(\omega)$ (with respect to $\theta^m$), we obtain
\begin{equation}\label{eq:num_limit}
    \limsup_{K\to\infty}\frac{1}{K}\ln\mathcal{N}\bigl(\mathcal{A}(\theta^{mK}\omega),\epsilon_K(\omega,m)\bigr)
    \le \mathbb{E}\bigl[\ln\mu^{(m)}(\omega)\bigr] + d\ln 3.
\end{equation}
From the definition of $\mu^{(m)}$:
\begin{equation}\label{eq:E_mu}
    \mathbb{E}\bigl[\ln\mu^{(m)}(\omega)\bigr] = \Omega_{j^*_\Lambda}^{(m)} - j^*_\Lambda\,\lambda_{j^*_\Lambda+1}^{(m)}.
\end{equation}
Similarly, from \eqref{eq:radius_iterate_m} and the Birkhoff Ergodic Theorem:
\begin{equation}\label{eq:den_limit}
    \lim_{K\to\infty}\frac{1}{K}\ln\epsilon_K(\omega,m)
    = \lambda_{j^*_\Lambda+1}^{(m)} + \ln 2.
\end{equation}

\medskip
\noindent\textbf{6.6.\ Sign condition and passage to the limit.}\quad
We need $\lambda_{j^*_\Lambda+1}^{(m)} + \ln 2 < 0$, i.e.,
\begin{equation}\label{eq:sign_condition}
    \lambda_{j^*_\Lambda+1}^{(m)} < -\ln 2.
\end{equation}
Since $\sigma_{j^*_\Lambda+1}(A)^{j^*_\Lambda+1} \le \omega_{j^*_\Lambda+1}(A)$ for any $A \in \mathbb{R}^{d \times d}$, we have the upper bound
\begin{equation}\label{eq:lambda_upper_sign}
    \lambda_{j^*_\Lambda+1}^{(m)} \le \frac{1}{j^*_\Lambda+1}\,\Omega_{j^*_\Lambda+1}^{(m)}.
\end{equation}
By Fekete's lemma, $\frac{1}{m}\,\Omega_{j^*_\Lambda+1}^{(m)} \to \Lambda_{j^*_\Lambda+1} < 0$, so $\Omega_{j^*_\Lambda+1}^{(m)} \to -\infty$ as $m \to \infty$. Combined with \eqref{eq:lambda_upper_sign}, this gives $\lambda_{j^*_\Lambda+1}^{(m)} \to -\infty$, and in particular \eqref{eq:sign_condition} holds for all $m$ sufficiently large.

For such $m$, the box-counting dimension satisfies 
\begin{equation}\label{eq:dim_bound_m}
    \overline{\dim}_M\,\mathcal{A}(\omega)
    \le \frac{\Omega_{j^*_\Lambda}^{(m)} - j^*_\Lambda\,\lambda_{j^*_\Lambda+1}^{(m)} + d\ln 3}{-\lambda_{j^*_\Lambda+1}^{(m)} - \ln 2}.
\end{equation}
Dividing numerator and denominator by $m$:
\begin{equation}\label{eq:dim_m}
    \overline{\dim}_M\,\mathcal{A}(\omega)
    \le \frac{\frac{1}{m}\,\Omega_{j^*_\Lambda}^{(m)} - j^*_\Lambda\,\frac{1}{m}\,\lambda_{j^*_\Lambda+1}^{(m)} + \frac{d\ln 3}{m}}{-\frac{1}{m}\,\lambda_{j^*_\Lambda+1}^{(m)} - \frac{\ln 2}{m}}.
\end{equation}
We now pass to the limit $m \to \infty$. By Fekete's lemma, $\frac{1}{m}\,\Omega_{j^*_\Lambda}^{(m)} \to \Lambda_{j^*_\Lambda}$. The terms $\frac{d\ln 3}{m}$ and $\frac{\ln 2}{m}$ vanish. It remains to identify the limit of $\frac{1}{m}\,\lambda_{j^*_\Lambda+1}^{(m)}$.

Since $\sigma_{j^*_\Lambda+1}(A) = \omega_{j^*_\Lambda+1}(A)/\omega_{j^*_\Lambda}(A)$, we have the two-sided bounds
\begin{equation}\label{eq:lambda_sandwich}
    \Omega_{j^*_\Lambda+1}^{(m)} - \Omega_{j^*_\Lambda}^{(m)} \;\le\; \lambda_{j^*_\Lambda+1}^{(m)} \;\le\; \Omega_{j^*_\Lambda+1}^{(m)} - \hat{\Omega}_{j^*_\Lambda}^{(m)},
\end{equation}
where the lower bound uses $\sup(f - g) \ge \sup f - \sup g$ and the upper bound uses $\sup(f - g) \le \sup f - \inf g$. By Fekete's lemma, $\frac{1}{m}(\Omega_{j^*_\Lambda+1}^{(m)} - \Omega_{j^*_\Lambda}^{(m)}) \to \Lambda_{j^*_\Lambda+1} - \Lambda_{j^*_\Lambda}$. By the Bounded Distortion assumption, $\frac{1}{m}(\Omega_{j^*_\Lambda}^{(m)} - \hat{\Omega}_{j^*_\Lambda}^{(m)}) \to 0$, and hence $\frac{1}{m}\,\hat{\Omega}_{j^*_\Lambda}^{(m)} \to \Lambda_{j^*_\Lambda}$, so that $\frac{1}{m}(\Omega_{j^*_\Lambda+1}^{(m)} - \hat{\Omega}_{j^*_\Lambda}^{(m)}) \to \Lambda_{j^*_\Lambda+1} - \Lambda_{j^*_\Lambda}$. By the squeeze theorem,
\begin{equation}\label{eq:lambda_limit_final}
    \lim_{m \to \infty} \frac{1}{m}\,\lambda_{j^*_\Lambda+1}^{(m)} = \Lambda_{j^*_\Lambda+1} - \Lambda_{j^*_\Lambda} =: \tilde{\lambda}_{j^*_\Lambda+1}.
\end{equation}
Substituting all limits into \eqref{eq:dim_m}, we obtain
\begin{equation}\label{eq:final_bound}
    \overline{\dim}_M\,\mathcal{A}(\omega)
    \;\le\;
    \frac{\Lambda_{j^*_\Lambda} - j^*_\Lambda\,\tilde{\lambda}_{j^*_\Lambda+1}}{-\tilde{\lambda}_{j^*_\Lambda+1}}
    = j^*_\Lambda + \frac{\Lambda_{j^*_\Lambda}}{\Lambda_{j^*_\Lambda} - \Lambda_{j^*_\Lambda+1}}
    =: D_s.
\end{equation}
We now show that $D_s \le \dimS$, where 
$\dimS:= j^* + \frac{\sum_{i=1}^{j^*}\lambda_i}{|\lambda_{j^*+1}|}$ 
denotes the bound with one-step exponents. 
Define $g(a,b) := \frac{a}{a-b}$ for $a \ge 0 > b$, so that 
$D_s = j^*_\Lambda + g(\Lambda_{j^*_\Lambda}, \Lambda_{j^*_\Lambda+1})$.
Since $b < 0$ and $a \ge 0$:
\[
    \frac{\partial g}{\partial a} = \frac{-b}{(a-b)^2} > 0, 
    \qquad 
    \frac{\partial g}{\partial b} = \frac{a}{(a-b)^2} \ge 0,
\]
so $g$ is monotonically increasing in both arguments, and 
$g(a,b) < 1$ since $a < a - b$ (because $b < 0$).

We distinguish two cases.

\textbf{Case 1: $j^*_\Lambda = j^*$.}\quad
Since $\Lambda_j \le \sum_{i=1}^j \lambda_i$ for all $j$, the 
monotonicity of $g$ in the first argument gives
\[
    g(\Lambda_{j^*},\, \Lambda_{j^*+1}) 
    \;\le\; 
    g\!\left(\sum_{i=1}^{j^*} \lambda_i,\; 
    \Lambda_{j^*+1}\right).
\]
By Assumption~4, $\sum_{i=1}^{j^*+1}\lambda_i < 0$, and since 
$\Lambda_{j^*+1} \le \sum_{i=1}^{j^*+1}\lambda_i < 0$, the 
monotonicity of $g$ in the second argument yields
\[
    g\!\left(\sum_{i=1}^{j^*} \lambda_i,\; 
    \Lambda_{j^*+1}\right) 
    \;\le\; 
    g\!\left(\sum_{i=1}^{j^*} \lambda_i,\; 
    \sum_{i=1}^{j^*+1} \lambda_i\right) 
    = \frac{\sum_{i=1}^{j^*} \lambda_i}{|\lambda_{j^*+1}|}.
\]
Therefore $D_s = j^* + g(\Lambda_{j^*}, \Lambda_{j^*+1}) 
\le j^* + \frac{\sum_{i=1}^{j^*}\lambda_i}{|\lambda_{j^*+1}|} 
= D_\lambda$.

\textbf{Case 2: $j^*_\Lambda < j^*$.}\quad
Since $g(\Lambda_{j^*_\Lambda}, \Lambda_{j^*_\Lambda+1}) < 1$, 
we have $D_s < j^*_\Lambda + 1 \le j^*$. On the other hand, 
$D_\lambda = j^* + \frac{\sum_{i=1}^{j^*}\lambda_i}{|\lambda_{j^*+1}|} 
\ge j^*$, since $\sum_{i=1}^{j^*}\lambda_i \ge 0$ by Assumption~4. 
Therefore $D_s < j^* \le D_\lambda$.

\medskip
Combining both cases, we conclude that for 
$\mathbb{P}$-almost every $\omega$:
\begin{equation}\label{eq:full_chain}
    \overline{\dim}_M(\mathcal{A}(\omega)) 
    \;\le\; D_s 
    \;\le\; D_\lambda 
    \;=\; j^* + \frac{\sum_{i=1}^{j^*}\lambda_i}{|\lambda_{j^*+1}|}
    \;<\; \dim_{\mathrm{S}}\mathcal{A}.
\end{equation}
The bound $D_s$ is strictly sharper than $\dimS$ whenever 
$\Lambda_j < \sum_{i=1}^j \lambda_i$ for some 
$j \in \{j^*_\Lambda, j^*_\Lambda+1\}$, which occurs when 
different points on the attractor maximize different singular 
values $\sigma_i$. This completes Step~6.

\paragraph{Step 7: Transition to Arbitrary Radii}
The covering estimates obtained in Step~6 are formulated along the 
discrete sequence of radii $\{\epsilon_K(\omega,m)\}_{K \in \mathbb{N}}$. 
In this step, we show that the dimension bound extends to arbitrary 
radii $\epsilon \to 0$, thereby establishing an upper bound on the 
upper Minkowski dimension.

Fix $m \in \mathbb{N}$ sufficiently large such that 
$\lambda_{j^*_\Lambda+1}^{(m)} + \ln 2 < 0$ (which is possible by 
Step~6.6). Let $\{\epsilon_K(\omega,m)\}_{K \in \mathbb{N}}$ be the 
sequence of radii defined in \eqref{eq:eps_km}. By 
\eqref{eq:den_limit}, the Birkhoff Ergodic Theorem gives
\begin{equation}\label{eq:radii_exp_rate}
    \lim_{K \to \infty} \frac{1}{K} \ln \epsilon_K(\omega,m) 
    = \lambda_{j^*_\Lambda+1}^{(m)} + \ln 2 =: \ell(m) < 0 
    \qquad \mathbb{P}\text{-a.s.}
\end{equation}
In particular, $\epsilon_K(\omega,m) \to 0$ exponentially as 
$K \to \infty$, and for $\mathbb{P}$-a.e.\ $\omega$ the sequence 
$\{\epsilon_K(\omega,m)\}_{K}$ is eventually strictly decreasing.

We now work with a fixed realization $\omega$ in the full-measure 
set where \eqref{eq:radii_exp_rate} and the covering estimate 
\eqref{eq:recursion_m} both hold. The estimates in Step~6 bound 
$\mathcal{N}(\mathcal{A}(\theta^{mK}\omega), \epsilon_K(\omega,m))$. 
Since $\theta^{mK}$ preserves $\mathbb{P}$ and the entire argument 
from Steps~1--6 applies with $\omega$ replaced by any $\omega'$ in 
the underlying full-measure set, the same estimates hold with 
$\theta^{mK}\omega$ replaced by $\omega$. That is, for 
$\mathbb{P}$-a.e.\ $\omega$ there exists a sequence of radii 
$\epsilon_K(\omega,m) \to 0$ such that
\begin{equation}\label{eq:covering_own_fiber}
    \limsup_{K \to \infty} \frac{1}{K} 
    \ln \mathcal{N}(\mathcal{A}(\omega), \epsilon_K(\omega,m)) 
    \;\le\; 
    \mathbb{E}[\ln \mu^{(m)}(\omega)] + d\ln 3.
\end{equation}

For any $\epsilon > 0$ sufficiently small, there exists 
$K = K(\epsilon)$ such that 
$\epsilon \in [\epsilon_{K+1}(\omega,m),\, \epsilon_K(\omega,m))$. 
Since covering numbers are monotonically non-increasing in the 
radius:
\begin{equation}\label{eq:sandwich_covering}
    \mathcal{N}(\mathcal{A}(\omega), \epsilon_K(\omega,m)) 
    \;\le\; 
    \mathcal{N}(\mathcal{A}(\omega), \epsilon) 
    \;\le\; 
    \mathcal{N}(\mathcal{A}(\omega), \epsilon_{K+1}(\omega,m)).
\end{equation}
Taking logarithms and dividing by $-\ln\epsilon > 0$:
\begin{equation}\label{eq:sandwich_dim}
    \frac{\ln \mathcal{N}(\mathcal{A}(\omega), 
    \epsilon_K(\omega,m))}{-\ln \epsilon_{K+1}(\omega,m)} 
    \;\le\; 
    \frac{\ln \mathcal{N}(\mathcal{A}(\omega), \epsilon)}{-\ln \epsilon} 
    \;\le\; 
    \frac{\ln \mathcal{N}(\mathcal{A}(\omega), 
    \epsilon_{K+1}(\omega,m))}{-\ln \epsilon_K(\omega,m)},
\end{equation}
where we used $\epsilon_{K+1}(\omega,m) \le \epsilon < 
\epsilon_K(\omega,m)$ to bound $-\ln\epsilon$ from below and above 
in the denominators.

It remains to show that the lower and upper bounds in 
\eqref{eq:sandwich_dim} have the same $\limsup$. By 
\eqref{eq:radii_exp_rate}, $\frac{1}{K}\ln\epsilon_K(\omega,m) 
\to \ell(m) < 0$, and therefore
\begin{equation}
    \frac{\ln \epsilon_{K+1}(\omega,m)}{\ln \epsilon_K(\omega,m)} 
    = \frac{\frac{1}{K+1}\ln\epsilon_{K+1}(\omega,m)}
    {\frac{1}{K}\ln\epsilon_K(\omega,m)} \cdot \frac{K+1}{K} 
    \;\xrightarrow{K \to \infty}\; 
    \frac{\ell(m)}{\ell(m)} \cdot 1 = 1.
\end{equation}
Consequently, replacing $-\ln\epsilon_{K+1}$ by $-\ln\epsilon_K$ 
(or vice versa) in the denominators of \eqref{eq:sandwich_dim} does 
not affect the $\limsup$. Both the lower and upper bounds converge 
to the same value, and we conclude
\begin{equation}\label{eq:dim_arbitrary_radii}
    \limsup_{\epsilon \to 0} 
    \frac{\ln \mathcal{N}(\mathcal{A}(\omega), \epsilon)}{-\ln \epsilon} 
    \;=\; 
    \limsup_{K \to \infty} 
    \frac{\ln \mathcal{N}(\mathcal{A}(\omega), 
    \epsilon_K(\omega,m))}{-\ln \epsilon_K(\omega,m)}.
\end{equation}
Combining \eqref{eq:dim_arbitrary_radii} with the covering estimate 
\eqref{eq:covering_own_fiber} and the radii asymptotics 
\eqref{eq:radii_exp_rate}, we obtain for each fixed $m$ sufficiently 
large:
\begin{equation}\label{eq:dim_fixed_m}
    \overline{\dim}_M(\mathcal{A}(\omega)) 
    \;=\; 
    \limsup_{\epsilon \to 0} 
    \frac{\ln \mathcal{N}(\mathcal{A}(\omega), \epsilon)}{-\ln\epsilon} 
    \;\le\; 
    \frac{\mathbb{E}[\ln\mu^{(m)}(\omega)] + d\ln 3}
    {-\lambda_{j^*_\Lambda+1}^{(m)} - \ln 2}.
\end{equation}
Passing to the limit $m \to \infty$ as in Step~6.6 yields
\[
    \overline{\dim}_M(\mathcal{A}(\omega)) \;\le\; \dimS
\]
for $\mathbb{P}$-almost every $\omega$. This completes the proof. 
\end{proof}

\begin{theorem}{Generalization via Sharpness Dimension}~
\label{thm:generalization_DLS_copy}

Let $S = \{z_1, \dots, z_n\} \sim \datadist$ be a dataset of size $n$. Let $(\Omega, \mathcal{F}, \mathbb{P}, \theta, \phi)$ be a discrete-time RDS according to Dfn~\ref{def:rds} such that Assump.~\ref{assum:regular_dynamics} holds.
Under Assumps.~\ref{assum:bounded_loss} and~\ref{assum:lipschitz}, there exists a constant $C>0$ s.t. with probability at least $1 - \zeta - \gamma$ over the joint draw $(S, \omega) \sim \datadist \otimes \mathbb{P}$, there exists $\delta_{n,\gamma} > 0$ such that for all $0 < \delta < \delta_{n,\gamma}$,
\begin{align*}
\mathcal{G}_S(\mathcal{A}(\omega))
&\leq 2 L \delta
+ 2 B \sqrt{\frac{4\, \dim_{\mathrm{S}}\mathcal{A}_S \> \log(1/\delta)}{n}} \\
&\quad + \frac{I_{\infty}(\mathcal{A}_S(\omega), S) + \log(1/\zeta)}{\sqrt{n}}
+ \frac{C B^2}{\sqrt{n}}.
\end{align*}
We recall that $\mathcal{G}_S(\mathcal{A}(\omega))$ denotes the worst-case generalization gap (see \eqref{eq:worst_case_gap}) and $I_{\infty}(\mathcal{A}_S(\omega), S)$ (see Dfn.~\ref{def:total_mutual_information})  denotes the total mutual information between the random pullback attractor $\mathcal{A}_S(\omega)$ and $S$. 
\end{theorem}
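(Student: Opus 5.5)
The plan is to combine the dimension comparison of Theorem~\ref{thm:KY_bound_general} with the covering-number generalization bound of Corollary~\ref{cor:uniform_generalization_bound}. The random set is taken to be $\mathcal{W}_{S,U} := \mathcal{A}_S(\omega)$, with $U=\omega$ the algorithmic randomness, which is independent of $S$.

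\textbf{Step 1: check the hypotheses.} Assumption~\ref{assum:regular_dynamics} contains, for each fixed $S$, everything Theorem~\ref{thm:KY_bound_general} needs:
\begin{itemize}
\item $C^2$ regularity and non-singularity;
\item a compact random pullback attractor, which is invariant by Definition~\ref{def:random_attractor} and measurable by Proposition~\ref{prop:existence_attractor};
\item integrability of $\ln\|D\phi\|$ and of the second-derivative term;
\item the transition index $j^*$ and bounded distortion.
\end{itemize}
There is one small discrepancy. The assumption is stated with $\sup\ln\|D^2\phi\|$, while the theorem uses $\ln\sup\|D^2\phi\|$. These coincide by monotonicity of $\ln$, so no extra work is needed.

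\textbf{Step 2: apply the dimension bound.} Theorem~\ref{thm:KY_bound_general} then gives $\overline{\dim}_{\mathrm B}(\mathcal{A}_S(\omega)) \le \dim_{\mathrm S}\mathcal{A}_S$ for $\mathbb{P}$-a.e.\ $\omega$. The exceptional null set may depend on $S$. By Fubini, the set of pairs $(S,\omega)$ where the inequality fails still has $\mu_z^{\otimes n}\otimes\mathbb{P}$-measure zero, provided it is jointly measurable. Joint measurability should follow from measurability of $(S,\omega)\mapsto \mathcal{A}_S(\omega)$ and of the covering-number functional. In addition, $\mathcal{A}_S(\omega)$ is compact, so it has a.s.\ finite diameter.

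\textbf{Step 3: feed into the generalization bound.} Next invoke Corollary~\ref{cor:uniform_generalization_bound} under Assumptions~\ref{assum:bounded_loss} and~\ref{assum:lipschitz}. With probability at least $1-\zeta-\gamma$ there is a $\delta_{n,\gamma}$ such that, for all $\delta<\delta_{n,\gamma}$, the gap $\mathcal{G}_S(\mathcal{A}(\omega))$ is bounded in terms of $\overline{\dim}_{\mathrm B}(\mathcal{A}_S(\omega))$. On the intersection of this event with the full-measure event of Step~2, the square-root term is monotone in the dimension, so we may replace $\overline{\dim}_{\mathrm B}$ by $\dim_{\mathrm S}\mathcal{A}_S$. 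Finally, choose $\lambda=\sqrt n$. This turns $(I_\infty+\log(1/\zeta))/\lambda + C\lambda B^2/n$ into the two $1/\sqrt n$ terms of the statement.

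\textbf{Main obstacle: order of quantifiers in Corollary~\ref{cor:uniform_generalization_bound}.} Corollary~\ref{cor:covering_bound} holds for fixed $\lambda$ and $\delta$. The uniformity over $\delta<\delta_{n,\gamma}$ is supplied by the Egoroff argument of Corollary~\ref{cor:uniform_covering_minkowski}, which controls $\log\mathcal{N}$ uniformly on a set $\Omega_\gamma$. I will take that corollary as given. If the uniformity needs to be made explicit, I would first discretize $\delta$ over a dyadic grid, then union-bound with weights $\zeta_k=\zeta 2^{-k}$, and absorb the resulting $\log k$ cost into the constant for the relevant range.

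A second subtlety is that $\dim_{\mathrm S}\mathcal{A}_S$ is deterministic given $S$, since it is an expectation over $\omega$, whereas the Minkowski dimension is random. The a.s.\ comparison from Step~2 handles this mismatch.
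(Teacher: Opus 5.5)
Your proposal is correct and matches the paper's proof, which is the one-line combination of Corollary~\ref{cor:uniform_generalization_bound} (with $\mathcal{W}_{S,U}=\mathcal{A}_S(\omega)$ and $\lambda=\sqrt{n}$) and the almost-sure bound $\overline{\dim}_{\mathrm{B}}(\mathcal{A}_S(\omega))\le\dim_{\mathrm{S}}\mathcal{A}_S$ from Theorem~\ref{thm:KY_bound_general}; your hypothesis check, Fubini treatment of the $S$-dependent null sets, and monotone replacement of the dimension in the square-root term spell out what the paper leaves implicit. (Your optional dyadic fallback is not needed; as written, the weights $\zeta 2^{-k}$ would cost a term of order $k\approx\log_2(1/\delta)$ rather than $\log k$, but since you take the corollary as given this does not affect the argument.)
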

\begin{proof}
\label{proof:main}
The result is an immediate consequence of Cor.~\ref{cor:uniform_generalization_bound} and Thm.~\ref{thm:KY_bound_general}.
\end{proof}

\begin{theorem}[Stability $D_S$ Bound]
\label{thm:stability_bound_D_S}
Suppose the loss function $\ell$ is $B$-bounded and $L$-Lipschitz, and Assump.~\ref{ass:random-trajectory-stability} holds. For each dataset $S \in \mathcal{Z}^n$, let $(\Omega, \mathcal{F}, \mathbb{P}, \theta, \phi)$ be a $C^2$ discrete-time RDS (per Dfn.~\ref{def:rds}) possessing a unique compact random pullback attractor $\mathcal{A}_S(\omega)$. We assume the following conditions hold:

\begin{enumerate}[leftmargin=*, label=(\roman*)]
    \item \textbf{Non-Singularity:} For $\mathbb{P}$-a.e. $\omega$, the Jacobian is non-singular on the attractor:
    \begin{align*}
        \inf_{x \in \mathcal{A}_S(\omega)} \sigma_d(D \phi(1,\omega,x)) > 0.
    \end{align*}

    \item \textbf{Integrability:} The first and second derivatives of the map satisfy:
    \begin{align*}
        \mathbb{E}\left[\sup_{x \in \mathcal{A}_S(\omega)} \ln^+ \|D\phi_S(1,\omega,x)\|\right] < \infty \quad \text{and} \quad
        \mathbb{E}\left[\sup_{x \in \mathcal{A}_S(\omega)} \ln^+ \|D^2\phi_S(1,\omega,x)\|\right] < \infty,
    \end{align*}
    where $\ln^+(x) := \max \{ 0, \ln x \}$.

    \item \textbf{Transition Index:} There exists an integer $j^* \in \{1, \dots, d-1\}$ such that the global sharpness values $\lambda_i$ (see Dfn.~\ref{def:local_sharpness_unified}) satisfy:
    \[
    \sum_{i=1}^{j^*} \lambda_i \geq 0 \quad \text{and} \quad \sum_{i=1}^{j^*+1} \lambda_i < 0.
    \]
     .
\item \textbf{Bounded Distortion}: For $A \in \mathbb{R}^{d \times d}$ and $j \in \{1, \dots, d\}$, define
\begin{equation}
    \|A\|_j := \sigma_1(A)\cdots\sigma_j(A),
\end{equation}
where $\sigma_1(A) \ge \cdots \ge \sigma_d(A)$ are the singular values of $A$. Equivalently, $\|A\|_j$ is the maximal expansion factor of $A$ on $j$-dimensional volumes.

We assume that the spatial variation of $\|D\phi(m,\omega,\cdot)\|_j$ over the attractor is subexponential in $m$: for each $j \in \{1, \dots, d\}$,
\begin{equation}
    \lim_{m \to \infty} \frac{1}{m}\, \mathbb{E}\!\left[\sup_{x \in \mathcal{A}(\omega)} \ln \|D\phi(m, \omega, x)\|_j \;-\; \inf_{x \in \mathcal{A}(\omega)} \ln \|D\phi(m, \omega, x)\|_j\right] = 0.
\end{equation}
    In other words, the maximal and minimal $j$-volume growth rates over $\mathcal{A}(\omega)$ agree at exponential scale.
\end{enumerate}

Furthermore, assume $\beta_n^{-2/3}$ is an integer divisor of $n$. Then, there exists $\delta_n > 0$ such that for all $0 < \delta < \delta_n$:
\begin{align*}
    \mathbb{E} \bigg[\sup_{w \in \mathcal{W}_{S,U}} \left(\mathcal{R}(w) - \widehat{\mathcal{R}}_S(w) \right) \bigg] \leq 2 \mathbb{E} \bigg[ \frac{B}{n} + \delta L + \beta_n^{1/3} \left( 1 + B \sqrt{4 \dimS \log \frac{1}{\delta}} \right) \bigg].
\end{align*}
\end{theorem}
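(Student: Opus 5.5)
The plan is to combine the stability-based bound of Thm.~\ref{thm:fractal-dim-bound} with the dimension comparison of Thm.~\ref{thm:KY_bound_general}, in the same way that the mutual-information version (Thm.~\ref{thm:generalization_DLS_copy}) followed from Cor.~\ref{cor:uniform_generalization_bound}. We take the random set $\mathcal{W}_{S,U} := \mathcal{A}_S(\omega)$, with $U=\omega$ the algorithmic randomness. Three preliminary facts are needed.

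\begin{enumerate}
\item By the definition of a pullback random attractor (Dfn.~\ref{def:random_attractor}), $\mathcal{A}_S(\omega)$ is almost surely compact. It therefore has finite diameter almost surely, as Thm.~\ref{thm:fractal-dim-bound} requires.
\item Boundedness and Lipschitz continuity of $\ell$ are assumed directly.
\item Assump.~\ref{ass:random-trajectory-stability} is assumed for this random set.
\end{enumerate}

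Thm.~\ref{thm:fractal-dim-bound} then gives a $\delta_n>0$ such that for all $\delta<\delta_n$
\begin{align*}
\mathbb{E}\Big[\sup_{w \in \mathcal{A}_S(\omega)} \big(\mathcal{R}(w)-\widehat{\mathcal{R}}_S(w)\big)\Big] \le 2\,\mathbb{E}\Big[\frac{B}{n}+\delta L+\beta_n^{1/3}\Big(1+B\sqrt{4\,\overline{\dim}_{\mathrm{B}}(\mathcal{A}_S(\omega))\log\tfrac1\delta}\Big)\Big].
\end{align*}

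Next we replace the box dimension by the sharpness dimension. Conditions (i)–(iv) of the statement are exactly the hypotheses of Thm.~\ref{thm:KY_bound_general}, applied for each fixed $S$ to the RDS $\phi_S$. The invariance and measurability of $\mathcal{A}_S$ hold because it is the pullback attractor. The integrability there uses $\ln$ rather than $\ln^+$; finiteness from above is what the Birkhoff and Fekete arguments need, and non-singularity together with compactness controls the lower side. Hence $\overline{\dim}_{\mathrm{B}}(\mathcal{A}_S(\omega))\le \dimS$ for $\mathbb{P}$-a.e.\ $\omega$. Since $x\mapsto\sqrt{x}$ is monotone, $\log(1/\delta)>0$ for $\delta<1$ (shrink $\delta_n$ if necessary), and $\beta_n^{1/3}B\ge0$, we can substitute this inequality pointwise inside the expectation and obtain the claim. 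The expectation is over $S$ only for $\dimS$, which depends on $S$ but not on $\omega$.

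The main obstacle is measurability and integrability in $(S,\omega)$. We need the map $(S,\omega)\mapsto\overline{\dim}_{\mathrm{B}}(\mathcal{A}_S(\omega))$, and $S\mapsto\dimS$, to be measurable so that the expectations make sense. We also need the null set from Thm.~\ref{thm:KY_bound_general} to be handled jointly. The plan is as follows:
\begin{itemize}
\item Apply Thm.~\ref{thm:KY_bound_general} for each fixed $S$, which gives a $\mathbb{P}$-null exceptional set $N_S$.
\item Use Fubini on $\mu_z^{\otimes n}\otimes\mathbb{P}$ to conclude that the exceptional set has joint measure zero. This is valid once the exceptional set is jointly measurable, which follows from the measurability of $\mathcal{A}_S(\omega)$ as a random compact set, since covering numbers are measurable functionals of compact sets.
\end{itemize}
The value of $\delta_n$ is inherited unchanged from Thm.~\ref{thm:fractal-dim-bound}, so no further uniformity argument is required.
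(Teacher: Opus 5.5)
Your proposal is correct and is essentially the paper's proof: the paper also sets $\mathcal{W}_{S,U}=\mathcal{A}_S(\omega)$, applies Thm.~\ref{thm:fractal-dim-bound}, and replaces $\overline{\dim}_{\mathrm{B}}(\mathcal{A}_S(\omega))$ by $\dim_{\mathrm{S}}\mathcal{A}_S$ using Thm.~\ref{thm:KY_bound_general}. Your extra remarks go beyond what the paper writes and are fine: the $\ln^+$ integrability implies the $\ln$ version, you shrink $\delta_n$ so that $\log(1/\delta)>0$, and you use Fubini to handle the exceptional null sets. The one caveat is that joint measurability of $(S,\omega)\mapsto\mathcal{A}_S(\omega)$ is an implicit standing assumption of the random-set framework, not a consequence of measurability in $\omega$ for each fixed $S$.
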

\begin{proof}
    The proof follows from Thm.~\ref{thm:fractal-dim-bound} and Thm.~\ref{thm:KY_bound_general}.
\end{proof}

\section{Implementation Details and Computational Complexity}

\paragraph{Lanczos and SLQ} For all SLQ-based estimators, each Lanczos run uses one Rademacher probe vector and one minibatch Hessian operator. The
operator is fixed throughout the Lanczos iterations of that run, so the Krylov subspace and Gaussian quadrature
interpretation are well defined. We resample only between independent runs, average the resulting quadrature measures or
their histogram/smoothed density estimates, and then compute SD-SLQ, SD-KDE, or SD-PS from this averaged spectral
measure. This procedure estimates the expectation over minibatch Hessian operators and probe vectors without mixing
Hessian-vector products from different operators inside a single Lanczos run.

More explicitly, let \(N\) be the number of parameters and let \(R_{\mathrm{SLQ}}\) be the number of independent SLQ
runs. In run \(i\), we sample a minibatch \(\mathcal{B}_i\) and define the corresponding Hessian operator
\begin{equation}
    H_i := \nabla_w^2 \ell(w^\star;\mathcal{B}_i).
\end{equation}
Given an independent Rademacher probe \(v_i \in \{\pm 1\}^N\), we set \(q_{i,1}=v_i/\|v_i\|\) and run \(m\) Lanczos
steps on the fixed operator \(H_i\). This produces a basis \(Q_i=[q_{i,1},\ldots,q_{i,m}]\) and a symmetric tridiagonal
matrix \(T_i\), with diagonal entries \(\alpha_{i,1},\ldots,\alpha_{i,m}\) and off-diagonal entries \(\beta_{i,1},\ldots,\beta_{i,m-1}\), such that
\begin{equation}
    H_i Q_i
    =
    Q_i T_i
    +
    \beta_{i,m} q_{i,m+1} e_m^\top .
\end{equation}
Here \(e_m\) denotes the \(m\)-th standard basis vector in \(\mathbb{R}^m)\).
Let
\begin{equation}
    T_i = U_i \operatorname{diag}(\widetilde{\alpha}_{i,1},\ldots,\widetilde{\alpha}_{i,m}) U_i^\top
\end{equation}
be the eigendecomposition of this tridiagonal matrix. The eigenvalues \(\widetilde{\alpha}_{i,\ell}\) of \(T_i\) are the Ritz values, i.e. Lanczos approximations to the eigenvalues of \(H_i\). The associated columns of \(U_i\) are the Ritz vectors, and the corresponding Gaussian quadrature weights are given by their squared first components:
\begin{equation}
    \omega_{i,\ell} = (U_i)_{1\ell}^2,
    \qquad \ell=1,\ldots,m.
\end{equation}
Thus, for a test function \(f\),
\begin{equation}
    q_{i,1}^\top f(H_i) q_{i,1}
    \approx
    e_1^\top f(T_i)e_1
    =
    \sum_{\ell=1}^{m} \omega_{i,\ell} f(\widetilde{\alpha}_{i,\ell}).
\end{equation}
Because \(q_{i,1}\) is a normalized Rademacher probe, multiplying by \(N\) gives an unbiased trace-scale estimate in
expectation over the probe. Averaging across the independently sampled minibatch Hessians and probes yields the
empirical spectral measure
\begin{equation}
    \widehat{\nu}_{H}
    =
    \frac{N}{R_{\mathrm{SLQ}}}
    \sum_{i=1}^{R_{\mathrm{SLQ}}}
    \sum_{\ell=1}^{m}
    \omega_{i,\ell}\,\delta_{\widetilde{\alpha}_{i,\ell}} .
\end{equation}
Here \(\delta_x\) denotes the Dirac measure at \(x\).
This is the object averaged by our SLQ procedure: not full spectra, but the quadrature measures induced by Ritz values
and weights from independent fixed-operator Lanczos runs.
The histogram estimator used for SD-SLQ is obtained by binning this measure. For a bin \(I_b=[a_b,a_{b+1})\) of width
\(\Delta_b=a_{b+1}-a_b\), we set
\begin{equation}
    \widehat{\rho}_{\mathrm{hist}}(\alpha)
    =
    \frac{N}{R_{\mathrm{SLQ}}\,\Delta_b}
    \sum_{i=1}^{R_{\mathrm{SLQ}}}
    \sum_{\ell=1}^{m}
    \omega_{i,\ell}\mathbf{1}\{\widetilde{\alpha}_{i,\ell}\in I_b\},
    \qquad \alpha\in I_b.
\end{equation}
The smoothed estimator used for SD-KDE replaces the bin indicator by a kernel \(K_h\) with bandwidth \(h\):
\begin{equation}
    \widehat{\rho}_{\mathrm{kde}}(\alpha)
    =
    \frac{N}{R_{\mathrm{SLQ}}}
    \sum_{i=1}^{R_{\mathrm{SLQ}}}
    \sum_{\ell=1}^{m}
    \omega_{i,\ell} K_h(\alpha-\widetilde{\alpha}_{i,\ell}).
\end{equation}
To compute the Sharpness Dimension estimators, we apply the one-step Jacobian map
\begin{equation}
    g_\eta(\alpha)=\log\!\left(|1-\eta\alpha|+\varepsilon\right)
\end{equation}
to the Ritz values and estimate the corresponding push-forward density. SD-SLQ and SD-KDE apply the Sharpness Dimension
formula to the histogram or smoothed density of these transformed values, respectively. SD-PS, described next, uses the same
smoothed-density viewpoint but first converts the density into equal-mass pseudo-eigenvalues before applying the
finite-dimensional weighted sharpness dimension formula.

\paragraph{Pseudo-spectrum SD (SD-PS)}
\label{app:sd-ps}
We also report a pseudo-spectrum version of the SLQ estimator, denoted SD-PS, which discretizes the smoothed spectral density before applying the same sharpness-dimension formula as in Dfn.~\ref{def:local_sharpness_unified}. Let $\widehat{\rho}(\alpha)$ be the smoothed SLQ estimate of the Hessian spectral density for a model with $N$ parameters, normalized so that $\int \widehat{\rho}(\alpha)\,d\alpha=N$. We form its cumulative mass function $\widehat{F}(\alpha)=\int_{-\infty}^{\alpha}\widehat{\rho}(s)\,ds$ and choose equal-mass quantiles
\begin{equation}
    \widehat{\alpha}_r
    :=
    \widehat{F}^{-1}\!\left(\frac{r+1/2}{M}N\right),
    \qquad r=0,\ldots,M-1,
\end{equation}
where $M \le N$ is the number of pseudo-eigenvalues used in the discretization. Each pseudo-eigenvalue carries weight $N/M$. We then map these values through the one-step GD Jacobian spectrum,
\begin{equation}
    \widehat{z}_r = \log\!\left(|1-\eta \widehat{\alpha}_r|+\varepsilon\right),
\end{equation}
sort the values and reindex them so that \(\widehat{z}_0 \ge \cdots \ge \widehat{z}_{M-1}\), and compute the weighted Sharpness Dimension: 
if $j$ is the largest index for which $\sum_{r=0}^{j} (N/M)\widehat{z}_r \ge 0$, then
\begin{equation}
    \widehat{\dim}_{\mathrm{S}}^{\mathrm{PS}}
    =
    \sum_{r=0}^{j}\frac{N}{M}
    +
    \frac{\sum_{r=0}^{j}(N/M)\widehat{z}_r}{|\widehat{z}_{j+1}|},
\end{equation}
with the usual truncation to $[0,N]$ and the boundary conventions from Dfn.~\ref{def:local_sharpness_unified}. In our
GPT-2 experiments we use this estimator as an alternative discretization of the same smoothed SLQ measure used for
SD-KDE. Thus SD-PS differs from SD-KDE only in how the density is converted into the ordered log-singular-value
spectrum: SD-KDE integrates the density directly, whereas SD-PS first constructs equal-mass pseudo-eigenvalues and then
applies the finite-dimensional weighted Sharpness Dimension formula.

\paragraph{Complexity analysis} 
Regarding the SD computation, for a network with $N$ parameters and $k$ minibatches, assume the cost of one Hessian–vector product (HVP) is $C_{\text{hvp}}$. Explicit Hessian computation has memory complexity $\Theta(N^2)$ and worst-case runtime complexity $\mathcal{O}(N \cdot C_{\text{hvp}})$. Singular value or eigenvalue decompositions have memory complexity $\Theta(N^2)$ and runtime complexity $\Theta(N^3)$. In this case, since SD computation requires $\Theta(N)$ memory and runtime, the overall algorithm has $\mathcal{O}(k N C_{\text{hvp}} + k N^3)$ runtime complexity and $\Theta(N^2)$ memory complexity. In the approximate SLQ variant, assuming $m$ iterations per Lanczos run, each run has runtime complexity $\Theta(m \cdot C_{\text{hvp}})$ for HVPs and $\Theta(m^2 N)$ for reorthogonalization, with $\Theta(m N)$ memory complexity. Hence, the SLQ-based algorithm has total runtime complexity $\Theta(k m C_{\text{hvp}} + k m^2 N)$ and memory complexity $\Theta(m N)$. For typical settings where $m \ll N$, both runtime and memory complexity are significantly reduced.

\section{Additional Results}
\label{app:grok}
\paragraph{Grokking for the 3-layer MLP}
Below in Fig~\ref{fig:grokking-3-layer2} we present another experiment, with different seeds and hyper-parameters in the grokking setting identical to the main paper but for a 3-layer MLP with 32 hidden features instead.  
Similarly, we use 100 uniformly spaced checkpoints and  ReLU activation, trained with SGD using only weight decay. In particular, we observe an interesting phenomenon in the first plot, where our dimension increases while test accuracy is increasing slowly, sharply decreases while the test accuracy is increasing sharply (grokking), and then increases again afterwards when test accuracy reaches 100\%. In contrast, the other measures decrease monotonically. This behaviour aligns with the theoretical motivation of our dimension, which targets the edge-of-stability regime, and accurately reflects the grokking phase transitions. A similar phase transition is visible in Plot 3, while Plot 4 reflects the grokking experiment from the main paper. In Plot 2, focusing on the sharp grokking region reveals a similarly sharp decrease in SD.

\begin{figure*}[h!]
            \begin{center}
    \includegraphics[width=\linewidth]{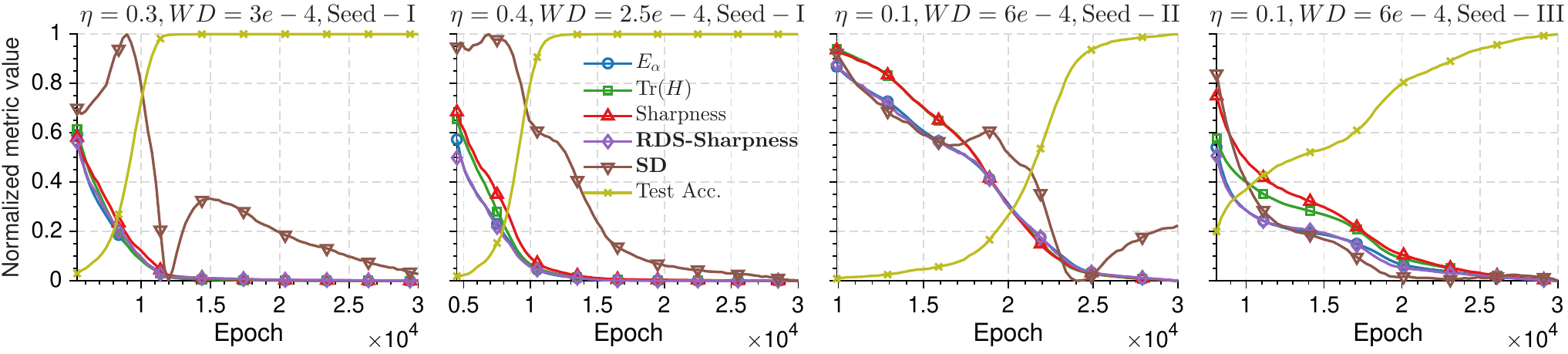}
            \end{center}
            \vspace{-1.5mm}
            \caption{\textbf{Grokking analysis} for different learning rates ($\eta$), weight decay ($WD$) and seeds 3-layer MLP with ReLU activation and no momentum. Note that the \emph{suddenness } of the grokking behavior is best captured in the complexity measures we introduce: RDS-Sharpness and Sharpness Dimension (SD).}
            \label{fig:grokking-3-layer2}
\end{figure*}

\paragraph{Hessian Spectra \& RDS Sharpness Spectrum.}
Figure~\ref{fig:RDSSpectra} illustrates our spectral estimators for Hessian spectral density and the RDS sharpness spectrum on GPT-2 across SGD, SGD with momentum, and AdamW. The left image in each pair visualizes the expectation of minibatch Hessian spectral density over eigenvalues \(\alpha\). The corresponding right images show the push-forward of this density through the transformation \(\alpha \mapsto \log|1-\eta \alpha|\), i.e., the transformed spectral density underlying the RDS Sharpnesses of Order \(k\), \(\lambda_k\). We observe that, across all three optimizers, a large fraction of the Hessian spectrum is concentrated near \(\alpha=0\). This mass near \(\alpha=0\) produces a peak near \(0\) in the RDS sharpness spectrum and corresponds to neutral or nearly neutral directions (i.e., expanding and contracting with almost 0 log-singular values). In contrast, the isolated spikes, positive tails, and negative curvature components of the Hessian spectrum produce the contractive and expansive directions of the RDS Sharpness spectrum. Positive eigenvalues less than \(2/\eta\) yield negative RDS sharpness values, corresponding to the contractive directions. Positive eigenvalues that are larger than \(2/\eta\) and all negative eigenvalues yield positive RDS sharpness values, corresponding to expansive directions instead. Hence, this visualization clarifies why the Sharpness Dimension depends on the full Hessian spectrum rather than only the top Hessian eigenvalue or a small part of it: SD is determined by the balance between the positive RDS sharpness directions, the near-neutral bulk, and the negative tail. Moreover, the estimated Hessian densities are consistent with prior observations on neural-network and transformer Hessian spectra~\cite{ghorbani2019investigation, zhang2024transformers}, providing additional evidence that our SLQ-based procedure captures the relevant spectral structure.

\begin{figure*}
            \begin{center}
    \includegraphics[width=\linewidth]{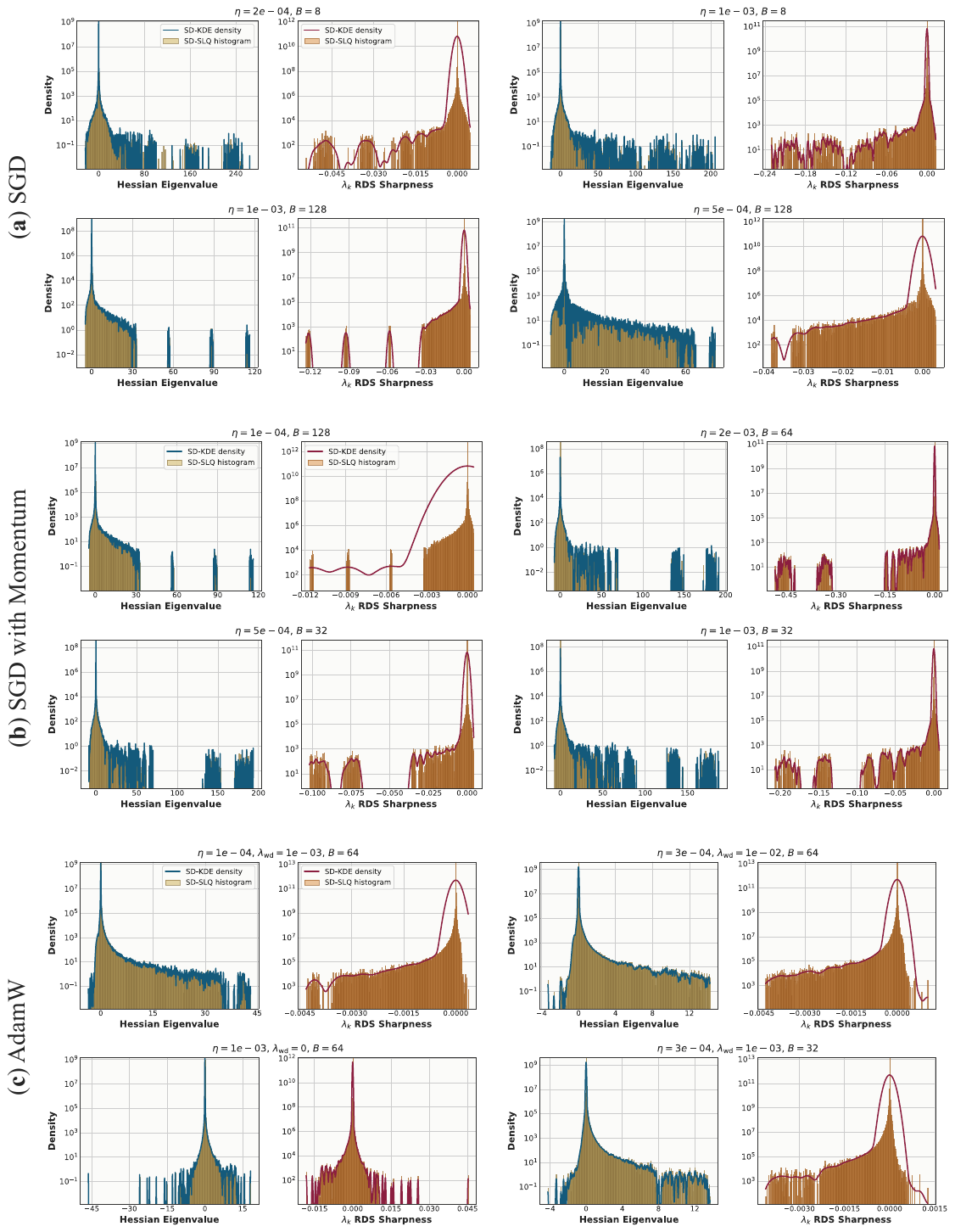}
            \end{center}
            \vspace{-1.5mm}
            \caption{\textbf{Hessian Spectra \& RDS Sharpness Spectrum.} For selected GPT-2 runs trained with (\textbf{a}) SGD, (\textbf{b}) SGD with momentum, and (\textbf{c}) AdamW, we show the SLQ-based histogram estimate (SD-SLQ) and the kernel-smoothed estimate (SD-KDE) for both the raw Hessian spectrum (\textbf{left} in each pair) and the transformed RDS sharpness spectrum \(\log|1-\eta \alpha|\) (\textbf{right} in each pair). Together, the panels show how optimizer and hyperparameter choices affect both the Hessian spectrum and its induced RDS sharpness spectrum. These examples show that the SLQ histogram and the corresponding kernel-smoothed estimate provide consistent views of the underlying spectrum across a range of training configurations.\vspace{-4mm}}
            \label{fig:RDSSpectra}
\end{figure*}

\end{document}